\documentclass[11pt]{article}
\usepackage[letterpaper, left=1in, right=1in, top=1in,bottom=1in]{geometry}
\usepackage{amsfonts}       
\usepackage{nicefrac}       
\usepackage{bbm}
\usepackage{bm}
\usepackage{tikz}
\usetikzlibrary{positioning,chains,fit,shapes,calc,arrows}
\usetikzlibrary{arrows.meta}
\usepackage{graphicx}
\usepackage{pgf}
\usepackage{mathtools}
\usepackage{mathrsfs}
\usepackage{subfigure}
\usepackage{amsmath,amsthm,amssymb}
\usepackage{comment}
\usepackage{xfrac}
\usepackage{accents}
\usepackage{cancel}
\usepackage[normalem]{ulem}

\usepackage{algorithm}
\usepackage{algpseudocode}

\usepackage[round]{natbib}

\usepackage[pagebackref=true]{hyperref} \PassOptionsToPackage{pagebackref=true}{hyperref}

\DeclareMathAlphabet{\pazocal}{OMS}{zplm}{m}{n}

\usepackage{auxiliary}

\usepackage{float}

\usepackage{url}      

\usepackage[pagebackref=true]{hyperref} \PassOptionsToPackage{pagebackref=true}{hyperref}   
\usepackage[capitalize]{cleveref}

\NewDocumentEnvironment{myproof}{o}
  {\IfNoValueTF{#1}{\paragraph{{{\small Proof.}}}} {\paragraph{{{\small #1.}} }} }
  {\hfill$\qed$}

\usepackage{color}              
\usepackage[suppress]{color-edits}
\addauthor{grace}{blue} 

\date{}

\title{
A Hierarchy of Policy Learning Problems
}
\author{
Hamsa Bastani\thanks{Wharton School, University of Pennsylvania, \texttt{hamsab@wharton.upenn.edu}}
\and 
Osbert Bastani\thanks{University of Pennsylvania, \texttt{obastani@seas.upenn.edu}}
\and 
Shihan Chen\thanks{Graduate Group in Applied Mathematics and Computational Science, University of
Pennsylvania, \texttt{gracsh@sas.upenn.edu}}
}

\begin{document}

\maketitle

\thispagestyle{empty}

\begin{abstract}
Policy learning has received substantial recent attention with the goal of learning policies from observational data for either automated or human-in-the-loop decision-making. A majority of work in this space has focused on developing algorithms for computing policies that minimize regret compared to the optimal policy. However, in many practical settings, there is simply insufficient data to obtain low regret. As a consequence, recent work has shifted attention to alternative objectives---most notably, studying whether it is possible to learn an \emph{improving policy} that  statistically significantly outperforms baseline policies (e.g., choose constant or random actions). We argue that there is substantial merit in studying a still broader range of policy learning problems---when there is insufficient data to even learn an improving policy, there may still be useful questions that can be answered based on the available data. To this end, we provide a mathematical framework for studying the relationships between different policy learning problems. We formalize three policy learning problems within our framework: beyond the optimal policy problem and the improved policy problem, we also propose the policy existence problem, which aims to determine whether an improving policy exists. Within our framework, we can straightforwardly show that the policy existence problem reduces to the improving policy problem, which in turn reduces to the optimal policy problem; these reductions prove that each problem is at least as easy as the next one (in terms of sample complexity). A key question remains: is this hardness strict? We provide partial answers to this question. First, the gap between the optimal policy and improving policy problems is straightforwardly strict. For the improving policy and policy existence problems, we prove that a sublinear polynomial gap exists under certain natural conditions on algorithms for the improving policy learning problem. Thus, we may be able to answer questions about the existence of an improving policy even when we cannot find one. These results highlight the value in studying a broader range of policy learning problems.
\end{abstract}

\newpage
\setcounter{page}{1}

\section{Introduction}
\label{sec:intro}

At its core, the goal of data-driven decision-making is to learn a \emph{policy} for choosing actions based on relevant features of the task at hand---e.g., to target text messages for encouraging patients to get vaccinated~\citep{milkman2022680,shchetkina2024heterogeneity}, or to target COVID-19 tests to identify the largest possible number of infected travelers~\citep{bastani2021efficient}. We focus on the offline setting, where we are given an observational dataset and our goal is to estimate a policy based on this dataset; for simplicity, we assume that the observational policy is known. Much of the literature has focused on learning the \emph{optimal policy} that maximizes a given objective function~\citep{dudik2011doublyrobustpolicyevaluation}. These algorithms typically come with regret guarantees, showing that with sufficiently large training data, the estimated policy converges to an optimal one.

However, in many practical scenarios, the observational data is too limited to accurately estimate the optimal policy. Then, the decision-maker's goal is often more modest---e.g., to obtain an \emph{improving policy} that outperforms over reasonable baselines. Typically, they additionally want these improvements to be statistically significant. Recent work has proposed algorithms for learning improving policies~\citep{banerjee2025selecting,chernozhukov2025policy,bastani2025beatingwinnerscurseinferenceaware}.

This discussion motivates a broader question---given a limited observational dataset, what is the space of problems that can be reliably solved? We argue that the problems of learning optimal policies and learning improving policies are just two examples of a broader space of problems that may be of interest to a decision-maker. Beyond even computing a usable policy, the decision-maker may want to answer questions about whether an improving policy exists, or whether the treatment effects are heterogeneous. Answers to these questions can be used to guide decisions about whether to collect a larger observational dataset, collect additional features from existing individuals, run a randomized controlled trial to gather higher-quality data, or to simply forgo data-driven decision-making.

In this paper, we propose a mathematical framework for reasoning about these questions. Our framework uses a very simple model of the decision-making problem; instead, our goal is to answer questions about \emph{relationships} between different algorithms across different policy learning problems. At a high level, our framework is built around the statistical notion of characterizing the false negative rate (FNR) (i.e., type II error rate) under a constraint on the false positive rate (FPR) (i.e., type I error rate). Then, the sample complexity of an algorithm is the number of samples at which it achieves a desired FNR under a given FPR constraint. This formulation makes it easy to provide a unified definition of correctness for algorithms with different output spaces, ranging from policies (for the optimal policy problem) to policies with abstention (for the improving policy problem) to binary values (for the policy existence problem). This unified notion of correctness makes it easy to define reductions between problems, enabling us to formalize simple notions such as the idea that the optimal policy problem is at least as hard as the improving policy problem, which is in turn at least as hard as the policy existence problem.

A key remaining question is establishing gaps in the opposite direction---i.e., are there any instances where one problem is strictly harder than another? It is straightforward to see that learning an optimal policy is strictly harder than learning an improving policy---if the true treatment effects for some individuals are very small, then a large number of samples are required to determine the optimal treatment for those individuals, yet an improving policy does not need to correctly treat these individuals since the decision in inconsequential.

However, the gap between the improving policy problem and the policy existence problem is much more nuanced. Our main technical contribution is a partial answer to this question. We show that there if we restrict to algorithms for the improving policy problem that satisfy a natural monotonicity condition, then there is a substantial gap between the two problems. More broadly, our work raises many interesting questions about the policy learning for future work, including both providing sharper analyses of the three policy learning problems we study, as well as the study of additional policy learning problems such as identifying heterogeneous effects.

\subsection{Related Work}

There has been substantial recent interest in \textit{offline policy learning}, where data-driven decision-making policies must be learned from observational data collected under a fixed behavioral policy. A major line of research in this area focuses on developing reliable policy evaluation methods and using them to improve policy learning algorithms. 
This work builds on the foundational work of \citet{rosenbaum1983central}, which introduce the standard potential outcomes framework for evaluating treatment effect estimation. \citet{dudik2011doublyrobustpolicyevaluation} categorizes policy evaluation strategies into the \textit{direct method}, which  evaluates candidate policies using model-based predictions of counterfactual rewards, and the \emph{in-direct method}, which uses inverse propensity weighting (IPW). In practice, IPW is an effective evaluation strategy since it is unbiased as long as the propensity weights for the observational dataset are known, and a long line of algorithms have been developed based on these techniques~\cite{qian2011performance,zhang2012estimating,zhao2014doubly,swaminathan2015batch,zhou2023offline, zhan2023policy}. Recent works among these include \citet{kitagawa2018treated}, who proposed an algorithm with matching upper and lower regret bounds of $O(1/\sqrt{n})$ and $\Omega(1/\sqrt{n})$ respectively (where $n$ is the sample size), and \citet{athey2020policylearningobservationaldata}, who extended these results to settings where the propensities must be estimated from data (they make the unconfoundedness assumption, so propensity weights can be accurately estimated given a sufficiently large training dataset). These approaches all focus on the optimal policy problem, where the goal is to learn a policy that minimizes regret compared to the optimal policy. In contrast, we study relationships between different policy learning problems. In terms of evaluation methodology, we adopt the direct method---our model is well-specified, so we can obtain confidence intervals for estimates of counterfactual outcomes that are valid at finite sample. An interesting direction would be extending our approach to IPW evaluation.


Recently, several studies have shifted attention away from strict optimality and toward weaker, more robust objectives. Most notably, instead of asking whether a learned policy is optimal, they study whether a policy can be shown to improve upon a baseline policy with high confidence. \citet{banerjee2025selecting} proposes an algorithm that ``clusters'' policies in the policy class to improve the likelihood of obtaining statistically significant policy improvement. More recently, \citet{bastani2025beatingwinnerscurseinferenceaware,chernozhukov2025policy} both propose policy learning algorithms that aim to obtain statistically significant policy improvement by characterizing the Pareto frontier of regret and probability of outperforming the baseline (in their case, the observational policy). Finally, \cite{shchetkina2024heterogeneity} study when treatment-effect heterogeneity is \emph{actionable}, which is essentially our policy existence problem; however,
they focus on identifying conditions under which personalization can outperform the best uniform policy, and do not provide an algorithm for actually solving this problem. These approaches highlight that under observational data and finite samples, learning an optimal policy can be fundamentally harder than identifying a provably better one. As a consequence, a decision-maker may be able to establish that there exists an improving policy even when they cannot identify a specific improving policy. This result illustrates how thinking about a broader range of policy learning problems can have tangible benefits by providing useful answers when obtaining an optimal or improving policy is infeasible.

Although recent work has begun to address problems beyond optimal policy learning, there lacks a unified framework for systematically comparing different policy learning problems. Our work aims to fill this gap by providing a mathematical foundation for establishing reductions between different policy learning problems.

\subsection{Contributions}

Our contributions are two-fold. First, we provide a mathematical formalization of policy learning problems that enables us to rigorously analyze sample complexity gaps between these problems. The key challenge is that the different policy learning problems have different output spaces, making it hard to directly compare one problem to another. We formalize correctness in a way that circumvents these problems. At a high level, all problems take as input an observational dataset $Z$ sampled according to an underlying (unknown) \emph{instance} $P\in\mathcal{P}$ representing the observational data distribution. Then, a policy learning problem is characterized by an arbitrary output space $\mathcal{O}$ together with a \emph{validator} $\mathcal{V}:\mathcal{O}\to\{0,1\}$ indicating whether an algorithm's output $\mathcal{A}(Z)\in\mathcal{O}$ is correct for the underlying instance $P$.

However, defining correctness in a uniform way across different problems with different output spaces is tricky. Na\"{i}vely, we might want to ask $\mathcal{A}$ to be accurate---i.e., 
\begin{align*}
\hat{\mathbb{P}}_{P,n}[\mathcal{V}(\mathcal{A}(Z))=1]\ge1-\delta(n),
\end{align*}
where $\hat{\mathbb{P}}_{P,n}$ is the probability measure of drawing $n$ samples according to the observational data distribution represented by $P$, and the error bound $\delta(n)$ is a function of the number of samples $n$. Then, we could invert $n$ to obtain a sample complexity for $\delta$. In this approach, an $\mathcal{A}$ might output the correct answer by chance. For instance, an algorithm $\mathcal{A}_{\text{imp}}$ for solving the improving policy problem for an instance $P$ might obtain the right answer from very few samples by chance, without realizing that its answer is correct; in contrast, correctly answering the existence problem for $P$ may require a large number of samples. Thus, for instance $P$, the improving policy problem might is easier than the existence problem according to this notion of correctness.

To avoid this issue, we want a notion of algorithms that only provide an output when they are confident. To this end, we modify algorithms to take as input a confidence level, and extend their output space to include a special symbol $??$ meaning that they abstain from answering. Then, we borrow the idea from statistics that \emph{valid} algorithms should only provide a positive answer when they are confident. Unlike hypothesis tests, where the output is always binary, we need to handle a variety of outputs ranging from binary to entire policies. To this end, we consider algorithms $\mathcal{A}(Z,\delta)\in\mathcal{O}\cup\{??\}$; i.e., they accept as input an observational dataset $Z$ and a confidence level $\delta\in\mathbb{R}_{>0}$ and either produce a desired output $o\in\mathcal{O}$ or abstain. Now, we define an algorithm $\mathcal{A}$ to be \emph{valid} if for all instances $P\in\mathcal{P}$ and for all $\delta\in\mathbb{R}_{>0}$, we have
\begin{align*}
\hat{\mathbb{P}}_{P,n}[\mathcal{A}(Z,\delta)=\;??\vee\mathcal{V}(\mathcal{A}(Z,\delta))=1]\ge1-\delta.
\end{align*}
In addition, we define the false negative rate (FNR) of $\mathcal{A}$ to be the probability such that the algorithm does not abstain---i.e., 
\begin{align*}
\hat{\mathbb{P}}_{P,n}[\mathcal{A}(Z,\delta)\neq\;??]\ge1-\delta.
\end{align*}
For simplicity, we use a single $\delta$ for both validity and FNR. With this general notion of correctness in place, we can define a reduction from one problem $(\mathcal{O}',\mathcal{V}')$ to another problem $(\mathcal{O},\mathcal{V})$ to be a mapping $\mathcal{R}:\mathcal{O}\to\mathcal{O}'$ such that $\mathcal{V}(o)=1\Rightarrow\mathcal{V}'(\mathcal{R}(o))=1$
for all $o\in\mathcal{O}$. In other words, if $o$ is correct for the second problem, then $\mathcal{R}(o)$ is correct for the first problem. It is easy to check that given a valid algorithm $\mathcal{A}$ for the second problem, then the algorithm $\mathcal{A}'(Z,\delta)=\mathcal{R}(\mathcal{A}(Z,\delta))$ is a valid algorithm for the first problem. This notion of reduction straightforwardly translates upper and lower bounds on sample complexities between the two problems.

Next, we establish straightforward reductions from the policy existence problem to the improving policy problem, and from the improving policy problem to the optimal policy problem. The more technically interesting question is regarding the opposite relationship---can we show that these gaps are strict? The optimal policy problem is easily shown to have strictly higher sample complexity than the improving policy problem. Our main technical result establishes a partial gap between the improving policy problem and the policy existence problem. This result demonstrates the value in thinking about a broader range of policy learning problem beyond just the optimal policy and improving policy problems---in certain scenarios, we can provide useful answers even when producing an optimal or improving policy is impossible.

\section{Problem Formulation}

In this section, we formalize (1) \emph{policy learning instances}, each of which specifies a distribution over units within the potential outcomes framework (Section~\ref{sec:policylearninginstance}), and (2) \emph{policy learning problems} (Section~\ref{sec:policylearningproblem}), each of which asks to compute some kind of output for a given policy learning instance (e.g., compute the best policy). Our formalism essentially specializes the standard definition of sample complexity from learning theory~\citep{valiant1984theory} to our policy learning setting. While it is abstract, it enables us to define the notion of reductions between policy learning problems (Section~\ref{sec:policylearningreduction}), which is necessary for us to formalize the notion that one policy learning problem is ``harder'' than another. Finally, we formalize the problems described in Section~\ref{sec:intro} within our framework, and prove hierarchical relations demonstrating that some problems are at least as hard as others. We note that these reductions are straightforward; the key question is whether these relationships are \emph{strict}---i.e., there do not exist any reductions in the opposite directions. The remainder of our paper is dedicated to proving various strictness results about our hierarchical relationships.

\subsection{Policy Learning Instances}
\label{sec:policylearninginstance}

A policy learning instance captures the information necessary to define a distribution in the potential outcomes framework. We consider a finite set of unit types $x\in\mathcal{X}=[k]$, a binary treatment $t\in\mathcal{T}=\{0,1\}$, and an outcome $y\in\mathcal{Y}=\mathbb{R}$. A policy learning instance $P=(\vec{\mu},\vec{\sigma})$ (where $\vec{\mu},\vec{\sigma}\in\mathbb{R}^{2\times k}$) defines a random variable $(X,Y_0,Y_1)\sim\mathbb{P}_P$ over $\mathcal{X}\times\mathcal{Y}\times\mathcal{Y}$, where the distribution over unit types $\mathbb{P}_P[X=x]=k^{-1}$ is uniform, and the distribution of the potential outcome under treatment $t\in\mathcal{T}$ conditioned on the unit having type $x\in\mathcal{X}$ is $\mathbb{P}_P[Y_t=y\mid X=x]=\mathcal{N}(y;\mu_{x,t},\sigma_{x,t}^2)$. We assume the uniform distribution over types to simplify our analysis, and it is not essential. We denote the space of all policy learning instances for a given number of types $k$ by $\mathcal{P}_k$, and denote the space of all instances by $\mathcal{P}=\bigcup_{k=1}^\infty\mathcal{P}_k$. Formalizing the space of all instances in this way enables us provide a general definition for an algorithm accepting policy learning instances as input.

\subsection{Treatment Assignment Policies}

The goal of policy learning is to learn a policy for assigning treatments to units depending on their type. Given an instance $P$, a \emph{policy} is a mapping $\pi:\mathcal{X}\to[0,1]$. We denote the space of policies over $k$ types by $\Pi_k$; note that a policy can be encoded as a $k$-dimensional vector, so $\Pi_k=[0,1]^k$. We let $\Pi=\bigcup_{k=1}^\infty\Pi_k$ denote the space of all policies. The \emph{value} of a policy $\pi$ for instance $P$ is $J(\pi;P)=k^{-1}\sum_{x\in\mathcal{X}}\mu_{x,\pi(x)}$. Now, a policy $\pi$ is \emph{optimal} for instance $P$ if $J(\pi)=\sup_{\pi'\in\Pi_k}J(\pi')$; we denote the subspace of optimal policies by $\Pi_P^{\text{opt}}\subseteq\Pi$. For instance, the policy $\pi^*_P(x)=\mathbf{1}(\tau_x\ge0)$ is optimal, where $\tau_x=\mu_{1,x}-\mu_{0,x}$ is the \emph{treatment effect}.

Next, we are interested in whether we can obtain policy improvements compared to constant treatment assignments rather than trying to obtain the optimal policy. Critically, we allow the algorithm to focus on a subset of unit types rather than needing to assign treatments for every type. Define $\pi_t(x)=t$ (for $t\in\mathcal{T}$) to be the constant policy that always outputs $t$; we denote the set of constant policies by $\Pi^{\text{const}}=\{\pi_t\mid t\in\mathcal{T}\}$. We consider \emph{partial policies} $\pi:\mathcal{X}\to\mathcal{T}\cup\{-1\}$, where $\pi(x)=-1$ means that $\pi$ is abstaining from assigning a treatment for type $x$; we let $\Pi^0$ denote the set of partial policies. Note that we can straightforwardly convert any policy $\pi\in\Pi$ to a partial policy $\pi\in\Pi^0$; we implicitly make this conversion and write $\Pi\subseteq\Pi^0$. Given a \emph{baseline treatment} $t\in\mathcal{T}$, we define the \emph{completion} $C_{\pi,t}\in\Pi$ of $\pi$ for $t$ by
\begin{align*}
C_{\pi,t}(x)=\begin{cases}
\pi(x)&\text{if }\pi(x)\neq-1 \\
t&\text{otherwise}.
\end{cases}
\end{align*}
In other words, $\bar{\pi}$ assigns a baseline treatment $t$ when $\pi$ abstains.
\begin{definition}
\rm
$\pi\in\Pi^0$ is \emph{improving} (denoted $\Pi_P^{\text{imp}}\subseteq\Pi^0$) if $J(C_{\pi,t})>J(\pi_t)$ for all $t\in\mathcal{T}$.
\end{definition}
Intuitively, the idea is that we want $\pi$ to outperform the constant policy $\pi_t$, but where we use $\pi_t$ instead of $\pi$ if $\pi$ abstains. Thus, we are always better off using $\pi$ than using a constant policy---if the baseline is to use constant policy $\pi_t$, then we are better off using $C_{\pi,t}$.


\subsection{Policy Learning Problems}
\label{sec:policylearningproblem}

Next, we turn to defining a general notion of a policy learning problem. The most common problem studied in the literature is the problem of computing an optimal policy, but recent work has also studied the problem of computing a policy that exhibits confident improvement~\citep{chernozhukov2025policy,bastani2025beatingwinnerscurseinferenceaware}; in addition, we are also interested in the problem of whether there exists a policy that is better than the baseline of assigning the same treatment to all units, and whether there exists heterogeneity in potential outcomes. Formalizing a general notion of policy learning problems enables us to formalize the notion that one problem is ``harder'' than another.

We assume that all policy learning problems have access to the same information as inputs. First, we assume they are given the standard deviations $\vec{\sigma}$, so only the means $\vec{\mu}$ need to be estimated. Second, we assume they are given a dataset of observations that can be used to estimate $\vec{\mu}$. Specifically, given an instance $P\in\mathcal{P}_k$ and a hyperparameter $n\in\mathbb{N}$, we consider a balanced sample of observations $Z=\{(x_i,t_i,Y_i)\}_{i=1}^{2kn}$, where each type-treatment pair $(x,t)\in\mathcal{X}\times\mathcal{T}$ appears exactly $n$ times, and $Y_i\sim\mathcal{N}(\mu_{x_i,t_i},\sigma_{x_i,t_i}^2)$ are independent samples. We consider a balanced sample to avoid the need to reason about random sample sizes for different type-treatment pairs. We denote the space of all possible samples $Z$ by $\hat{\mathcal{P}}=\bigcup_{k=1}^\infty\bigcup_{n=1}^\infty\hat{\mathcal{P}}_{k,n}$, where $\hat{\mathcal{P}}_{k,n}=(\mathcal{X}\times\mathcal{T}\times\mathcal{Y})^{2kn}$. Furthermore, we let $\hat{\mathbb{P}}_{P,n}$ denote the distribution of random samples $Z\in\hat{\mathcal{P}}_{k,n}$ for instance $P\in\mathcal{P}_k$. Note that by definition, only the outcomes $Y_i$ are random variables.

Now, a policy learning problem is defined by (1) a space $\mathcal{O}$ of desired outputs, and (2) a \emph{validator} $\mathcal{V}$ that checks whether an algorithm designed to solve that problem produces the correct output. For example, for the problem of learning the optimal policy, $\mathcal{O}$ would be the space of policies, and $\mathcal{V}$ would check whether the algorithm outputs the optimal policy for a given instance. The space $\mathcal{O}$ is specified by the policy learning problem. An important aspect of our formulation is that algorithms are allowed to \emph{abstain}---i.e., they should only produce an output if they are confident; otherwise, they return a special symbol $??$. In particular, we consider policy learning algorithms of the form $\mathcal{A}:\mathbb{R}^{k\times2}\times\hat{\mathcal{P}}\times\mathbb{R}_{>0}\to\mathcal{O}\cup\{??\}$; i.e., the algorithm takes as in put the standard deviations $\vec{\sigma}$, a sample $Z$, and an error bound $\delta\in\mathbb{R}_{>0}$, and produces a desired output $\mathcal{A}(\vec{\sigma},Z,\delta)\in\mathcal{O}\cup\{??\}$.

Then, a validator is a function   $\mathcal{V}:\mathcal{P}\times\mathcal{O}\to\mathbb{B}$ (with $\mathbb{B}=\{0,1\}$) such that $\mathcal{V}(P,o)$ indicates whether output $o$ is correct for instance $P$. Now, we formalize our policy learning problems:
\begin{itemize}
\item \textbf{Optimal policy problem:} The goal is to output an optimal policy; the output space is $\mathcal{O}_{\text{opt}}=\Pi$ and the validator is $\mathcal{V}_{\text{opt}}(\pi,P)=\mathbf{1}(\pi\in\Pi_P^{\text{opt}})$.
\item \textbf{Improving policy problem:} 
The goal is to output an improving (partial) policy, or a special symbol $0$ if none exists (i.e., $\Pi_P^{\text{imp}}=\varnothing$).\footnote{Unlike $??$, which indicates that the algorithm is not confident in its output, $0$ indicates that the algorithm is confident that no improving policy exists.}
Specifically, $\mathcal{O}_{\text{imp}}=\Pi^0\cup\{0\}$ and
\begin{align*}
\mathcal{V}_{\text{imp}}(\pi,P)=
\begin{cases}
\mathbf{1}(\pi=0)&\text{if }\Pi_P^{\text{imp}}=\varnothing \\
\mathbf{1}(\pi\in\Pi_P^{\text{imp}})&\text{otherwise}.
\end{cases}
\end{align*}
\item \textbf{Policy existence problem:} The goal is to determine if an improving policy exists; specifically, $\mathcal{O}_{\text{exist}}=\mathbb{B}$ and $\mathcal{V}_{\text{exist}}(o,P)=\mathbf{1}(o=\mathbf{1}(\Pi_P^{\text{imp}}\neq\varnothing))$.
\end{itemize}

\subsection{Sample Complexity}

Given a policy learning problem defined by a validator $\mathcal{V}$, our goal is to characterize how many samples $n$ are required for an algorithm $\mathcal{A}$ to achieve a desired false negative rate (FNR) at a given false positive rate (FPR) $\delta$. First, given an algorithm $\mathcal{A}$ and a validator $\mathcal{V}$, the FPR of $\mathcal{A}$ is
\begin{align*}
\text{FPR}_n(\mathcal{A};\mathcal{V},P,\delta)
&=\hat{\mathbb{P}}_{P,n}[\mathcal{A}(\vec{\sigma},Z,\delta)\neq\;??\wedge\mathcal{V}(\mathcal{A}(\vec{\sigma},Z,\delta),P)=0].
\end{align*}
That is, the FPR is the probability that algorithm $\mathcal{A}$ does not abstain and fails the validator.
\begin{definition}
\rm
$\mathcal{A}$ is \emph{valid} if $\text{FPR}_n(\mathcal{A};\mathcal{V},P,\delta)\le\delta$ for all $n\in\mathbb{N}$, $P\in\mathcal{P}$, and $\delta\in\mathbb{R}_{>0}$.
\end{definition}
We restrict to valid algorithms. Next, given $\delta\in\mathbb{R}_{>0}$ and problem $P$, the FNR of $\mathcal{A}$ for $P$ is
\begin{align*}
\text{FNR}_n(\mathcal{A};\mathcal{V},P,\delta)=\hat{\mathbb{P}}_{P,n}[\mathcal{A}(\vec{\sigma},Z,\delta)=\;??].
\end{align*}
In other words, the FNR is the probability that $\mathcal{A}$ abstains. Now, given $\delta\in\mathbb{R}_{>0}$, the \emph{sample complexity} of $\mathcal{A}$ for $P$ is
\begin{align*}
n(\mathcal{A};\mathcal{V},P,\delta)=\min\{n\in\mathbb{N}\mid\text{FNR}_n(\mathcal{A};\mathcal{V},P,\delta)\le\delta\}.
\end{align*}
In other words, at FPR $\delta$, $\mathcal{A}$ also achieves an FNR of $\delta$. In addition, given a subset of instances $\mathcal{Q}\subseteq\mathcal{P}$, the sample complexity of $\mathcal{A}$ across all instances $P\in\mathcal{Q}$ is
\begin{align*}
n(\mathcal{A};\mathcal{V},\mathcal{Q},\delta)=\max\{n(\mathcal{A};\mathcal{V},P,\delta)\mid P\in\mathcal{Q}\}.
\end{align*}
In both cases, the sample complexity may be $\infty$. Finally, we can define the sample complexity of a problem in terms of the sample complexity across all possible algorithms.
\begin{definition}
\rm
A problem $\mathcal{V}$ has \emph{sample complexity upper bound} $n(\mathcal{Q};\delta)$ on instances $\mathcal{Q}$ if there exists an algorithm $\mathcal{A}$ such that $n(\mathcal{A};\mathcal{V},\mathcal{Q},\delta)\le n(\mathcal{Q};\delta)$ for all $\delta\in\mathbb{R}_{>0}$, and it has \emph{sample complexity lower bound} $n(\mathcal{Q};\delta)$ if for any $\mathcal{A}$, $n(\mathcal{A};\mathcal{V},\mathcal{Q},\delta)\ge n(\mathcal{Q};\delta)$ for all $\delta\in\mathbb{R}_{>0}$.
\end{definition}

\subsection{Problem Hierarchy}
\label{sec:policylearningreduction}

Now that we have a notion of sample complexity, we define one problem $\mathcal{V}$ to be harder than another one $\mathcal{V}'$ on instances $\mathcal{Q}$ if there is exists $n(\mathcal{Q};\delta)$ that is a lower bound for $\mathcal{V}$ and an upper bound for $\mathcal{V}'$. We can establish that one problem is harder than another via reductions.
\begin{definition}
\rm
Given policy learning problems $\mathcal{V}:\mathcal{O}\times\mathcal{P}\to\mathbb{B}$ and $\mathcal{V}':\mathcal{O}'\times\mathcal{P}\to\mathbb{B}$,
a \emph{reduction} from $\mathcal{V}'$ to $\mathcal{V}$
is a function $\mathcal{R}:\mathcal{O}\to\mathcal{O}'$ such that
\begin{align}
\label{eqn:reduction}
\mathcal{V}(o,P)=1\Rightarrow\mathcal{V}'(\mathcal{R}(o),P)=1
\qquad(\forall o\in\mathcal{O},P\in\mathcal{P}).
\end{align}
\end{definition}
In particular, \eqref{eqn:reduction} says that if $o=\mathcal{A}(\vec{\sigma},Z,\delta)$ solves $\mathcal{V}$, then $\mathcal{R}(o)$ solves $\mathcal{V}'$. Thus, if we have an algorithm $\mathcal{A}$ for $\mathcal{V}$, then the algorithm $\mathcal{A}'(\vec{\sigma},Z,\delta)=\mathcal{R}(\mathcal{A}(\vec{\sigma},Z,\delta))$ (where we define $\mathcal{R}(??)=\;??$) solves $\mathcal{V}'$. Intuitively, $\mathcal{V}$ is at least as hard as $\mathcal{V}'$; formally, reductions translate sample complexity bounds between $\mathcal{V}$ and $\mathcal{V}'$ in the following ways.
\begin{proposition}
\label{prop:reductionbounds}
If there is a reduction from $\mathcal{V}'$ to $\mathcal{V}$, then any sample complexity upper bound for $\mathcal{V}$ is a sample complexity upper bound for $\mathcal{V}'$, and any sample complexity lower bound for $\mathcal{V}'$ is a sample complexity lower bound for $\mathcal{V}$.
\end{proposition}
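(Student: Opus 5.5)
The plan is to transfer algorithms through the reduction $\mathcal{R}$ and compare FPR and FNR pointwise. Given any algorithm $\mathcal{A}$ for $\mathcal{V}$, define $\mathcal{A}'(\vec{\sigma},Z,\delta)=\mathcal{R}(\mathcal{A}(\vec{\sigma},Z,\delta))$ with the convention $\mathcal{R}(??)=\;??$. We then verify two facts for every $n$, $P\in\mathcal{P}$ and $\delta>0$:
\begin{align*}
\text{FPR}_n(\mathcal{A}';\mathcal{V}',P,\delta)\le\text{FPR}_n(\mathcal{A};\mathcal{V},P,\delta),\qquad
\text{FNR}_n(\mathcal{A}';\mathcal{V}',P,\delta)=\text{FNR}_n(\mathcal{A};\mathcal{V},P,\delta).
\end{align*}

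For the FPR inequality, we argue by event inclusion on the sample space. Suppose $\mathcal{A}'$ does not abstain and fails $\mathcal{V}'$. Since $\mathcal{R}(o)\neq\;??$ for $o\in\mathcal{O}$, the output $o=\mathcal{A}(\vec{\sigma},Z,\delta)$ is not $??$. Moreover $\mathcal{V}'(\mathcal{R}(o),P)=0$, so by the contrapositive of \eqref{eqn:reduction} we get $\mathcal{V}(o,P)=0$. Hence the false-positive event of $\mathcal{A}'$ is contained in that of $\mathcal{A}$. For the FNR equality, $\mathcal{A}'$ abstains exactly when $\mathcal{A}$ does. This requires that $\mathcal{R}$ maps $\mathcal{O}$ into $\mathcal{O}'$, which excludes $??$; I would state this explicitly, since it is the one small point needing care. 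Two consequences follow: validity of $\mathcal{A}$ implies validity of $\mathcal{A}'$, and $n(\mathcal{A}';\mathcal{V}',P,\delta)=n(\mathcal{A};\mathcal{V},P,\delta)$ for every $P$. Taking the max over $\mathcal{Q}$ gives $n(\mathcal{A}';\mathcal{V}',\mathcal{Q},\delta)=n(\mathcal{A};\mathcal{V},\mathcal{Q},\delta)$.

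\textbf{Upper bounds.} Suppose $\mathcal{V}$ has upper bound $n(\mathcal{Q};\delta)$, witnessed by a valid $\mathcal{A}$. Then the valid algorithm $\mathcal{A}'$ witnesses the same bound for $\mathcal{V}'$.

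\textbf{Lower bounds.} Suppose $n(\mathcal{Q};\delta)$ is a lower bound for $\mathcal{V}'$, and let $\mathcal{A}$ be any valid algorithm for $\mathcal{V}$. Then $\mathcal{A}'$ is a valid algorithm for $\mathcal{V}'$, so
\begin{align*}
n(\mathcal{A};\mathcal{V},\mathcal{Q},\delta)=n(\mathcal{A}';\mathcal{V}',\mathcal{Q},\delta)\ge n(\mathcal{Q};\delta).
\end{align*}
The quantifier over algorithms in the definition is implicitly restricted to valid ones; without this restriction, lower bounds would be trivial because an invalid algorithm that never abstains has sample complexity $1$. I would note this restriction explicitly, as the paper says "we restrict to valid algorithms."

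I expect no real obstacle. The only subtle step is the bookkeeping around $??$ in the event inclusion, namely ensuring that $\mathcal{R}$ never introduces or removes abstentions.
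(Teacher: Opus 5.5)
Your proposal is correct and follows the argument the paper sketches; the paper omits a formal proof. That argument composes $\mathcal{A}$ with $\mathcal{R}$ using $\mathcal{R}(??)=\;??$, gets validity from the event inclusion given by \eqref{eqn:reduction}, notes that abstention (hence FNR and sample complexity) is unchanged, and then transfers the upper bound through the witnessing algorithm and the lower bound through an arbitrary valid algorithm for $\mathcal{V}$. Your explicit remarks that $\mathcal{R}$ never outputs $??$ and that the lower-bound quantifier ranges over valid algorithms supply exactly the bookkeeping the paper leaves implicit.
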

Finally, we have the following reductions for our problems of interest.
\begin{proposition}
\label{prop:basicreductions}
The function
\begin{align*}
\mathcal{R}_{\text{imp}\to\text{opt}}(\pi)=\begin{cases}
\pi&\text{if }\pi\not\in\Pi^{\text{const}} \\
0&\text{otherwise}.
\end{cases}
\end{align*}
is a reduction from $\mathcal{V}_{\text{imp}}$ to $\mathcal{V}_{\text{opt}}$, and $\mathcal{R}_{\text{exist}\to\text{imp}}(\pi)=\mathbf{1}(\pi\neq0)$ is a reduction from $\mathcal{V}_{\text{exist}}$ to $\mathcal{V}_{\text{imp}}$.
\end{proposition}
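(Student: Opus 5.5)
The plan is to verify the implication \eqref{eqn:reduction} for each map directly, splitting on whether $\Pi_P^{\text{imp}}$ is empty. Proposition~\ref{prop:reductionbounds} is not needed here; it only becomes relevant once these reductions are in hand.

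\emph{Existence to improving.} Fix $P$ and $o\in\mathcal{O}_{\text{imp}}$ with $\mathcal{V}_{\text{imp}}(o,P)=1$.
\begin{itemize}
\item If $\Pi_P^{\text{imp}}=\varnothing$, the validator forces $o=0$. Then $\mathcal{R}_{\text{exist}\to\text{imp}}(o)=0=\mathbf{1}(\Pi_P^{\text{imp}}\neq\varnothing)$, which is correct.
\item Otherwise $o\in\Pi_P^{\text{imp}}\subseteq\Pi^0$, so $o\neq 0$. Then $\mathcal{R}_{\text{exist}\to\text{imp}}(o)=1=\mathbf{1}(\Pi_P^{\text{imp}}\neq\varnothing)$, which is again correct.
\end{itemize}
This case is immediate.

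\emph{Improving to optimal.} Fix $\pi\in\Pi_P^{\text{opt}}$. I would treat $\pi$ as a deterministic policy, since $J$ is only defined through $\mu_{x,\pi(x)}$. If $\pi$ is fractional I would read $J$ as linear in $\pi(x)$; then its set of maximizers is a face of the cube, so the argument is unchanged. The key observation is that optimality gives $J(\pi)=\sup_{\pi'}J(\pi')\ge J(\pi_t)$ for both $t\in\mathcal{T}$. Since every completion $C_{\pi',t}$ is itself a policy in $\Pi$, any improving $\pi'$ would satisfy $J(\pi_t)<J(C_{\pi',t})\le \sup J$. So $\Pi_P^{\text{imp}}\neq\varnothing$ whenever the optimum strictly exceeds both constant values. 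Conversely, $\Pi_P^{\text{imp}}=\varnothing$ whenever some constant policy attains the optimum.
\begin{itemize}
\item \textbf{Constant optimum.} If $\pi=\pi_t$ is constant and optimal, then $\pi_t$ attains $\sup J$, so no policy can beat it. Hence $\Pi_P^{\text{imp}}=\varnothing$, and the output $0$ is validated.
\item \textbf{Non-constant optimum.} If $\pi$ is non-constant, it has no abstentions, so $C_{\pi,t}=\pi$ for both $t$. I need $J(\pi)>J(\pi_t)$ for $t=0,1$. Write $J(\pi)-J(\pi_0)=k^{-1}\sum_{x:\pi(x)=1}\tau_x$ and $J(\pi)-J(\pi_1)=-k^{-1}\sum_{x:\pi(x)=0}\tau_x$. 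Optimality forces $\tau_x\ge0$ wherever $\pi(x)=1$ and $\tau_x\le 0$ wherever $\pi(x)=0$, and both sets are nonempty.
\end{itemize}

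\emph{Main obstacle.} Strictness of the inequalities in the non-constant case is the step I expect to fail without extra care. If $\tau_x=0$ on every type where $\pi$ treats, then $J(\pi)=J(\pi_0)=\sup J$. In that case $\Pi_P^{\text{imp}}=\varnothing$, yet the reduction outputs $\pi\neq0$, so \eqref{eqn:reduction} is violated.

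To handle this I would first try to close the gap by a convention. One option is to restrict $\mathcal{P}$ to instances with $\tau_x\neq0$ for all $x$, a measure-zero exclusion under which optimal policies are unique and the inequalities become strict. An alternative is to modify $\mathcal{R}_{\text{imp}\to\text{opt}}$ to map $\pi$ to $0$ whenever $\pi$ is weakly dominated by a constant. That test is not computable from $o$ alone, however, so I would prefer the genericity assumption.

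Under $\tau_x\neq0$, the non-constant case gives strict positivity of both differences, and the constant case is as above. Together with the existence-to-improving case, this establishes both reductions.
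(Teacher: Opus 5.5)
Your proposal is correct, and the flaw you found is an error in the statement itself, not a gap in your argument. The paper omits this proof as ``straightforward.'' However, its instance space $\mathcal{P}$ allows $\tau_x=0$; the paper itself points to such instances in Section~\ref{sec:gap}. On such instances the first half fails. Take $k=2$, $\tau=(0,-1)$ and $\pi=(1,0)$. Then $\pi$ is optimal, and $\pi_0$ also attains $\sup J$, so $\Pi_P^{\text{imp}}=\varnothing$. Yet $\mathcal{R}_{\text{imp}\to\text{opt}}(\pi)=\pi\neq0$. Your existence-to-improving case analysis is fine. So is your opt-to-imp argument under $\tau_x\neq0$: unique deterministic optimum, nonempty treated and untreated sets, and strictly signed sums.

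Two refinements.

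First, your reason for not modifying $\mathcal{R}$ can be made decisive. \emph{No} map $\mathcal{R}:\mathcal{O}_{\text{opt}}\to\mathcal{O}_{\text{imp}}$ satisfies \eqref{eqn:reduction} on all of $\mathcal{P}$. The reason is that $(1,0)$ is also the unique optimal policy for $\tau=(1,-1)$, where $0$ is an invalid output, whereas for $\tau=(0,-1)$ it is the only valid one. So any repair must change the instance class or the problem definitions, not just $\mathcal{R}$.

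Second, if you restrict to $\tau_x\neq0$, the restriction must be global, including in the definition of validity. Proposition~\ref{prop:reductionbounds} needs \eqref{eqn:reduction} at every instance where $\mathcal{R}\circ\mathcal{A}$ is required to be valid.

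Alternatively, you can keep $\mathcal{P}$ and still recover what the paper actually uses, namely the transfer of sample-complexity bounds.
\begin{itemize}
\item The pointwise implication fails only when all $\tau_x\le0$ (or all $\tau_x\ge0$) with some $\tau_x=0$.
\item In mixed-sign instances, every optimal policy must treat some $x$ with $\tau_x>0$ and leave untreated some $x$ with $\tau_x<0$. Your sum argument then gives strict improvement over both constants.
\item In the bad case, shift the zero effects to $-\epsilon$. The perturbed instance has unique optimum $\pi_0$, so validity there bounds by $\delta$ the probability that $\mathcal{A}$ outputs a non-constant policy.
\item With $\vec{\sigma}$ fixed and positive, the perturbed data law converges to $\hat{\mathbb{P}}_{P,n}$ in total variation as $\epsilon\to0$. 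So the same bound holds at $P$, where non-constant outputs are exactly the failures of $\mathcal{R}\circ\mathcal{A}$.
\end{itemize}
Hence $\mathcal{R}\circ\mathcal{A}$ is valid for $\mathcal{V}_{\text{imp}}$ whenever $\mathcal{A}$ is valid for $\mathcal{V}_{\text{opt}}$. The sample-complexity transfer therefore survives, even though $\mathcal{R}$ is not a reduction in the pointwise sense.
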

These results are straightforward so we omit proofs. Propositions~\ref{prop:reductionbounds} \&~\ref{prop:basicreductions} say that the optimal policy problem $\mathcal{V}_{\text{opt}}$ is at least as hard as the improving policy problem $\mathcal{V}_{\text{imp}}$, which is in turn at least as hard as the existence problem $\mathcal{V}_{\text{exist}}$.



\section{Theoretical Analysis}

In this section, we provide a theoretical analysis of the sample complexity of our three policy learning problems. First, in Section~\ref{sec:basic}, we provide a number of results on sample complexity upper and lower bounds for these problems; these results are all based on standard arguments, and are unsurprising. Our main  contribution comes in Section~\ref{sec:gap}, where we both provide a discussion of the gap between the optimal policy and improving policy problems, and most notably, a partial gap between the improving policy and policy existence problems.

\subsection{Basic Results}
\label{sec:basic}

\begin{algorithm}[t]
\caption{Optimal Policy}
\label{alg:optimal}
\begin{algorithmic}
\Procedure{OptimalPolicy}{$\boldsymbol{\sigma}, Z, \delta$}
\For{each $x \in \mathcal{X}$}
\State $T_x^{\delta} \gets 
\sqrt{\frac{2\sigma_x^2 \log(2k/\delta)}{n}}$
\EndFor
\If{$\exists x \in \mathcal{X}$ such that $|\hat{\tau}_x| \le T_x^{\delta}$}
\State \textbf{return} $??$
\Else
\State \textbf{return} 
$\hat{\pi}(x) \gets \mathbf{1}\{\hat{\tau}_x \ge 0\}$
\EndIf
\EndProcedure
\end{algorithmic}
\end{algorithm}

We begin by establishing standard results on policy learning within our theoretical framework. Our first result provides a sample complexity upper bound for learning the optimal policy; this upper bound is based on the algorithm presented in Algorithm~\ref{alg:optimal}.
\begin{theorem}
\label{thm:optimalupper}
Given $\tau_{\text{min}},\sigma_{\text{max}}\in\mathbb{R}_{>0}$, consider the set of instances
\begin{align*}
\mathcal{Q}_k^{\text{opt}}(\tau_{\text{min}},\sigma_{\text{max}})=\{(\vec{\mu},\vec{\sigma})\in\mathcal{P}_k\mid\forall x\in\mathcal{X}\;.\;|\tau_x|\ge\tau_{\text{min}}\wedge\sigma_x\le\sigma_{\text{max}}\}.
\end{align*}
Then, the following is a sample complexity upper bound for the optimal policy problem:
\begin{align*}
n(\mathcal{Q}_k^{\text{opt}}(\tau_{\text{min}},\sigma_{\text{max}});\delta)
=\frac{8\sigma_{\text{max}}^2\log(2k/\delta)}{\tau_{\text{min}}^2}.
\end{align*}
\end{theorem}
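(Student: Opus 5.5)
We will show that Algorithm~\ref{alg:optimal} is valid and that its FNR is at most $\delta$ once $n$ reaches the stated bound. Throughout, $\hat{\tau}_x=\hat{\mu}_{x,1}-\hat{\mu}_{x,0}$ denotes the difference of the empirical means over the $n$ samples of each type-treatment pair. The samples are independent Gaussians, so $\hat{\tau}_x\sim\mathcal{N}(\tau_x,(\sigma_{x,0}^2+\sigma_{x,1}^2)/n)$. We interpret $\sigma_x^2$ in the algorithm and in $\mathcal{Q}_k^{\text{opt}}$ as a bound on the per-arm variances, e.g.\ $\sigma_x^2=\max_t\sigma_{x,t}^2$. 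Then $\mathrm{Var}(\hat{\tau}_x)\le 2\sigma_x^2/n$, and the Gaussian tail bound $\mathbb{P}[|W|\ge s]\le 2e^{-s^2/(2v)}$ applies. If the threshold in the algorithm does not absorb the factor of two exactly as written, we adjust the constant in $T_x^\delta$. The stated bound $8\sigma_{\max}^2\log(2k/\delta)/\tau_{\min}^2$ leaves slack, so this adjustment will not affect the final claim.

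\textbf{Validity.} Let $E$ be the event that $|\hat{\tau}_x-\tau_x|<T_x^\delta$ for all $x\in\mathcal{X}$. By the tail bound and a union bound over the $k$ types, $\mathbb{P}[E^c]\le k\cdot(\delta/k)=\delta$; this is where the $\log(2k/\delta)$ in the threshold is chosen to make the union bound work. On $E$, suppose the algorithm does not abstain. Then $|\hat{\tau}_x|>T_x^\delta>|\hat{\tau}_x-\tau_x|$ for every $x$, so $\operatorname{sign}(\hat{\tau}_x)=\operatorname{sign}(\tau_x)$ and $\tau_x\neq 0$. Hence $\hat{\pi}(x)=\mathbf{1}(\tau_x\ge 0)$ agrees with $\pi^*_P$, which is optimal, so $\mathcal{V}_{\text{opt}}=1$. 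Therefore $\text{FPR}_n\le\mathbb{P}[E^c]\le\delta$ for every instance $P$, every $n$, and every $\delta$. Note that validity holds over all of $\mathcal{P}$, not just over $\mathcal{Q}$.

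\textbf{FNR.} Now fix $P\in\mathcal{Q}_k^{\text{opt}}$. The algorithm abstains only if some $x$ has $|\hat{\tau}_x|\le T_x^\delta$. Since $|\tau_x|\ge\tau_{\min}$, this requires $|\hat{\tau}_x-\tau_x|\ge\tau_{\min}-T_x^\delta$. Take $n\ge 8\sigma_{\max}^2\log(2k/\delta)/\tau_{\min}^2$. Then
\begin{align*}
T_x^\delta\le\sqrt{\frac{2\sigma_{\max}^2\log(2k/\delta)}{n}}\le\frac{\tau_{\min}}{2},
\end{align*}
so abstention at $x$ implies $|\hat{\tau}_x-\tau_x|\ge\tau_{\min}/2\ge T_x^\delta$. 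This has probability at most $\delta/k$ by the same tail bound, and a union bound gives $\text{FNR}_n\le\delta$. Thus $n(\mathcal{A};\mathcal{V}_{\text{opt}},P,\delta)$ is at most the stated quantity, taking the ceiling if needed. Maximizing over $P\in\mathcal{Q}$ gives the upper bound.

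The only real obstacle is bookkeeping the constant in the variance of $\hat{\tau}_x$ versus the threshold $T_x^\delta$. The factor $8=4\cdot2$ comes from requiring $T\le\tau_{\min}/2$ together with the variance factor $2$. Both the validity and FNR arguments reduce to the same per-type deviation event, so once the tail bound at level $T_x^\delta$ is pinned down the rest is immediate.
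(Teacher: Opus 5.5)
Your argument has the same structure as the paper's proof. One good event $E=\{|\hat\tau_x-\tau_x|<T_x^\delta\ \forall x\}$, from a Gaussian tail bound plus a union bound, gives validity on all of $\mathcal{P}$, and the FNR bound follows from $T_x^\delta\le\tau_{\min}/2$.

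The gap is in the step you call bookkeeping, and the way you resolve it breaks the constant $8$. Under your reading $\sigma_x^2=\max_t\sigma_{x,t}^2$, take the worst case $\sigma_{x,0}=\sigma_{x,1}$, where $\mathrm{Var}(\hat\tau_x)=2\sigma_x^2/n$. Your tail bound at the algorithm's threshold then gives $2\exp\bigl(-(T_x^\delta)^2n/(4\sigma_x^2)\bigr)=2\sqrt{\delta/(2k)}$, not $\delta/k$. So the step $\mathbb{P}[E^c]\le k\cdot(\delta/k)$ is false under that reading, and the algorithm as written is actually invalid: for $k=1$ and tiny $\tau_x>0$, the wrong-sign probability tends to $\mathbb{P}[\mathcal{N}(0,1)>\sqrt{\log(2/\delta)}]$, which is $\gg\delta$ for small $\delta$.

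Repairing this requires $(T_x^\delta)^2=4\sigma_x^2\log(2k/\delta)/n$, or equivalently thresholding with the true variance $\sigma_{x,0}^2+\sigma_{x,1}^2\le2\sigma_{\max}^2$. Then $T_x^\delta\le\tau_{\min}/2$ forces $n\ge16\sigma_{\max}^2\log(2k/\delta)/\tau_{\min}^2$. The stated bound has no slack to absorb this, because $8$ is already $4\cdot2$. The $4$ comes from $T_x^\delta\le\tau_{\min}/2$. The $2$ is the Gaussian exponent in $e^{-t^2/2}$, which is needed for a per-type tail of $\delta/k$; Mill's ratio only saves a $\sqrt{\log}$ factor, not a factor $2$ in the exponent. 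Your accounting ``$8=4\cdot2$ with the $2$ from the variance'' therefore double-counts.

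The fix is to read $\sigma_x$ as the paper does in Lemma~\ref{lem:main}: $\sigma_x^2=\sigma_{x,0}^2+\sigma_{x,1}^2$, so $\hat\tau_x\sim\mathcal{N}(\tau_x,\sigma_x^2/n)$ exactly. Then $\mathbb{P}[|\hat\tau_x-\tau_x|\ge T_x^\delta]\le2\exp\bigl(-(T_x^\delta)^2n/(2\sigma_x^2)\bigr)=2e^{-\log(2k/\delta)}=\delta/k$, and the union bound gives $\mathbb{P}[E^c]\le\delta$. Also $n\ge8\sigma_{\max}^2\log(2k/\delta)/\tau_{\min}^2$ gives $(T_x^\delta)^2\le\tau_{\min}^2/4$. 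With that reading, the rest of your validity and FNR argument goes through verbatim and coincides with the paper's proof.
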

We provide a proof in Appendix~\ref{sec:thm:optimalupper:proof}. In other words, we can learn the optimal policy when all the treatment effects are bounded away from zero and all their variances are bounded above. Intuitively, this means that for each type $x\in\mathcal{X}$, once we obtain sufficiently many samples, we will have high confidence as to whether $\tau_x>0$ or $\tau_x<0$. Indeed, this result follows straightforwardly from a concentration bound on $\tau_x$; we use Mill's inequality, but Hoeffding's inequality would also work. Then, Algorithm~\ref{alg:optimal} returns a policy $\hat\pi$ only if it is confident it is optimal based on our concentration bound from Mill's inequality; otherwise, it returns $??$.

Next, we prove a complementary result establishing a lower bound on the sample complexity for the optimal policy problem.
\begin{theorem}
\label{thm:optimallower}
Let $\mathcal{Q}_k^{\text{opt}}$ be as in Theorem \ref{thm:optimalupper}. Then, the following is a sample complexity lower bound for the optimal policy problem:
\begin{align*}
n(\mathcal{Q}^{\text{opt}}_k(\tau_{\text{min}},\sigma_{\text{max}});\delta)
=\frac{\sigma_{\max}^2\log(1/(6\delta))}{2\tau_{\min}^2}.
\end{align*}
\end{theorem}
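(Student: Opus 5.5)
The plan is a two-point (Le Cam style) argument based on validity. I would work with $k=1$, or equivalently fix all other types to have treatment effects far from zero. I would take two instances $P^+,P^-\in\mathcal{Q}_k^{\text{opt}}(\tau_{\min},\sigma_{\max})$ that differ only at a single type $x_0$. Both use $\sigma_{x_0,t}=\sigma_{\max}$. The means are chosen so that $\tau_{x_0}=+\tau_{\min}$ under $P^+$ and $\tau_{x_0}=-\tau_{\min}$ under $P^-$; for instance, set $\mu_{x_0,1}=\pm\tau_{\min}/2$ and $\mu_{x_0,0}=\mp\tau_{\min}/2$. Then every optimal policy for $P^+$ has $\pi(x_0)=1$, and every optimal policy for $P^-$ has $\pi(x_0)=0$. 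Randomized values are suboptimal since $|\tau_{x_0}|>0$. So no single output $\pi$ is correct for both instances.

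Now fix any valid algorithm $\mathcal{A}$ and any $n$ at which $\text{FNR}_n\le\delta$ holds for both instances. Let $E^+=\{\mathcal{A}(\vec\sigma,Z,\delta)=\pi,\ \pi(x_0)=1\}$, and define $E^-$ the same way with $\pi(x_0)=0$.
\begin{itemize}
\item Validity and the FNR bound under $P^+$ give $\hat{\mathbb{P}}_{P^+,n}[E^+]\ge 1-2\delta$.
\item Validity under $P^-$ gives $\hat{\mathbb{P}}_{P^-,n}[E^+]\le\delta$.
\end{itemize}
Hence $E^+$ is a test separating the two product measures with errors at most $2\delta$ and $\delta$.

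Next I would lower bound the errors of any such test. The two product distributions differ only in the $2n$ observations at $x_0$. Their KL divergence is
\[
\mathrm{KL}=2n\cdot\frac{\tau_{\min}^2}{2\sigma_{\max}^2}=\frac{n\tau_{\min}^2}{\sigma_{\max}^2},
\]
since each arm's mean shifts by $\tau_{\min}$. I would apply the Bretagnolle--Huber inequality,
\[
\hat{\mathbb{P}}_{P^-}[E^+]+\hat{\mathbb{P}}_{P^+}[(E^+)^c]\ge \tfrac12 e^{-\mathrm{KL}}.
\]
This yields $3\delta\ge \frac12 e^{-n\tau_{\min}^2/\sigma_{\max}^2}$, i.e.
\[
n\ge \frac{\sigma_{\max}^2\log(1/(6\delta))}{\tau_{\min}^2}.
\]
That is even stronger than the claimed bound by a factor of $2$.

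The factor $1/2$ in the statement suggests the authors use a looser route, perhaps taking $\mu_{x_0,0}$ fixed and shifting only $\mu_{x_0,1}$ by $2\tau_{\min}$. That version gives $\mathrm{KL}=2n\tau_{\min}^2/\sigma_{\max}^2$ and exactly the stated constant. I would adopt that construction so the constant matches.

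The main obstacle is bookkeeping rather than ideas. It has two parts:
\begin{itemize}
\item combining the FNR and FPR budgets correctly, which is where the $6=2\cdot 3$ inside the logarithm comes from;
\item checking that the lower bound holds for all $\delta$, including $\delta\ge 1/6$, where it is trivial because the bound is nonpositive.
\end{itemize}
One more point needs care: the sample complexity over $\mathcal{Q}$ is a maximum over instances. It therefore suffices to show that one of $P^\pm$ requires that many samples. This follows from the test argument: if both instances reached $\text{FNR}\le\delta$ at some $n$ below the bound, we would get a contradiction.
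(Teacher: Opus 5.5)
Your route is the paper's: two instances that differ only at one type, with $\tau_{x_0}=\pm\tau_{\min}$, the Bretagnolle--Huber inequality, and the ``FNR on one instance plus FPR on both'' bookkeeping that produces $3\delta$ and the $6$ in the logarithm.

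There is one real error, in placing your instances inside $\mathcal{Q}_k^{\text{opt}}(\tau_{\min},\sigma_{\max})$. In the paper, $\sigma_x$ means $\sqrt{\sigma_{x,0}^2+\sigma_{x,1}^2}$, the standard deviation of $\sqrt{n}\,\hat\tau_x$ (Lemma~\ref{lem:main}; the upper bound of Theorem~\ref{thm:optimalupper} relies on this). The class constrains this combined quantity. With $\sigma_{x_0,0}=\sigma_{x_0,1}=\sigma_{\max}$ you get $\sigma_{x_0}=\sqrt{2}\,\sigma_{\max}$, so neither instance is in the class.

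Your claimed factor-$2$ improvement is an artifact of this misreading. Your constant-matching variant (shift only $\mu_{x_0,1}$ by $2\tau_{\min}$, with $\sigma_{x_0,1}=\sigma_{\max}$) is also outside the class unless $\sigma_{x_0,0}=0$. With $\sigma_{x_0,0}>0$ it gives a strictly weaker bound than stated.

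The fix is easy. Keep your symmetric means $\mu_{x_0,1}=\pm\tau_{\min}/2$, $\mu_{x_0,0}=\mp\tau_{\min}/2$, but take $\sigma_{x_0,0}=\sigma_{x_0,1}=\sigma_{\max}/\sqrt{2}$. Then
\[
\mathrm{KL}=2\cdot\frac{n\tau_{\min}^2}{2\,(\sigma_{\max}^2/2)}=\frac{2n\tau_{\min}^2}{\sigma_{\max}^2},
\]
which is the paper's $n(\tau_{x_0}-\tau_{x_0}')^2/(2\sigma_{x_0}^2)$ and gives exactly the stated bound. By Cauchy--Schwarz, no choice of per-arm variances and mean shifts with $\sigma_{x_0}\le\sigma_{\max}$ moves $\tau_{x_0}$ by $2\tau_{\min}$ at smaller KL. So this construction has no factor $2$ to gain.

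A smaller point concerns the last step. Your bullets use the FNR bound only under $P^+$; the bound $\hat{\mathbb{P}}_{P^-,n}[E^+]\le\delta$ is pure validity. So the argument directly shows $\mathrm{FNR}_n(\mathcal{A};\mathcal{V}_{\text{opt}},P^+,\delta)>\delta$ for every $n$ below the bound. Hence $n(\mathcal{A};\mathcal{V}_{\text{opt}},P^+,\delta)$ is at least the bound, exactly as the paper argues with its $P'$. Assuming both instances reach $\mathrm{FNR}\le\delta$ at a common $n$ is unnecessary. On its own it would also not bound the maximum over instances, since $\mathrm{FNR}_n$ need not be monotone in $n$.
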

We give a proof in Appendix~\ref{sec:thm:optimallower:proof}; it follows from a standard argument. This lower bound equals the upper bound in Theorem~\ref{thm:optimalupper} up to constants, so Algorithm~\ref{alg:optimal} is near-optimal for $\mathcal{Q}_k^{\text{opt}}(\tau_{\text{min}},\sigma_{\text{max}})$.

\begin{algorithm}[t]
\caption{Improving Policy}
\label{alg:improving}
\begin{algorithmic}
\Procedure{ImprovingPolicy}{$\boldsymbol{\sigma}, Z, \delta$}
\For{each $x \in \mathcal{X}$}
\State $T_x^{\delta} \gets 
\sqrt{\frac{2\sigma_x^2 \log(2k/\delta)}{n}}$
\State $\hat{\pi}(x) \gets
\begin{cases}
1, & \text{if } \hat{\tau}_x > T_x^{\delta}, \\
0, & \text{if } \hat{\tau}_x < -T_x^{\delta}, \\
-1, & \text{otherwise.}
\end{cases}$
\EndFor
\If{$\exists\, x_0, x_1 \in \mathcal{X}$ such that 
$\hat{\pi}(x_0)=0$ and $\hat{\pi}(x_1)=1$}
\State \textbf{return} $\hat{\pi}$
\ElsIf{$\hat{\pi} \in \Pi^{\mathrm{const}}$}
\State \textbf{return} $0$
\Else
\State \textbf{return} $??$
\EndIf
\EndProcedure
\end{algorithmic}
\end{algorithm}

Now, we turn our attention to the improving policy problem. We begin with a sample complexity upper bound for this problem, which is based on Algorithm~\ref{alg:improving}. This algorithm acts according to essentially the same principles as Algorithm~\ref{alg:optimal}, but it is relaxed to account for only needing to check existence. Just as Algorithm~\ref{alg:optimal}, it whether it is confident that $\tau_x>0$ or $\tau_x<0$ for each $x\in\mathcal{X}$ (using Mill's inequality); however, in this case, it returns as long as there exists \emph{some} $x_0,x_1\in\mathcal{X}$ such that it is confident that $\tau_{x_0}<0$ and that $\tau_{x_1}>0$.
\begin{theorem}
\label{thm:improvingupper}
Given $\tau_{\text{min}},\sigma_{\text{max}}\in\mathbb{R}_{>0}$, consider the set of instances $\mathcal{Q}_k^{\text{imp}}=\mathcal{Q}_{k}^+\cup\mathcal{Q}_k^-$, where
\begin{align*}
\mathcal{Q}_k^+(\tau_{\text{min}},\sigma_{\text{max}})
&=\{(\vec{\mu},\vec{\sigma})\in\mathcal{P}_k\mid(\exists x_0,x_1\in\mathcal{X}\;.\;\tau_{x_0}\le-\tau_{\text{min}}\wedge\tau_{x_1}\ge\tau_{\text{min}})
\wedge\forall x\in\mathcal{X}\;.\;\sigma_x^2\le\sigma_{\text{max}}^2\} \\
\mathcal{Q}_k^-(\tau_{\text{min}},\sigma_{\text{max}})
&=\{(\vec{\mu},\vec{\sigma})\in\mathcal{P}_k\mid(\forall x\in\mathcal{X}\;.\;\tau_x\ge\tau_{\text{min}}
\vee\forall x\in\mathcal{X}\;.\;\tau_x\le-\tau_{\text{min}})
\wedge\forall x\in\mathcal{X}\;.\;\sigma_x^2\le\sigma_{\text{max}}^2\}.
\end{align*}
Then, the following is a sample complexity upper bound for the improving policy problem:
\begin{align*}
n(\mathcal{Q}_k^{\text{imp}}(\tau_{\text{min}},\sigma_{\text{max}});\delta)
=\frac{8\sigma_{\text{max}}^2\log(2k/\delta)}{\tau_{\text{min}}^2}.
\end{align*}
\end{theorem}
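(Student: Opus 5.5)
The plan is to show two things about Algorithm~\ref{alg:improving}: it is valid on \emph{all} instances $P\in\mathcal{P}$, and on every $P\in\mathcal{Q}_k^{\text{imp}}(\tau_{\text{min}},\sigma_{\text{max}})$ its FNR is at most $\delta$ once $n\ge 8\sigma_{\text{max}}^2\log(2k/\delta)/\tau_{\text{min}}^2$. Both parts run through a single good event. Let $\hat\tau_x$ be the difference of the empirical means of the $n$ treated and $n$ control outcomes for type $x$. Then $\hat\tau_x-\tau_x\sim\mathcal{N}(0,\sigma_x^2/n)$, where $\sigma_x^2=\sigma_{x,0}^2+\sigma_{x,1}^2$, matching the convention of Algorithm~\ref{alg:optimal}. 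Define
\begin{align*}
E=\{\forall x\in\mathcal{X}\;.\;|\hat\tau_x-\tau_x|\le T_x^\delta\}.
\end{align*}
I will reuse the concentration step from the proof of Theorem~\ref{thm:optimalupper}. Mill's inequality with the threshold $T_x^\delta=\sqrt{2\sigma_x^2\log(2k/\delta)/n}$ gives $\hat{\mathbb{P}}_{P,n}[|\hat\tau_x-\tau_x|>T_x^\delta]\le\delta/k$. A union bound over the $k$ types then gives $\hat{\mathbb{P}}_{P,n}[E]\ge1-\delta$. This step uses no assumption on $P$.

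\textbf{Validity.} I will show that on $E$ the output is never wrong, so $\text{FPR}_n\le\hat{\mathbb{P}}_{P,n}[E^c]\le\delta$. On $E$, $\hat\pi(x)=1$ forces $\tau_x>0$ and $\hat\pi(x)=0$ forces $\tau_x<0$.

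Suppose first that the algorithm returns $\hat\pi$, so types $x_0,x_1$ with $\hat\pi(x_0)=0$ and $\hat\pi(x_1)=1$ exist. Fix a baseline $t$. The completion $C_{\hat\pi,t}$ differs from $\pi_t$ only on types where $\hat\pi(x)=1-t$. Each such type contributes a strictly positive change to $J$: it contributes $\tau_x>0$ when $t=0$ and $-\tau_x>0$ when $t=1$. At least one such type exists, namely $x_1$ for $t=0$ and $x_0$ for $t=1$. Hence $J(C_{\hat\pi,t})>J(\pi_t)$ for both $t$, so $\hat\pi\in\Pi_P^{\text{imp}}$.

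Suppose instead that the algorithm returns $0$, so $\hat\pi\in\Pi^{\text{const}}$; say $\hat\pi\equiv1$. On $E$ this means $\tau_x>0$ for every $x$, so $\pi_1$ is optimal. Then every partial policy $\pi$ satisfies $J(C_{\pi,1})\le J(\pi_1)$, which gives $\Pi_P^{\text{imp}}=\varnothing$, and the output $0$ is correct. The case $\hat\pi\equiv0$ is symmetric.

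\textbf{FNR.} For $n\ge 8\sigma_{\text{max}}^2\log(2k/\delta)/\tau_{\text{min}}^2$ and $\sigma_x\le\sigma_{\text{max}}$, we get $T_x^\delta\le\tau_{\text{min}}/2$ for every $x$. On $E$, any $x$ with $\tau_x\ge\tau_{\text{min}}$ then has $\hat\tau_x\ge\tau_{\text{min}}/2\ge T_x^\delta$, and symmetrically for $\tau_x\le-\tau_{\text{min}}$.

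For $P\in\mathcal{Q}_k^+$, this yields $\hat\pi(x_1)=1$ and $\hat\pi(x_0)=0$, so the algorithm returns a policy. For $P\in\mathcal{Q}_k^-$, every type receives the same confident label, so $\hat\pi\in\Pi^{\text{const}}$ and the algorithm returns $0$. In both cases it does not abstain on $E$, so $\text{FNR}_n\le\delta$. Since this holds for every $n$ above the threshold, the minimum defining $n(\mathcal{A};\mathcal{V}_{\text{imp}},P,\delta)$ is at most the stated bound, uniformly over $P\in\mathcal{Q}_k^{\text{imp}}$.

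The main obstacle is bookkeeping at the boundary. The algorithm uses strict inequalities $\hat\tau_x>T_x^\delta$, while the argument above delivers only $\hat\tau_x\ge T_x^\delta$. I will handle this by noting that equality is a probability-zero event under the Gaussian model, or alternatively by defining $E$ with strict inequalities. A second point to check is that the per-type Mill's tail bound at $T_x^\delta$ really is at most $\delta/k$ with the constant $2$ inside the logarithm. I will cite the same computation used for Theorem~\ref{thm:optimalupper}, so the constant $8$ matches that theorem.
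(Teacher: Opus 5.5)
Your proof is correct and follows the same route as the paper: a single good event $E_\delta$ (Mill's inequality plus a union bound), on which every non-abstaining output passes the validator, and on which the algorithm never abstains once $T_x^\delta\le\tau_{\text{min}}/2$. Your FNR step is slightly more careful than the paper's. The paper asserts $|\hat\tau_x|>T_x^\delta$ for every $x$, which can fail on $\mathcal{Q}_k^+$, where only $x_0,x_1$ are guaranteed to be bounded away from zero; you correctly rely only on the confident labels at $x_0$ and $x_1$.
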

We give a proof in Appendix~\ref{sec:thm:improvingupper:proof}; the proof is standard. Note that the sample complexity is very similar to the one in Theorem~\ref{thm:optimalupper}. Instead, the difference lies in the set of instances for which the sample complexity applies---in particular, we have $\mathcal{Q}_k^{\text{opt}}\subseteq\mathcal{Q}_k^{\text{imp}}$. The difference is not small---$\mathcal{Q}_k^{\text{imp}}$ only requires that there is \emph{some} type for which the algorithm is confident that the treatment is $t$ for each $t\in\mathcal{T}$, whereas $\mathcal{Q}_k^{\text{opt}}$ says that \emph{every} type must be confident.

Next, we provide a lower bound for the improving policy problem.
\begin{theorem}
\label{thm:improvinglower}
Let $\mathcal{Q}_k^{\text{imp}}$ be as in Theorem \ref{thm:improvingupper}.
Then, the following is a sample complexity lower bound for the improving policy problem:
\begin{align*}
n(\mathcal{Q}_k^{\text{imp}}(\tau_{\text{min}},\sigma_{\text{max}});\delta)
=\frac{\sigma_{\text{max}}^2\log(1/(6\delta))}{2\tau_{\text{min}}^2}.
\end{align*}
\end{theorem}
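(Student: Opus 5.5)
We prove the lower bound with a two-point (Le Cam style) testing argument, the same one the paper uses for Theorem~\ref{thm:optimallower}. The idea is to find two instances $P,P'\in\mathcal{Q}_k^{\text{imp}}(\tau_{\text{min}},\sigma_{\text{max}})$ with two properties. First, no output in $\mathcal{O}_{\text{imp}}$ is valid for both. Second, their sample distributions $\hat{\mathbb{P}}_{P,n}$ and $\hat{\mathbb{P}}_{P',n}$ are close unless $n$ is large.

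\emph{Construction.} Take all variances equal to $\sigma_{\text{max}}^2$, and let the instances differ only in the treatment effect of a single type $x^*$. Type $x^*$ contributes $n$ observations per arm. Treat one arm as known and fixed, so the comparison is $\tau_{x^*}=\tau_{\text{min}}$ versus $\tau_{x^*}=-\tau_{\text{min}}$, realized by shifting one arm's mean by $2\tau_{\text{min}}$. Splitting the shift symmetrically ($\pm\tau_{\text{min}}/2$ on each arm) changes only the constant.

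For $k\ge 2$, set every other type to have $\tau_x=-\tau_{\text{min}}$. Then in $P$ we have $\tau_{x^*}=\tau_{\text{min}}$, so $P\in\mathcal{Q}_k^+$ and improving policies exist. In $P'$ we have $\tau_{x^*}=-\tau_{\text{min}}$, so every effect is $\le-\tau_{\text{min}}$; thus $P'\in\mathcal{Q}_k^-$ and $\Pi_{P'}^{\text{imp}}=\varnothing$.

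The valid outputs are therefore disjoint. The only valid output for $P'$ is $0$, while $0$ is invalid for $P$. The $k=1$ case is handled the same way with both instances in $\mathcal{Q}_1^-$: then $\Pi_P^{\text{imp}}=\varnothing$ for both, and the argument needs a little more care. The main case of interest is $k\ge 2$.

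\emph{Testing argument.} Let $\mathcal{A}$ be valid with sample complexity $n$ on both instances. Let $E=\{\mathcal{A}(\vec{\sigma},Z,\delta)=0\}$.
\begin{itemize}
\item Under $P'$: the FNR bound gives $\Pr[\text{output}\neq\;??]\ge1-\delta$, and validity gives $\Pr[\text{output}\neq\;??\wedge\text{output}\neq0]\le\delta$. Hence $\hat{\mathbb{P}}_{P',n}[E]\ge1-2\delta$.
\item Under $P$: validity gives $\hat{\mathbb{P}}_{P,n}[E]\le\delta$.
\end{itemize}
So $\mathrm{TV}(\hat{\mathbb{P}}_{P,n},\hat{\mathbb{P}}_{P',n})\ge1-3\delta$, and more usefully the sum of the two error probabilities of the induced test is at most $3\delta$.

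By the Bretagnolle--Huber inequality, $\hat{\mathbb{P}}_{P,n}[E]+\hat{\mathbb{P}}_{P',n}[E^c]\ge\tfrac12\exp(-\mathrm{KL})$. The KL divergence between the two product Gaussian samples is $n(2\tau_{\text{min}})^2/(2\sigma_{\text{max}}^2)=2n\tau_{\text{min}}^2/\sigma_{\text{max}}^2$ in the one-arm-shift version. Combining, $3\delta\ge\tfrac12 e^{-\mathrm{KL}}$, which gives $n\gtrsim\sigma_{\text{max}}^2\log(1/(6\delta))/\tau_{\text{min}}^2$.

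Getting exactly the stated constant $\tfrac12$ depends on the chosen shift. The symmetric shift of $\pm\tau_{\text{min}}/2$ per arm gives per-sample KL $\tau_{\text{min}}^2/(4\sigma^2)$ per arm, i.e.\ total $n\tau_{\text{min}}^2/(2\sigma_{\text{max}}^2)$. This yields $n\ge 2\sigma_{\text{max}}^2\log(1/(6\delta))/\tau_{\text{min}}^2$, which is stronger than claimed, so the claimed bound follows. Alternatively, one can simply mirror whichever constants appear in the proof of Theorem~\ref{thm:optimallower}.

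\emph{Main obstacle.} The key step is ensuring the two instances genuinely demand incompatible outputs within $\mathcal{Q}_k^{\text{imp}}$. Contrasting two instances in $\mathcal{Q}_k^+$ does not work, since they may share an improving policy. The contrast between $\mathcal{Q}_k^+$ and $\mathcal{Q}_k^-$, with the special symbol $0$, is what forces disjointness. The FNR requirement is essential to push $\hat{\mathbb{P}}_{P',n}[E]$ close to one. The rest is the standard Gaussian KL computation.
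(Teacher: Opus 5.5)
Your testing argument is essentially the paper's, but the step that fixes the constant is wrong, and the constant is exactly what the statement asserts. The paper proves this theorem by reduction (Theorem~\ref{thm:existencelower} plus Propositions~\ref{prop:reductionbounds} and~\ref{prop:basicreductions}). Its proof of Theorem~\ref{thm:existencelower} uses your pair of instances (one type at $\tau_{\min}$, the rest at $-\tau_{\min}$, versus all at $-\tau_{\min}$), the event $E=\{\text{output}=0\}$, and Bretagnolle--Huber. Working directly on $\mathcal{V}_{\text{imp}}$ just unrolls the reduction $\pi\mapsto\mathbf{1}(\pi\neq0)$. Your bookkeeping ($\hat{\mathbb{P}}_{P,n}[E]\le\delta$ and $\hat{\mathbb{P}}_{P',n}[E^c]\le2\delta$) is cleaner than the paper's.

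\textbf{The gap: the variance constraint and the constant.} In this paper $\sigma_x^2=\sigma_{x,0}^2+\sigma_{x,1}^2$ (Lemma~\ref{lem:main}), and $\mathcal{Q}_k^{\text{imp}}$ requires $\sigma_x^2\le\sigma_{\max}^2$.
\begin{itemize}
\item If every arm has variance $\sigma_{\max}^2$, then $P'\notin\mathcal{Q}_k^{\text{imp}}$. The FNR guarantee on $P'$, which your argument needs, is then unavailable.
\item Your one-arm-shift KL $2n\tau_{\min}^2/\sigma_{\max}^2$ holds only if the shifted arm carries the whole variance budget, i.e.\ the other arm is degenerate.
\item The symmetric-split paragraph fails. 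Moving each arm by $\tau_{\min}/2$ takes $\tau_{x^*}$ only from $\tau_{\min}$ to $0$, so again $P'\notin\mathcal{Q}_k^{\text{imp}}$. The per-arm KL of that shift is also $\tau_{\min}^2/(8\sigma^2)$, not $\tau_{\min}^2/(4\sigma^2)$.
\end{itemize}
The ``stronger'' bound $2\sigma_{\max}^2\log(1/(6\delta))/\tau_{\min}^2$ cannot come from any split. If the arms of $x^*$ move by $d_0,d_1\ge0$ with $d_0+d_1\ge2\tau_{\min}$, Cauchy--Schwarz gives
\[
\frac{d_0^2}{2\sigma_{x^*,0}^2}+\frac{d_1^2}{2\sigma_{x^*,1}^2}\;\ge\;\frac{(d_0+d_1)^2}{2(\sigma_{x^*,0}^2+\sigma_{x^*,1}^2)}\;\ge\;\frac{2\tau_{\min}^2}{\sigma_{\max}^2}.
\]
So each of the $n$ replicates contributes at least $2\tau_{\min}^2/\sigma_{\max}^2$ to the KL, and you cannot beat the stated constant this way.

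\textbf{The fix: attain this minimum.} Take $\sigma_{x,0}^2=\sigma_{x,1}^2=\sigma_{\max}^2/2$ for all $x$. Set $(\mu_{x^*,1},\mu_{x^*,0})=(\tau_{\min}/2,-\tau_{\min}/2)$ in $P$ and $(-\tau_{\min}/2,\tau_{\min}/2)$ in $P'$, with all other types identical and at $-\tau_{\min}$. Then $D(\hat{\mathbb{P}}_{P,n}\|\hat{\mathbb{P}}_{P',n})=2n\tau_{\min}^2/\sigma_{\max}^2$, which is the KL the paper uses. Combining $3\delta\ge\tfrac12e^{-D}$ gives exactly $n\ge\sigma_{\max}^2\log(1/(6\delta))/(2\tau_{\min}^2)$. ``Mirror the constants of Theorem~\ref{thm:optimallower}'' is not a substitute for this computation.

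\textbf{The $k=1$ remark is wrong.} For $k=1$, $\Pi_P^{\text{imp}}=\varnothing$ for every instance. So the algorithm that outputs $0$ when $k=1$ (and runs Algorithm~\ref{alg:improving} otherwise) is valid with FNR $0$ at $n=1$. The stated bound is therefore false for $k=1$ and small $\delta$, and no extra care can rescue it. Both your argument and the paper's need $k\ge2$, which should simply be assumed.
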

We give a proof in Appendix~\ref{sec:thm:improvinglower:proof}; again, the result is standard.

\begin{algorithm}[t]
\caption{Policy Existence}
\label{alg:existence}
\vskip6pt
\begin{algorithmic}
\Procedure{PolicyExistence}{$\boldsymbol{\sigma}, Z, \delta$}
\State $\hat{\pi} \gets$ \Call{ImprovingPolicy}{$\boldsymbol{\sigma}, Z, \delta/2$}
\If{$\hat{\pi} \neq\;??$}
\State \textbf{return} $\mathbf{1}\{\hat{\pi} \neq 0\}$
\EndIf
\State $\hat{S}_t \gets (nk)^{-1}\sum_{x\in\mathcal{X}}
\mathbf{1}\{\hat{\tau}_x>0\}\,\hat{s}_x^2$,
where $\hat{s}_x \gets \hat{\tau}_x/(\sigma_x/\sqrt{n})$
\State $S_t^{\delta} \gets n^{-1}\Bigl[\frac{1}{2}
+ 3\sqrt{k^{-1}\log(4/\delta)}
+ k^{-1}\log(4/\delta)\Bigr]$
\If{$\forall\, t \in \mathcal{T}$, $\hat{S}_t > S_t^{\delta}$}
\State \textbf{return} $1$
\Else
\State \textbf{return} $??$
\EndIf
\EndProcedure
\Statex
\end{algorithmic}
\end{algorithm}

Next, we turn our attention to the policy existence problem. Our algorithm for this problem is summarized in Algorithm~\ref{alg:existence}. To ensure its sample complexity is not worse than the sample complexity of Algorithm~\ref{alg:improving}, it simply calls Algorithm~\ref{alg:improving} and uses its result if it is confident. It then performs a hypothesis test based on the probability ratio test, which we discuss in Section~\ref{sec:gap}. For the purposes of this section, we obtain an upper bound on the sample complexity simply via the reduction to Algorithm~\ref{alg:improving}.
\begin{theorem}
\label{thm:existenceupper}
Let $\mathcal{Q}_k^{\text{exist}}$ be as in Theorem~\ref{thm:improvingupper}. Then, the following is a sample complexity upper bound for the policy existence problem:
\begin{align*}
n(\mathcal{Q}_k^{\text{exist}}(\tau_{\text{min}},\sigma_{\text{max}});\delta)=\frac{8\sigma_{\text{max}}^2\log(4k/\delta)}{\tau_{\text{min}}^2}.
\end{align*}
\end{theorem}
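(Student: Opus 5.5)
The plan is to exhibit Algorithm~\ref{alg:existence} as the witnessing algorithm. We check two things. First, it is valid for $\mathcal{V}_{\text{exist}}$ on all of $\mathcal{P}$. Second, its FNR on $\mathcal{Q}_k^{\text{exist}}=\mathcal{Q}_k^{\text{imp}}$ is at most $\delta$ once $n\ge 8\sigma_{\max}^2\log(4k/\delta)/\tau_{\min}^2$. The stated bound is exactly the bound of Theorem~\ref{thm:improvingupper} with $\delta$ replaced by $\delta/2$, since $\log(2k/(\delta/2))=\log(4k/\delta)$. So the FNR part will be inherited from the first stage of the algorithm, the call to \textsc{ImprovingPolicy} at level $\delta/2$.

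\textbf{FNR.} If the inner call does not return $??$, then Algorithm~\ref{alg:existence} does not abstain. Hence
\begin{align*}
\text{FNR}_n(\text{Alg.~\ref{alg:existence}};\mathcal{V}_{\text{exist}},P,\delta)\le\text{FNR}_n(\text{Alg.~\ref{alg:improving}};\mathcal{V}_{\text{imp}},P,\delta/2).
\end{align*}
The proof of Theorem~\ref{thm:improvingupper} (applied at level $\delta/2$) shows the right-hand side is at most $\delta/2\le\delta$ for every $P\in\mathcal{Q}_k^{\text{imp}}$ once $n\ge 8\sigma_{\max}^2\log(4k/\delta)/\tau_{\min}^2$. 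Here we use that the sample complexity is defined as the minimal $n$ at which the FNR is at most $\delta$, and that the FNR bound in that proof holds for all larger $n$ as well, since it is monotone in $n$ through the threshold $T_x^\delta$. This step is purely a reduction via Proposition~\ref{prop:basicreductions} and Proposition~\ref{prop:reductionbounds}, with the $\delta/2$ bookkeeping.

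\textbf{Validity.} Split the false-positive event by which branch produced the output. In the first branch the output is $\mathcal{R}_{\text{exist}\to\text{imp}}(\hat\pi)$. By the reduction property, it can be wrong only if $\hat\pi$ fails $\mathcal{V}_{\text{imp}}$. Validity of Algorithm~\ref{alg:improving} at level $\delta/2$ bounds this probability by $\delta/2$; this validity was established in the proof of Theorem~\ref{thm:improvingupper}, via Mill's inequality and a union bound over the $2k$ one-sided events. In the second branch the output is $1$, and it is wrong only when $\Pi_P^{\text{imp}}=\varnothing$. We must show the test $\hat S_t>S_t^\delta$ for all $t$ fires with probability at most $\delta/2$ in that case.

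\textbf{Main obstacle.} The second-branch validity is the one nontrivial step. The plan is as follows. If no improving policy exists, then all $\tau_x\le0$ or all $\tau_x\ge0$, i.e.\ $\tau_x$ has the same weak sign across types; say all $\tau_x\le 0$. Then the statistic $n k\hat S$ counting positive $\hat\tau_x$ is stochastically dominated by its value at $\tau\equiv0$, namely $\sum_x \mathbf{1}\{\hat s_x>0\}\hat s_x^2$ with $\hat s_x$ i.i.d.\ $\mathcal{N}(0,2)$-type variables (up to the normalization used in defining $\hat s_x$). This is a sum of $k$ independent sub-exponential terms with mean $1/2$ per type after normalization. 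A Bernstein/Laurent--Massart-type chi-square tail bound gives
\begin{align*}
\mathbb{P}\Big[k^{-1}\textstyle\sum_x\mathbf{1}\{\hat s_x>0\}\hat s_x^2>\tfrac12+3\sqrt{k^{-1}\log(4/\delta)}+k^{-1}\log(4/\delta)\Big]\le\delta/4,
\end{align*}
which matches the form of $S_t^\delta$. The symmetric case, with the statistic for $t=0$ using $\mathbf{1}\{\hat\tau_x<0\}$, is handled identically. A union bound over the two branches then gives total FPR at most $\delta$. I expect the monotonicity/domination argument and getting the constants in the sub-exponential tail to match $S_t^\delta$ to be the fiddliest part. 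If the constants do not match exactly, the fallback is to note that only validity, not the sample complexity, depends on this branch, so any correct threshold suffices for the theorem.
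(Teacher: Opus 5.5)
\textbf{Verdict: gap in the step you flagged, but your own fallback repairs it.} Your decomposition matches the paper's: validity by splitting on the two branches of Algorithm~\ref{alg:existence}, as in Corollary~\ref{cor:existencevalid}, and the FNR bound inherited from Algorithm~\ref{alg:improving} run at level $\delta/2$. The FNR part and the first-branch validity are fine. Your reduction of the null to ``all $\tau_x\le0$ controls the $t=1$ statistic,'' using pointwise monotonicity of $u\mapsto u_+^2$, is also fine. One small correction: $\hat s_x\sim\mathcal{N}(s_x,1)$ exactly, because $\sigma_x^2=\sigma_{x,0}^2+\sigma_{x,1}^2$.

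\textbf{Where it fails.} The $\delta/4$ tail bound you claim is false for the threshold $S_t^\delta$ as written, and the paper's proof has the same hole. Lemma~\ref{lem:noncentralcensoredchisquared} rests on Lemma~\ref{lem:lm}, which misstates Laurent--Massart: the true linear term is $2\epsilon/k$, not $\epsilon/k$. For fixed $k$, $\sum_x(\hat s_x)_+^2$ has a tail that decays only like $e^{-u/2}$ at level $u$ (up to polynomial factors). So the linear term $k^{-1}\log(4/\delta)$ is too small once $\log(1/\delta)\gg k$. A concrete counterexample:
\begin{itemize}
\item Take $k=2$, $\sigma_x=1$, $n=1$, $\tau_a\ll0$ and $\tau_b=0$, so no improving policy exists, and set $L=\log(4/\delta)$.
\item Algorithm~\ref{alg:improving} at level $\delta/2$ abstains whenever $|\hat s_b|\le\sqrt{2L+2\log 2}$.
\item $\hat S_0$ clears its threshold.
\item The test then fires as soon as $\hat s_b^2>1+3\sqrt{2L}+L$.
\end{itemize}
This window has probability $e^{-L/2-O(\sqrt{L})}$, which exceeds $\delta=4e^{-L}$ for small $\delta$. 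So Algorithm~\ref{alg:existence} as written is not valid.

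\textbf{How to fix it.} Your fallback is the right repair, and you should make it the main argument. There are two ways to do it.
\begin{itemize}
\item \emph{Double the linear term in $S_t^\delta$.} Your MGF route then works. For $X\sim\mathcal{N}(0,1)$ and $\lambda\in[0,1/2)$ one has $\log\mathbb{E}\,e^{\lambda(X_+^2-1/2)}\le\lambda^2/(1-2\lambda)$: apply $\log(1+y)\le y$, and what remains is $(1-\sqrt{1-2\lambda})^2\ge0$. This is the $\chi^2_1$ bound, so $\mathbb{P}[\sum_x(X_x)_+^2\ge k/2+2\sqrt{kL}+2L]\le e^{-L}=\delta/4$.
\item \emph{Drop the second branch entirely.} The FNR bound never uses it. $\mathcal{R}_{\text{exist}\to\text{imp}}$ composed with Algorithm~\ref{alg:improving} is valid by Propositions~\ref{prop:reductionbounds} and~\ref{prop:basicreductions}. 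It meets the stated bound by Theorem~\ref{thm:improvingupper}, which is all the theorem requires.
\end{itemize}
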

We provide a proof in Appendix~\ref{sec:thm:existenceupper:proof}. This sample complexity upper bound is essentially the same as the one in Theorem~\ref{thm:improvingupper} for the improving policy problem. Next, we provide a sample complexity lower bound.
\begin{theorem}
\label{thm:existencelower}
Let $\mathcal{Q}_k^{\text{exist}}$ be as in Theorem~\ref{thm:improvingupper}.
Then, the following is a sample complexity lower bound for the policy existence problem:
\begin{align*}
n(\mathcal{Q}_k^{\text{exist}}(\tau_{\text{min}},\sigma_{\text{max}});\delta)
=\frac{\sigma_{\max}^2\log(1/(6\delta))}{2\tau_{\min}^2}.
\end{align*}
\end{theorem}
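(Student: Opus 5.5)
We prove the bound with a two-point (Le Cam) argument, in the same way as Theorems~\ref{thm:optimallower} and~\ref{thm:improvinglower}, using two instances in $\mathcal{Q}_k^{\text{exist}}(\tau_{\text{min}},\sigma_{\text{max}})$ whose correct answers disagree. Set every standard deviation to $\sigma_{\max}$. The instance $P^+$ has $\tau_1=-\tau_{\min}$ and $\tau_x=\tau_{\min}$ for $x\ge2$; it lies in $\mathcal{Q}_k^+$, so an improving policy exists and the correct answer is $1$. The instance $P^-$ agrees with $P^+$ except that $\tau_1=+\tau_{\min}$; it lies in $\mathcal{Q}_k^-$, and since all effects are positive the constant policy $\pi_1$ is optimal, so no partial policy can strictly beat $\pi_1$, $\Pi_{P^-}^{\text{imp}}=\varnothing$, and the correct answer is $0$. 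To realize the change in $\tau_1$, we shift only $\mu_{1,1}$, by $2\tau_{\min}$; alternatively we shift $\mu_{1,0}$ and $\mu_{1,1}$ by $\pm\tau_{\min}$ each, and we pick whichever split gives the cleaner constant. The two sample distributions then differ only in the $n$ observations of one or two type-treatment cells.

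Let $\mathcal{A}$ be any valid algorithm and $n$ any sample size. Define the events $E_1=\{\mathcal{A}(\vec\sigma,Z,\delta)=1\}$ and $E_0=\{\mathcal{A}(\vec\sigma,Z,\delta)=0\}$. Validity on $P^-$ gives $\hat{\mathbb{P}}_{P^-,n}[E_1]\le\delta$. Suppose $n\ge n(\mathcal{A};\mathcal{V}_{\text{exist}},P^+,\delta)$. Then the FNR bound and validity on $P^+$ give $\hat{\mathbb{P}}_{P^+,n}[E_1]\ge1-2\delta$. Hence the test that rejects on $E_1$ distinguishes $P^+$ from $P^-$ with both errors small. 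We lower bound the sum of its errors using the Bretagnolle--Huber inequality:
\[
\hat{\mathbb{P}}_{P^+,n}[E_1^c]+\hat{\mathbb{P}}_{P^-,n}[E_1]\ge\tfrac12\exp\bigl(-\mathrm{KL}(\hat{\mathbb{P}}_{P^+,n}\,\|\,\hat{\mathbb{P}}_{P^-,n})\bigr).
\]
The KL divergence between Gaussians is explicit, giving $\mathrm{KL}=n\cdot c\,\tau_{\min}^2/\sigma_{\max}^2$ with $c=2$ when a single mean is shifted by $2\tau_{\min}$. Combining the two displays gives $3\delta\ge\frac12 e^{-cn\tau_{\min}^2/\sigma_{\max}^2}$, that is, $n\ge\sigma_{\max}^2\log(1/(6\delta))/(c\,\tau_{\min}^2)$.

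Because the sample complexity over $\mathcal{Q}_k^{\text{exist}}$ is a maximum over instances, it is at least $n(\mathcal{A};\mathcal{V}_{\text{exist}},P^+,\delta)$, which proves the bound for every $\mathcal{A}$ and every $\delta$. If needed, we also handle small $\delta$ and the case where the bound is vacuous, when $\log(1/(6\delta))\le0$.

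The main obstacle is matching the constant $\tfrac12$ in the theorem. Shifting a single mean by $2\tau_{\min}$ gives KL equal to $2n\tau_{\min}^2/\sigma_{\max}^2$, which yields only the constant $\tfrac14$. We will first try splitting the shift symmetrically, moving $\mu_{1,0}$ and $\mu_{1,1}$ by $\pm\tau_{\min}$ each. The per-cell KL is then $n\tau_{\min}^2/(2\sigma_{\max}^2)$, and summing over the two cells gives $n\tau_{\min}^2/\sigma_{\max}^2$, which improves the bound to $\sigma_{\max}^2\log(1/(6\delta))/\tau_{\min}^2$. This is at least the claimed $\sigma_{\max}^2\log(1/(6\delta))/(2\tau_{\min}^2)$, so the theorem follows. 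We will mirror the exact inequality and perturbation used in the proof of Theorem~\ref{thm:improvinglower} so that the constants agree. We will also double-check that $P^+$ remains in $\mathcal{Q}_k^+$, with an effect of size at least $\tau_{\min}$ of each sign, and that $P^-$ remains in $\mathcal{Q}_k^-$, which requires $k\ge2$.
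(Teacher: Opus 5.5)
Your skeleton matches the paper's proof. Both use a two-point Bretagnolle--Huber argument between an instance where one type's effect has the minority sign and an instance where all effects share a sign. The paper uses the mirror image: $\tau_1=\tau_{\min}$ and $\tau_x=-\tau_{\min}$ otherwise, versus all $-\tau_{\min}$. It also invokes the FNR on the no-improvement instance rather than the improving one, which makes no difference. Your error accounting $\hat{\mathbb{P}}_{P^+,n}[E_1^c]+\hat{\mathbb{P}}_{P^-,n}[E_1]\le3\delta$ is fine if you apply it at $n=n(\mathcal{A};\mathcal{V}_{\text{exist}},P^+,\delta)$ itself, since the FNR need not be monotone in $n$.

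The genuine problem is the variances. In this paper $\sigma_x=\sqrt{\sigma_{x,0}^2+\sigma_{x,1}^2}$ (Lemma~\ref{lem:main}). It is the standard deviation of $\sqrt{n}\,\hat\tau_x$, which is what the thresholds $T_x^{\delta}$ and the matching upper bounds are calibrated to. So membership in $\mathcal{Q}_k^{\text{exist}}(\tau_{\min},\sigma_{\max})$ requires $\sigma_{x,0}^2+\sigma_{x,1}^2\le\sigma_{\max}^2$. The main text leaves $\sigma_x$ implicit, but a per-cell reading would make the lower bound incomparable with the upper bounds.

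Every KL value you compute assumes each cell has standard deviation $\sigma_{\max}$. Then $\sigma_x^2=2\sigma_{\max}^2$, and neither $P^+$ nor $P^-$ is in the class; your membership check looks only at the effect sizes. The advertised improvement to $\sigma_{\max}^2\log(1/(6\delta))/\tau_{\min}^2$ is an artifact of this doubled variance budget, and your argument cannot deliver it inside the class. An answer-$1$ and an answer-$0$ instance in the class must differ by at least $2\tau_{\min}$ in some type's effect. For shifts $a$ of $\mu_{x,1}$ and $b$ of $\mu_{x,0}$ with $|a-b|\ge2\tau_{\min}$, Cauchy--Schwarz shows that the two cells of that type alone contribute $n\bigl(a^2/(2\sigma_{x,1}^2)+b^2/(2\sigma_{x,0}^2)\bigr)\ge n(2\tau_{\min})^2/(2\sigma_x^2)\ge2n\tau_{\min}^2/\sigma_{\max}^2$ to the KL. So your two-point argument cannot beat the constant $\tfrac12$. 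Separately, your remark that a KL of $2n\tau_{\min}^2/\sigma_{\max}^2$ ``yields only $\tfrac14$'' contradicts your own formula $n\ge\sigma_{\max}^2\log(1/(6\delta))/(c\,\tau_{\min}^2)$ with $c=2$.

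The repair is short and lands exactly on the stated constant. Take $\sigma_{x,0}^2=\sigma_{x,1}^2=\sigma_{\max}^2/2$ for every type, and use your symmetric perturbation $\mu_{1,1}\mapsto\mu_{1,1}+\tau_{\min}$, $\mu_{1,0}\mapsto\mu_{1,0}-\tau_{\min}$ to pass from $P^+$ to $P^-$. Each of the two perturbed cells then contributes $n\tau_{\min}^2/\sigma_{\max}^2$, so the total KL is $2n\tau_{\min}^2/\sigma_{\max}^2$. Then $3\delta\ge\tfrac12e^{-\mathrm{KL}}$ gives exactly $n\ge\sigma_{\max}^2\log(1/(6\delta))/(2\tau_{\min}^2)$. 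This is the KL-minimizing split that the paper's formula $\mathrm{KL}=n\sum_x(\tau_x-\tau_x')^2/(2\sigma_x^2)$ implicitly uses. Under this admissible variance split, shifting only $\mu_{1,1}$ by $2\tau_{\min}$ gives KL $4n\tau_{\min}^2/\sigma_{\max}^2$ and only the constant $\tfrac14$. So the symmetric split is required, not an optional refinement, and the claimed constant-$1$ bound should be dropped.
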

We give a proof in Appendix~\ref{sec:thm:existencelower:proof}. This sample complexity lower bound is identical to the one in Theorem~\ref{sec:thm:improvinglower:proof} (because the reduction works the other way for lower bounds, Theorem~\ref{sec:thm:improvinglower:proof} is reduced to this one rather than vice versa).

\subsection{Partial Gap Between Improving Policy and Policy Existence Problems}
\label{sec:gap}

Thus far, most of the results exhibit similar sample complexities. The first gap between the optimal policy and improving policy problems is through the set of instances rather than the sample complexity itself. It is easy to see that the sample complexity of the optimal policy learning problem for $\mathcal{Q}_k^{\text{imp}}$ is infinite---i.e.,
\begin{align*}
n(\mathcal{Q}_k^{\text{imp}}(\tau_{\text{min}},\sigma_{\text{max}});\delta)=\infty,
\end{align*}
since $\mathcal{Q}_k^{\text{imp}}$ includes instances with types $x$ where $\tau_x=0$; we can never be confident about what treatment to assign for such types $x$. Thus, there is a clear gap between these two problems.

The gap between the improving policy and policy existence problems is more subtle. Our results in Section~\ref{sec:basic} essentially provide the same bound. However, we can in fact show that there are instances for which there is a substantial gap in sample complexity between Algorithm~\ref{alg:existence} and Algorithm~\ref{alg:improving}. To understand why, we return to the second part of Algorithm~\ref{alg:existence}, which uses a different strategy to check for policy existence. Intuitively, the test statistic $\hat{S}_t$ is the probability ratio test where the null hypothesis is that no improving policy exists. Intuitively, if all types have $\tau_x\le0$ (resp., $\tau_x\ge0$), then $\hat{S}_t$ becomes small as the number of samples becomes larger.

The critical difference is that this test aggregates samples across types. Thus, it can actually deduce existence even if the number of samples per type is very small (or even one), if there are a large number of types to compensate. As an intuitive example, consider an instance $P_k$ where $k$ is very large, and where most types $x\in\mathcal{X}_0$ have $\tau_x=-\tau$ (for some $\tau\in\mathbb{R}_{>0}$) and some small number $m=k^{2/3}$ of types $x\in\mathcal{X}_1\subseteq\mathcal{X}$ have $\tau_x=\tau$. Furthermore, suppose we have a single sample from each type (i.e., $n=1$). In this case, the best constant policy assigns $t=0$. Under the null hypothesis (i.e., all types have $\tau_x<0$), the expected number of samples where $\hat\tau_x\ge0$ is $k/2$, so by Hoeffding's inequality, the empirical number is $\le\text{const}\cdot(k+\sqrt{k})/2$ with high probability. For our instance, the expected number of samples where $\hat\tau_x\ge0$ is $(k/2)+m=(k/2)+k^{2/3}$, and the empirical number is $\ge\text{const}'\cdot((k/2)+k^{2/3}-\sqrt{k})$. For sufficiently large $k$, we can reject the null hypothesis.

However, we still need to show that the improving policy problem cannot be solved for $P_k$ with few samples. This is challenging in general, but we provide a partial answer---roughly speaking, we show that algorithms $\mathcal{A}$ satisfying a very reasonable condition require a sublinear polynomial number of samples to solve the improving policy problem for the instance $P_k$ with $k$ sufficiently large. The technical condition on $\mathcal{A}$ is the following:
\begin{definition}
\label{def:monotone}
\rm
An algorithm $\mathcal{A}:\mathbb{R}^{k\times2}\times\hat{\mathcal{P}}\times\mathbb{R}_{>0}\to\Pi^0\cup\{0\}\cup\{??\}$ for the improving policy problem is \emph{monotone} if for any input $(\vec{\sigma},Z,\delta)$ where $\mathcal{A}$ outputs $\hat\pi\in\Pi^0$, then $\hat\pi$ satisfies the following property: for all $x\in\mathcal{X}$ such that $\hat\pi(x)=1$, if $\hat\tau_{x'}/\sigma_{x'}\ge\hat\tau_x/\sigma_x$ for some $x'\in\mathcal{X}$, then $\hat\pi(x')=1$.
\end{definition}
This condition says that $\mathcal{A}$ assigns treatments to types based on the magnitude of the statistic $\hat\tau_x/\sigma_x$; this statistic is a normalized Gaussian, so if $\hat\tau_x/\sigma_x>\hat\tau_{x'}/\sigma_{x'}$, then the likelihood that that $\tau_x>0$ is strictly greater than the likelihood that $\tau_{x'}>0$. As a consequence, it is natural to expect $\mathcal{A}$ to assign treatment $t=1$ to $x$ if it assigns treatment $t=1$ to $x'$.

Returning to our instance $P_k$, intuitively, there are so many more types $x\in\mathcal{X}_0$ that they ``swamp out'' the positive signal from types $x\in\mathcal{X}_1$. Thus, with the monotonicity constraint from Definition~\ref{def:monotone}, if an algorithm $\mathcal{A}$ for the improving policy problem assigns treatment $t=1$ to a type $x_1\in\mathcal{X}_1$, then it must assign $t=1$ to at least some type $x_0\in\mathcal{X}_0$ as well. However, in this case, the resulting policy $\pi$ will no longer be improving, since the mistake from assigning $t=1$ to $x_0$ cancels the benefit from correctly assigning $t=1$ to $x_1$. Our next\ theorem formalizes this argument.
\begin{theorem}
\label{thm:gap}
Consider an instance $P_k$ with: (1) $m$ types $\mathcal{X}_1\subseteq\mathcal{X}$ such that $\tau_x=\tau$ and $\sigma_x=c_1\sigma$, (2) $k-m-L$ types $\mathcal{X}_0\subseteq\mathcal{X}$ where $\tau_x=-\tau$, $\sigma_x=c_0\sigma$, and $L>8\log(4/\delta)$, and (3) $L$ types $\mathcal{X}_{-1}\subseteq\mathcal{X}$ where $\tau_x=-\tau$ and $\sigma_x=c_{-1}\sigma$. Suppose $m=\lfloor k^{1-\epsilon}\rfloor$, $c_1=k^{\epsilon/4}$, $c_0=k^{2\epsilon}$, and $c_{-1}=k^{-(1/2)+(\epsilon/3)}$ for any $\epsilon<1/8$. Then, we have the following:
\begin{itemize}
\item \textbf{Upper bound for policy existence:} Let $\mathcal{A}$ denote Algorithm~\ref{alg:existence} for the policy existence problem; given any $\delta\in\mathbb{R}_{>0}$,
for sufficiently large $k$, we have sample complexity upper bound
\begin{align*}
n(\mathcal{A};\mathcal{V}_{\text{exist}},P_k,\delta)=1.
\end{align*}
\item \textbf{Lower bound for improving policy:} Given any (valid) monotone algorithm $\mathcal{A}$, for sufficiently large $k$, we have sample complexity lower bound

\graceedit{
\begin{align*}
n(\mathcal{A};\mathcal{V}_{\text{imp}},P_k,\delta)\ge \left(\frac{c_1\sigma}{\sqrt{\pi}\tau}\right)^2
\qquad(\forall\delta\in(0,1/4)).
\end{align*}

}
\end{itemize}
\end{theorem}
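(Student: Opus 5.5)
The plan has two parts, one per bullet. In both, I work with the normalized statistics $\hat s_x=\hat\tau_x/(\sigma_x/\sqrt n)$, which are independent with $\hat s_x\sim\mathcal N(\sqrt n\,\tau_x/\sigma_x,1)$, where I read $\sigma_x$ as the standard deviation of $\hat\tau_x$ per sample, as in Algorithm~\ref{alg:existence}. The three groups then have means
\begin{align*}
\mathcal{X}_1&:\ a_n:=\sqrt n\,\tau k^{-\epsilon/4}/\sigma, &
\mathcal{X}_0&:\ -\sqrt n\,\tau k^{-2\epsilon}/\sigma, &
\mathcal{X}_{-1}&:\ -\sqrt n\,\tau k^{1/2-\epsilon/3}/\sigma.
\end{align*}

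\textbf{Upper bound ($n=1$).} It suffices that the second stage of Algorithm~\ref{alg:existence} returns $1$ with probability $\ge 1-\delta$. Then the algorithm does not abstain: either the first stage already produced an answer, or the second stage returns $1$. Hence $\text{FNR}_1\le\delta$. I treat the two statistics separately; $\hat S_0$ is the analogue of $\hat S_1$ with indicator $\mathbf 1\{\hat\tau_x<0\}$.
\begin{itemize}
\item For $\hat S_0$, the $L$ types in $\mathcal{X}_{-1}$ each contribute $\hat s_x^2\approx \tau^2k^{1-2\epsilon/3}/\sigma^2$ with overwhelming probability. After dividing by $k$, this gives $\hat S_0\gtrsim L\tau^2k^{-2\epsilon/3}/\sigma^2$. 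Since $\epsilon<1/8$ and $L>8\log(4/\delta)\ge 1$, this eventually exceeds $S_0^\delta\to 1/2$. This is where the condition on $L$ and the choice of $c_{-1}$ are used.
\item For $\hat S_1$, I compute $g(\mu):=\mathbb E[\hat s^2\mathbf 1\{\hat s>0\}]$ for $\hat s\sim\mathcal N(\mu,1)$. A first-order expansion gives $g(\mu)=\tfrac12+\sqrt{2/\pi}\,\mu+O(\mu^2)$. Summing over types, the $\mathcal{X}_1$ types contribute an excess of order $m\,a_1\asymp k^{1-5\epsilon/4}$. The $\mathcal{X}_0$ types contribute a deficit of order $k\cdot k^{-2\epsilon}$, which is smaller. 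The $\mathcal{X}_{-1}$ types contribute only $L\cdot e^{-\Omega(k^{1-2\epsilon/3})}$.
\item Normalizing by $k$, $\mathbb E\hat S_1-\tfrac12\gtrsim k^{-5\epsilon/4}$. This beats $3\sqrt{\log(4/\delta)/k}+\log(4/\delta)/k$ because $5\epsilon/4<1/2$.
\item It remains to show that $\hat S_1$ concentrates within $O(\sqrt{\log(1/\delta)/k})$ of its mean. Each summand $\hat s_x^2\mathbf 1\{\hat s_x>0\}$ is sub-exponential with $O(1)$ parameters, so Bernstein's inequality (or Laurent--Massart for $\chi^2$ tails) applies. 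A union bound over the two statistics then finishes this part.
\end{itemize}

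\textbf{Lower bound.} Fix a valid monotone $\mathcal A$ and suppose $n<(c_1\sigma/(\sqrt\pi\tau))^2$, so that $a_n<1/\sqrt\pi$. Since $P_k$ admits an improving policy, the only outputs counted as correct are improving partial policies. Hence
\begin{align*}
\hat{\mathbb P}[\mathcal A\neq\,??]\le \text{FPR}+\hat{\mathbb P}[\mathcal A\text{ outputs an improving monotone }\hat\pi]\le \delta+\eta_k.
\end{align*}
If I show $\eta_k\to0$, then $\text{FNR}\ge 1-\delta-\eta_k>\delta$ for $\delta<1/4$ and large $k$, which gives the bound.

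By Definition~\ref{def:monotone}, the set $\{x:\hat\pi(x)=1\}$ is an upper set in $\hat s_x$, i.e.\ of the form $\{x:\hat s_x\ge u\}$ or $\{x:\hat s_x>u\}$. Improving against the baseline $\pi_0$ requires $\sum_{x:\hat\pi(x)=1}\tau_x>0$. Every treated type outside $\mathcal{X}_1$ has $\tau_x=-\tau$, so this needs
\begin{align*}
N_1(u):=\#\{x\in\mathcal{X}_1:\hat s_x\ge u\}>\#\{x\notin\mathcal{X}_1:\hat s_x\ge u\}.
\end{align*}
It therefore suffices to show that, with probability $1-o(1)$, for every threshold $u\le \max_{x\in\mathcal{X}_1}\hat s_x$ we have $N_0(u):=\#\{x\in\mathcal{X}_0:\hat s_x\ge u\}\ge N_1(u)$. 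Thresholds above the $\mathcal{X}_1$-maximum treat no $\mathcal{X}_1$ type and so cannot improve.

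\begin{itemize}
\item With high probability $\max_{\mathcal{X}_1}\hat s_x\le \sqrt{2\log m}+a_n+1\le u^*:=\sqrt{2(1-\epsilon)\log k}+2$.
\item Since $n\le k^{\epsilon/2}/\pi$, the $\mathcal{X}_0$ statistics are $\mathcal N(-o(1),1)$. For $u\le u^*$, $\mathbb E N_0(u)\approx k\bar\Phi(u)$, while
\begin{align*}
\mathbb E N_1(u)=m\bar\Phi(u-a_n)\le m\,e^{a_nu+O(1)}\bar\Phi(u)=k^{-\epsilon+o(1)}\cdot k\bar\Phi(u).
\end{align*}
The key inequality is $e^{a_nu^*}=e^{O(\sqrt{\log k})}=k^{o(1)}$.
\item At $u=u^*$ both expectations are still polynomially large: $k\bar\Phi(u^*)\gtrsim k^{\epsilon-o(1)}$.
\end{itemize}

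The main obstacle is making this comparison uniform over $u$, including the extreme tail where counts are small. I would try the following first. Discretize $u$ on a grid of mesh $O(1/\log k)$ in $(-\infty,u^*]$. Apply multiplicative Chernoff bounds to $N_0$ and $N_1$ at each grid point; this is valid since the expected counts are $\ge k^{\epsilon/2}$. Then use monotonicity of both counts in $u$ to interpolate between grid points, at the cost of a constant factor that is absorbed by the $k^{-\epsilon+o(1)}$ gap.

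The $\mathcal{X}_{-1}$ types only add to the right-hand side of the improving condition, so they can be ignored. The explicit constant $1/\sqrt\pi$ only enters through requiring $a_n=O(1)$, which is all that $e^{a_nu^*}=k^{o(1)}$ needs.
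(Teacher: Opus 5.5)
\textbf{The gap is in the upper bound: the $\hat S_0$ step fails as written.} The $\mathcal{X}_{-1}$ contribution you isolate, $L\tau^2k^{-2\epsilon/3}/\sigma^2$, tends to $0$ as $k\to\infty$. So it cannot by itself exceed $S_0^\delta\to\tfrac12$, whatever $L>8\log(4/\delta)$ is. What is missing is that the other $k-L$ types supply the $\tfrac12$ baseline, and that baseline carries a deficit from $\mathcal{X}_1$. For those types $\hat s_x\sim\mathcal{N}(a_1,1)$, so $\mathbf{1}\{\hat s_x<0\}$ fires with probability below $\tfrac12$. A type with mean $\mu$ contributes $g(-\mu)$ to $k\,\mathbb{E}\hat S_0$, so with $s_{-1}=\tau k^{1/2-\epsilon/3}/\sigma$ your expansion gives
\[
\mathbb{E}\hat S_0=\frac12-\sqrt{\frac{2}{\pi}}\,\frac{m a_1}{k}\bigl(1+o(1)\bigr)+\frac{L s_{-1}^2}{k}\bigl(1+o(1)\bigr),\qquad \frac{m a_1}{k}\asymp k^{-5\epsilon/4},\quad \frac{L s_{-1}^2}{k}=\frac{L\tau^2}{\sigma^2}k^{-2\epsilon/3}.
\]
The real claim is that the $\mathcal{X}_{-1}$ surplus beats both the $\mathcal{X}_1$ deficit and the $O(\sqrt{\log(1/\delta)/k})$ slack. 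That slack appears in $S_0^\delta$ and in the fluctuations. The claim holds because $2\epsilon/3<5\epsilon/4$ and $2\epsilon/3<1/2$, and this comparison is what the exponent of $c_{-1}$ is chosen for. (The paper drops the $\mathcal{X}_1$ types and faces the cruder deficit $m/(2k)\asymp k^{-\epsilon}$, which is still beaten.)

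For concentration, handle the two parts separately. Treat the $\mathcal{X}_0\cup\mathcal{X}_1$ part by Bernstein as in your $\hat S_1$ bullet. Bound the $\mathcal{X}_{-1}$ part below by $\frac{L}{k}\bigl(s_{-1}-\sqrt{2\log(4L/\delta)}\bigr)^2$, which holds with probability $\ge 1-\delta/4$. In this accounting $L\ge 1$ suffices. The hypothesis $L>8\log(4/\delta)$ is needed only in the paper's Hoeffding bound (Lemma~\ref{lem:existencemain}). There an $\mathcal{X}_{-1}$ summand gets a proxy of order $s_{-1}^2$, and $\frac{Ls_{-1}^2}{2k}>\frac{s_{-1}^2}{k}\sqrt{2L\log(4/\delta)}$ is exactly $L>8\log(4/\delta)$. (In $\hat S_1$ you likewise omit the $L/(2k)$ loss of the $\mathcal{X}_{-1}$ types against the $\tfrac12$ baseline; this is harmless for fixed $L$.) Once repaired, your upper bound is the paper's argument (Lemma~\ref{lem:existencemain} with Corollary~\ref{cor:existencealternate}), using exact expansions and Bernstein instead of Hoeffding.

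\textbf{Lower bound: correct, and a different route that sidesteps a flaw in the paper's argument.} The paper fixes clusters, each with one $x_1^j\in\mathcal{X}_1$ and $h\approx k^{\epsilon}$ types of $\mathcal{X}_0$. It argues that with probability $\ge\tfrac12$ every $x_1^j$ is beaten in $\hat\tau_x/\sigma_x$ by an $\mathcal{X}_0$ type of its own cluster. Your condition $N_0(u)\ge N_1(u)$ for all $u\le\max_{\mathcal{X}_1}\hat s_x$ is, by Hall's theorem, equivalent to an injection $\mathcal{X}_1\to\mathcal{X}_0$ that does not decrease $\hat s$ and is chosen after seeing the data. That flexibility matters. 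The paper's bound $1-(\tfrac12+\tfrac1\pi)^h$ treats the $h$ comparisons as independent, although all of them involve $\hat\tau_{x_1^j}$. In fact a fixed cluster fails whenever $\hat s_{x_1^j}\gtrsim\sqrt{2\log h}$, which has probability $k^{-\epsilon-o(1)}$. Over $m=\lfloor k^{1-\epsilon}\rfloor$ independent clusters, this sends the probability of the paper's event $E$ to $0$. Your pooled count instead pits those extreme $\mathcal{X}_1$ types against all $\approx k$ types of $\mathcal{X}_0$, of which $k^{\epsilon-o(1)}$ still lie above $u^*$. The tilt bound $e^{a_nu^*}=k^{o(1)}$ then controls the $\mathcal{X}_1$ counts uniformly. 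Your route also gives $\text{FNR}\ge 1-\delta-o(1)$, hence the conclusion for every $\delta<\tfrac12$.

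Two small repairs are needed. First, run the grid on $[0,u^*]$; for $u\le 0$, use $N_0(0)\ge m$ w.h.p. Second, $\mathbb{E}N_1(u^*)$ is only $k^{o(1)}$, not polynomially large. So bound $N_1$ by the upper tail $\Pr[N_1\ge\lambda]\le(e\,\mathbb{E}N_1/\lambda)^{\lambda}$ at $\lambda\asymp\mathbb{E}N_0$, rather than by a relative-deviation Chernoff bound.
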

In other words, there exists a problem $P_k$ such that the sample complexity for the policy existence problem is $n=1$, whereas the sample complexity for any monotone algorithm for the improving policy problem is at least \graceedit{$O(k^{\epsilon/2})$} (where $\epsilon\in\mathbb{R}_{>0}$ is a constant). Thus, this result establishes a partial gap between the policy existence and improving policy problems.

\section{Proof of Theorem~\ref{thm:gap}}
\label{sec:gap:proofs}

We first prove a general setting under which a gap exists, but with a complicated condition on the instance $P$; Theorem~\ref{thm:gap} is a consequence of this general analysis. First, we prove a general concentration inequality that will help prove concentration of our test statistic.

\begin{lemma}
\label{lem:noncentralcensoredchisquared}
Let $Z=k^{-1}\sum_{i=1}^kI_iX_i^2$, where $X_i\sim\mathcal{N}(0,1)$ i.i.d. and $I_i=\mathbf{1}(X_i>0)$; denote the measure of $\{X_i\}_{i=1}^k$ by $\mathbb{P}=\mathcal{N}(0,1)^k$. Then, we have
\begin{align*}
\mathbb{P}\left[Z<\frac{1}{2}+3\sqrt{\frac{\log(2/\delta)}{k}}+\frac{\log(2/\delta)}{k}\right]
&\ge1-\delta \\
\mathbb{P}\left[Z>\frac{1}{2}-3\sqrt{\frac{\log(2/\delta)}{k}}\right]
&\ge1-\delta.
\end{align*}
\end{lemma}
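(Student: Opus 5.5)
The plan is to treat $Z$ as an average of i.i.d.\ nonnegative variables $W_i=I_iX_i^2$ and apply a Bernstein/sub-exponential (chi-squared type) concentration argument in the style of Laurent--Massart. First compute the moments. $\mathbb{E}[W_i]=\mathbb{E}[X_i^2\mathbf{1}(X_i>0)]=1/2$ by symmetry, and $\mathbb{E}[W_i^2]=\mathbb{E}[X_i^4]/2=3/2$, so $\mathrm{Var}(W_i)=3/2-1/4=5/4$. Hence $\mathbb{E}[Z]=1/2$, and the two statements are the upper and lower tail bounds at level $\delta$ each; the $\log(2/\delta)$ leaves some slack.

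\textbf{Upper tail.} Bound the log-moment generating function. For $0<\lambda<1/2$,
\begin{align*}
\mathbb{E}[e^{\lambda W_i}]=\tfrac12+\tfrac12(1-2\lambda)^{-1/2},
\end{align*}
since half the mass sits at $W_i=0$ and the other half is distributed as $\chi^2_1$. I would then show $\log\mathbb{E}[e^{\lambda(W_i-1/2)}]\le \frac{c\lambda^2}{1-2\lambda}$ for a suitable constant $c$ (e.g.\ $c\le 1$), which is the sub-gamma condition with variance factor $v=2c$ and scale $2$. Summing over $i$ and applying the standard sub-gamma Chernoff bound $\mathbb{P}[S-\mathbb{E}S\ge\sqrt{2vt}+ct']\le e^{-t}$ with $t=\log(2/\delta)$, then dividing by $k$, gives a deviation of order $\sqrt{v\log(2/\delta)/k}+2\log(2/\delta)/k$. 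The constants must then be fit to ``$3\sqrt{\cdot}+\log/k$''. The scale coefficient $1$ on the linear term is the delicate part, since naive sub-gamma constants would give $2\log/k$.

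To fix this, I would compare with the Laurent--Massart bound for $\chi^2_k$: $\sum X_i^2\le k+2\sqrt{kt}+2t$. A cleaner route is to condition on $N=\sum_iI_i$. Given $N$, $kZ\sim\chi^2_N$ because of the independence of sign and magnitude, so Laurent--Massart applies: $kZ\le N+2\sqrt{Nt}+2t$ with probability at least $1-e^{-t}$. Hoeffding gives $N\le k/2+\sqrt{kt/2}$. Combining these with $N\le k$ yields a bound of order $1/2+(1/\sqrt2+2)\sqrt{t/k}+2t/k$. The coefficient of $\sqrt{t/k}$ is at most $3$, but the linear term is $2t/k$. So the main obstacle is matching the exact constant $t/k$. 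I would try splitting the failure probability as $\delta/2$ each and using $t=\log(2/\delta)$ in both pieces, absorbing part of the linear term into the $\sqrt{\cdot}$ term when $t\le k$. If $t>k$, the stated bound is weak enough to handle separately, since then one can use the crude bound $kZ\le\chi^2_k$. If the constants still fail, I would settle for the Bernstein form with $\mathrm{Var}=5/4$ and range handled via the sub-gamma MGF, accepting slightly adjusted constants.

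\textbf{Lower tail.} This is easier, since $W_i\ge0$. For nonnegative variables, $\log\mathbb{E}[e^{-\lambda W_i}]\le -\lambda\mathbb{E}W_i+\lambda^2\mathbb{E}[W_i^2]/2$. This gives $\mathbb{P}[Z\le 1/2-s]\le\exp(-ks^2/(2\cdot 3/2))=\exp(-ks^2/3)$. Setting $s=\sqrt{3\log(1/\delta)/k}$, which is at most $3\sqrt{\log(2/\delta)/k}$, finishes the argument with no linear term needed.
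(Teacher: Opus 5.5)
Your lower-tail argument is correct, and it is cleaner than the paper's, which just reruns the conditioning argument: the one-sided Bernstein bound for nonnegative $W_i$ with $\mathbb{E}[W_i^2]=3/2$ gives $e^{-ks^2/3}$ directly.

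For the upper tail, your conditioning route (given $N=\sum_i I_i$, $kZ\sim\chi^2_N$; Hoeffding on $N$; Laurent--Massart on $\chi^2_N$) is exactly the paper's route. The gap is in your patch for the linear constant. Along that route the slack in the square-root coefficient is only $3-2-1/\sqrt{2}\approx0.29$. Absorbing the extra $t/k$ therefore needs $\sqrt{t/k}\le 1-1/\sqrt{2}$, i.e.\ $t\lesssim 0.086k$, not all $t\le k$. The crude bound $kZ\le\chi^2_k$ also fails: it gives $Z<1+2u+2u^2$ with $u=\sqrt{t/k}$, which exceeds the target $\tfrac12+3u+u^2$ for every $u$, since the difference is $(u-\tfrac12)^2+\tfrac14$.

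More fundamentally, no patch can work once $t$ is large relative to $k$, because there the first inequality is false. The case $t>k$ is not ``weak enough''; it is where the statement breaks. Since $Z\ge k^{-1}I_1X_1^2$, we have $\mathbb{P}[Z\ge\theta]\ge\mathbb{P}[X_1\ge\sqrt{k\theta}]$. With $k\theta=k/2+3\sqrt{kt}+t$, this is $e^{-t/2-O(\sqrt{t})}$ for fixed $k$ as $t\to\infty$, which eventually exceeds $\delta=2e^{-t}$. Concretely, for $k=1$ and $t=22$, $\mathbb{P}[X_1\ge\sqrt{36.57}]\approx7.4\times10^{-10}>2e^{-22}\approx5.6\times10^{-10}$. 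The $\chi^2_1$ tail has exponential scale $2$, so the linear coefficient must be $2$, exactly as your computation found; your fallback of adjusting the constant is the correct conclusion, not a concession. The paper reaches coefficient $1$ only because its Laurent--Massart lemma (Lemma~\ref{lem:lm}) is misstated, with $\epsilon/k$ in place of $2\epsilon/k$ (e.g.\ $\mathbb{P}[\chi^2_1\ge 25]\approx5.7\times10^{-7}>e^{-16}$).

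What you can honestly prove is one of two things. The first is the corrected bound with $2\log(2/\delta)/k$; your conditioning argument gives this, with square-root coefficient $2+1/\sqrt{2}\le3$. The second is the stated bound under the restriction $\log(2/\delta)\le k$, and for this your sub-gamma route works. With $s=(1-2\lambda)^{-1/2}$, the inequality $\log\mathbb{E}[e^{\lambda(W_i-1/2)}]\le\lambda^2/(1-2\lambda)$ reduces to $\log\frac{1+s}{2}\le\frac{s^2-1}{4}$, which holds for $s\ge1$. This yields $Z<\frac12+2\sqrt{t/k}+2t/k$ with probability at least $1-e^{-t}$, and that lies below the stated threshold precisely when $t\le k$. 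The restricted version is all Theorem~\ref{thm:gap} needs, since there $\delta$ is fixed and $k\to\infty$. The validity claim in Corollary~\ref{cor:existencevalid} for arbitrary $\delta$, however, would need the threshold $S_t^{\delta}$ adjusted.
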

\begin{proof}
Letting $Z_i=X_i^2$ note that $Z_i$ and $I_i$ are independent, since $X_i^2$ does not depend on the sign of $X_i$. Also, $Z_i\sim\chi_1^2$ follows a chi-squared distribution, and $I_i\sim\text{Bernoulli}(1/2)$ follows a Bernoulli distribution. Thus, letting $\mathbb{P}'=(\chi_1^2\times\text{Bernoulli}(1/2))^k$, then $\{(Z_i,I_i)\}_{i=1}^k$ has distribution $\mathbb{P}'$. Let $\vec{i}\in\mathbb{B}^k$ enumerate the possible values of $\vec{I}$; note that $\vec{I}\sim\text{Bernoulli}(1/2)^k$, so $\|\vec{I}\|_1\sim\text{Binomial}(k,1/2)$ follows a Binomial distribution. Then, given any $\epsilon\in\mathbb{R}$, we have
\begin{align*}
\mathbb{P}'[Z\ge\epsilon]
=\sum_{\ell=0}^k\text{Binomial}(\ell;k,1/2)\sum_{\substack{\vec{i}\in\mathbb{B}^k,\\\|\vec{i}\|_1=\;\ell}}\mathbb{P}'\left[Z\ge\epsilon\Bigm\vert\vec{I}=\vec{i}\right],
\end{align*}
where $\text{Binomial}(\ell;k,p)$ is the probability mass function of the Binomial distribution. By Lemma~\ref{lem:binom},
\begin{align*}
\sum_{\ell=h_{\delta}+1}^k\text{Binomial}(\ell;k,1/2)\le\frac{\delta}{2}
\qquad\text{where}\qquad
h_{\delta}=\left\lfloor\frac{k}{2}+\sqrt{\frac{k\log(2/\delta)}{2}}\right\rfloor,
\end{align*}
Thus, we have
\begin{align*}
\mathbb{P}'[Z\ge\epsilon]
\le\frac{\delta}{2}+\sum_{\ell=0}^{h_{\delta}}\text{Binomial}(\ell;k,1/2)\sum_{\substack{\vec{i}\in\mathbb{B}^k,\\\|\vec{i}\|_1=\;\ell}}\mathbb{P}'\left[Z\ge\epsilon\Bigm\vert\vec{I}=\vec{i}\right],
\end{align*}
Conditioned on $\vec{I}=\vec{i}$ with $\|\vec{i}\|_1=\ell$, $Z\sim\chi_{\ell}^2$ follows a chi-squared distribution, so by Lemma~\ref{lem:lm},
\begin{align*}
\mathbb{P}'\left[\frac{kZ}{\ell}\ge1+2\sqrt{\frac{\log(2/\delta)}{\ell}}+\frac{\log(2\delta)}{\ell}\Biggm\vert\vec{I}=\vec{i}\right]\le\frac{\delta}{2}.
\end{align*}
Thus, taking $\epsilon$ appropriately, by a union bound, we have
\begin{align*}
\mathbb{P}'\left[Z\ge\frac{h_{\delta}}{k}+\frac{2}{k}\sqrt{h_{\delta}\log(6/\delta)}+\frac{\log(2/\delta)}{k}\right]
\le\delta,
\end{align*}
and plugging in for $h_{\delta}$, we have
\begin{align*}
\mathbb{P}'\left[Z\ge\frac{1}{2}+3\sqrt{\frac{\log(2/\delta)}{k}}+\frac{\log(2/\delta)}{k}\right]
\le\delta.
\end{align*}
The second inequality follows by essentially the same argument.
\end{proof}
Next, we prove a helper lemma that bounds a Gaussian integral for the following result.
\begin{lemma}
\label{lem:stripexpectation}
Given $s\in\mathbb{R}$ and $r\in\{-1,1\}$, define
\begin{align*}
\Delta(s,r)
=r\int_0^sx^2e^{-(x-s)^2/2}dx.
\end{align*}
Then, we have
\begin{alignat*}{3}
&&\Delta(s)
&\ge\frac{|s|^3}{8}\cdot e^{-s^2/8}
&&\qquad\text{if }(r=1\wedge s>0)\vee(r=-1\wedge s\le0) \\
&&\Delta(\mu)
&\ge-|s|^3
&&\qquad\text{if }(r=1\wedge s\le0)\vee(r=-1\wedge s>0).
\end{alignat*}
\end{lemma}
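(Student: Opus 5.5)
The plan is a direct elementary estimate of the integral, handled separately in the two sign configurations. The statement writes $\Delta(s)$ and $\Delta(\mu)$; I read both as $\Delta(s,r)$.

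First I would reduce to an integral over an interval with ordered endpoints. If $s>0$, then $\int_0^s x^2e^{-(x-s)^2/2}\,dx$ is an ordinary integral over $[0,s]$ of a nonnegative integrand. If $s\le0$, then $\int_0^s=-\int_s^0$, again with a nonnegative integrand. Hence in both ``aligned'' cases, $(r=1\wedge s>0)$ and $(r=-1\wedge s\le0)$, we have
\begin{align*}
\Delta(s,r)=\int_{I_s}x^2e^{-(x-s)^2/2}\,dx\ge0,
\end{align*}
where $I_s$ is the interval between $0$ and $s$, of length $|s|$. In both ``misaligned'' cases the same display holds with a minus sign in front.

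For the aligned case, I would keep only the half of $I_s$ nearest to $s$, namely the points $x$ between $s/2$ and $s$. On this sub-interval two pointwise bounds hold. First, $|x|\ge|s|/2$, so $x^2\ge s^2/4$. Second, $|x-s|\le|s|/2$, so $e^{-(x-s)^2/2}\ge e^{-s^2/8}$. The sub-interval has length $|s|/2$, and the integrand is nonnegative on the discarded half. Therefore
\begin{align*}
\Delta(s,r)\ge\frac{|s|}{2}\cdot\frac{s^2}{4}\cdot e^{-s^2/8}=\frac{|s|^3}{8}e^{-s^2/8}.
\end{align*}
When $s=0$ both sides vanish, so the bound still holds trivially.

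For the misaligned case, I only need an upper bound on the nonnegative integral $\int_{I_s}x^2e^{-(x-s)^2/2}\,dx$. On $I_s$ we have $x^2\le s^2$ and $e^{-(x-s)^2/2}\le1$, and the interval length is $|s|$. So the integral is at most $|s|^3$, which gives $\Delta(s,r)\ge-|s|^3$.

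There is no real obstacle here. The only step that needs care is the orientation and sign bookkeeping when $s\le0$, namely checking that $r=-1$ exactly cancels the reversal of the integration limits. Choosing the sub-interval near $s$ for the lower bound is what produces the specific constants $1/8$ and $e^{-s^2/8}$.
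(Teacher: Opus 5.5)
Your proof is correct and follows the same argument as the paper. In the aligned case you keep only the half-interval between $s/2$ and $s$, and in the misaligned case you bound the integrand by $s^2$ over an interval of length $|s|$. Your explicit sign bookkeeping for $s\le 0$ is cleaner than the paper's: it gives the correct bound $-|s|^3$, whereas the paper writes $-s^3$ for $s<0$ and leaves the $r=-1$ cases as ``similar.''
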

\begin{proof}
If $r=1$ and $s\ge0$, then
\begin{align*}
\Delta(r,s)
\ge\int_{s/2}^sx^2e^{-(x-s)^2/2}dx
\ge\frac{s^3}{8}\cdot e^{-s^2/8}.
\end{align*}
If $r=1$ and $s<0$, then
\begin{align*}
\Delta(r,s)=-\int_s^0x^2e^{-(x-s)^2/2}dx
&\ge-s^3.
\end{align*}
The case $r=-1$ follows similarly.
\end{proof}
Next, we prove our key lemma, which says our test statistic concentrates to its mean.
\begin{lemma}
\label{lem:existencemain}
Define
\begin{align*}
\hat{S}_t&=\frac{1}{nk}\sum_{x\in\mathcal{X}}\hat\iota_{x,t}\hat{s}_x^2
\qquad\text{where}\qquad
\hat{s}_x=\frac{\hat\tau_x}{\sigma_x/\sqrt{n}},
\qquad
\hat\iota_{x,t}=\begin{cases}
\mathbf{1}(\hat\tau_x>0)&\text{if }t=1 \\
\mathbf{1}(\hat\tau_x<0)&\text{if }t=0,
\end{cases}
\end{align*}
as well as $s_x=\tau_x/(\sigma_x/\sqrt{n})$ and
\begin{align*}
\iota_{x,t}=\begin{cases}
1&\text{if }(t=1\wedge s_x>0)\vee(t=0\wedge s_x<0) \\
0&\text{if }(t=1\wedge s_x\le0)\vee(t=0\wedge s_x\ge0).
\end{cases}
\end{align*}
Then, we have the following:
\begin{itemize}
\item \textbf{Upper bound:} Suppose that $\iota_{x,t}=0$ for all $x\in\mathcal{X}$, and define
\begin{align*}
S_t^{\text{hi}}=\frac{1}{2n}+\frac{3}{n}\sqrt{\frac{\log(2/\delta)}{k}}+\frac{\log(2/\delta)}{nk}.
\end{align*}
Then, we have $\hat{\mathbb{P}}_{P,n}[\hat{S}_t<S_t^{\text{hi}}]\ge1-\delta$.
\item \textbf{Lower bound:} Define
\begin{align*}
S_{x,t}^{\text{lo}}&=\frac{1}{2}+r_ts_x\sqrt{\frac{2}{\pi}}
+\frac{s_x^2}{2}
+\begin{cases}
|s_x|^3e^{-s_x^2/8}/(8\sqrt{2\pi})&\text{if }\iota_{x,t}=1 \\
-|s_x|^3/\sqrt{2\pi}&\text{if }\iota_{x,t}=0
\end{cases},
\end{align*}
and $\tilde{\mathcal{X}}_t=\{x\in\mathcal{X}\mid S_{x,t}^{\text{lo}}>0\}$. Furthermore, define
$\nu_t=k^{-1}\sum_{x\in\tilde{\mathcal{X}}_t}(2|s_x|+s_x^2)^2$, and
\begin{align*}
S_t^{\text{lo}}&=\left(\frac{1}{nk}\sum_{x\in\tilde{\mathcal{X}}_t}S_{x,t}^{\text{lo}}\right)-\frac{3}{n}\sqrt{\frac{\log(4/\delta)}{k}}
-\frac{1}{n}\sqrt{\frac{2\nu_t\log(4/\delta)}{k}},
\end{align*}
Then, we have $\hat{\mathbb{P}}_{P,n}[\hat{S}_t>
S_t^{\text{lo}}]\ge1-\delta$.
\end{itemize}
\end{lemma}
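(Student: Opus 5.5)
Throughout, I use the balanced sample: $\hat\tau_x\sim\mathcal{N}(\tau_x,\sigma_x^2/n)$ independently across $x$, with $\sigma_x^2$ read as the variance of the treatment-effect difference, so that $\hat{s}_x=s_x+X_x$ with $X_x\sim\mathcal{N}(0,1)$ i.i.d. Then $n\hat{S}_t=k^{-1}\sum_x W_{x,t}$, where $W_{x,t}=\hat\iota_{x,t}\hat{s}_x^2$ are independent and nonnegative. I treat $t=1$ in detail. The case $t=0$ follows by the reflection $X\mapsto -X$, $s\mapsto -s$, which is what the sign $r_t\in\{-1,1\}$ in $S_{x,t}^{\text{lo}}$ and in Lemma~\ref{lem:stripexpectation} encodes (so $r_1=1$ and $r_0=-1$).

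\textbf{Upper bound.} I will use a coupling and monotonicity argument. The map $y\mapsto\mathbf{1}(y>0)y^2$ is nondecreasing. When $\iota_{x,1}=0$ we have $s_x\le0$, so $W_{x,1}=\mathbf{1}(s_x+X_x>0)(s_x+X_x)^2\le\mathbf{1}(X_x>0)X_x^2$ pointwise. Summing gives $n\hat{S}_1\le k^{-1}\sum_x\mathbf{1}(X_x>0)X_x^2$. The first inequality of Lemma~\ref{lem:noncentralcensoredchisquared} then bounds the right-hand side by $\frac12+3\sqrt{\log(2/\delta)/k}+\log(2/\delta)/k$ with probability at least $1-\delta$. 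Dividing by $n$ gives exactly $S_1^{\text{hi}}$.

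\textbf{Lower bound, step 1 (means).} I compute $\mathbb{E}[W_{x,1}]=\int_{-s}^\infty(s+u)^2\phi(u)\,du$ by splitting the range at $u=0$.
\begin{itemize}
\item The integral over $[0,\infty)$ equals $\frac12+s\sqrt{2/\pi}+\frac{s^2}{2}$.
\item The integral over $[-s,0]$, after the substitution $x=s+u$, equals $(2\pi)^{-1/2}\Delta(s,1)$.
\end{itemize}
Lemma~\ref{lem:stripexpectation} then gives $\mathbb{E}[W_{x,1}]\ge S_{x,1}^{\text{lo}}$ in both cases, $\iota_{x,1}=1$ and $\iota_{x,1}=0$. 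Since $W_{x,t}\ge0$, I discard the types outside $\tilde{\mathcal{X}}_t$, so that $n\hat{S}_t\ge k^{-1}\sum_{x\in\tilde{\mathcal{X}}_t}W_{x,t}$. It remains to show that this sum does not fall far below its mean.

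\textbf{Lower bound, step 2 (concentration).} I expand $W_{x,1}=\mathbf{1}(X_x>-s_x)X_x^2+\mathbf{1}(X_x>-s_x)(2s_xX_x+s_x^2)$.
\begin{itemize}
\item For the first, chi-square-like piece, I aim to compare it with $\mathbf{1}(X_x>0)X_x^2$ and apply the second inequality of Lemma~\ref{lem:noncentralcensoredchisquared} at level $\delta/2$. This produces the $3\sqrt{\log(4/\delta)/k}$ term.
\item For the second piece, which is affine in $X_x$ times an indicator, I aim for a sub-Gaussian-type lower tail with proxy $(2|s_x|+s_x^2)^2$, via a Hoeffding/Chernoff bound at level $\delta/2$. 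This produces the $\sqrt{2\nu_t\log(4/\delta)/k}$ term.
\end{itemize}
A union bound over the two events, followed by division by $n$, yields $\hat{S}_t>S_t^{\text{lo}}$ with probability at least $1-\delta$.

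\textbf{Main obstacle.} The hard part is the bookkeeping in the first piece of step 2. The censoring threshold is $-s_x$, not $0$, so $\mathbf{1}(X_x>-s_x)X_x^2$ differs from the censored chi-square of Lemma~\ref{lem:noncentralcensoredchisquared} on the strip between $0$ and $-s_x$. That strip's mean contribution is exactly what $\Delta$ accounts for, while its fluctuation must be absorbed into the $\nu_t$ term.

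What I would try first is to group the strip term with the second piece. On the strip, $X_x^2\le s_x^2$, so the combined non-chi-square part is a function of $X_x$ bounded in magnitude by $(2|s_x|+s_x^2)(1+|X_x|)$. Its lower tail can be controlled by a one-sided Bernstein/Chernoff bound, which needs only a variance-type proxy because we only need the lower deviation. If the constants do not match exactly, I would instead center the sum only over $\tilde{\mathcal{X}}_t$ and use the one-sided bound for nonnegative variables, $\mathbb{P}[\sum(W-\mathbb{E}W)<-u]\le\exp(-u^2/(2\sum\mathbb{E}W^2))$. I would then check that the second moments are dominated by the terms making up $\nu_t$ together with the chi-square contribution.
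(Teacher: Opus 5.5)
Your upper bound and your computation of $\mathbb{E}[W_{x,t}]$ via Lemma~\ref{lem:stripexpectation} are the paper's. Your first concentration plan is also the paper's: moving the strip term into the remainder, so the chi-square piece is exactly $\mathbf{1}(X_x>0)X_x^2$, is literally its split $\hat S_t'=\tilde S_t+\tilde S_t'$. Where you differ is in controlling the remainder. The paper gets the exact proxy by adding sub-Gaussian parameters: $\mathbf{1}(X_x>0)X_x$ is $1$-sub-Gaussian, $\mathbf{1}(X_x>0)$ is $\tfrac12$-sub-Gaussian, and the strip term, lying in an interval of length $s_x^2$, is $\tfrac{s_x^2}{2}$-sub-Gaussian. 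This gives $2|s_x|+s_x^2$, and then Lemma~\ref{lem:hoeffding} applies. Your magnitude bound $(2|s_x|+s_x^2)(1+|X_x|)$ only gives sub-Gaussianity up to an unspecified constant. Your remark that a variance proxy suffices for the lower tail needs the summand bounded below, which fails for $s_x<0$: for $t=1$ the remainder is then $\le 0$ and unbounded below. If you keep this route, note instead that the remainder equals $g(X_x+s_x)-g(X_x)$ with $g(y)=\mathbf{1}(y>0)y^2$. This is $2|s_x|$-Lipschitz, so Gaussian concentration gives proxy $4s_x^2\le(2|s_x|+s_x^2)^2$. Your fallback is a genuinely different route, and it does close. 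Apply the lower-tail bound for independent nonnegative variables directly to $\{W_{x,t}\}_{x\in\tilde{\mathcal{X}}_t}$ with a single event at level $\delta$. The check you left open is $\mathbb{E}[W_{x,t}^2]\le s_x^4+6s_x^2+3$: for $t=1$, bound $(X_x+s_x)_+^4$ by $(X_x+s_x)^4$ if $s_x\ge 0$ and by $(X_x)_+^4$ if $s_x<0$; $t=0$ follows by reflection. Add $s_x^4+6s_x^2+3\le(2|s_x|+s_x^2)^2+\tfrac92$, which holds since $2a^2-4a^3\le\tfrac{2}{27}$. Hence $\sqrt{2\sum_{x\in\tilde{\mathcal{X}}_t}\mathbb{E}[W_{x,t}^2]}\le\sqrt{9k+2k\nu_t}\le 3\sqrt{k}+\sqrt{2k\nu_t}$. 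With $\log(1/\delta)\le\log(4/\delta)$ and $\mathbb{E}[W_{x,t}]\ge S_{x,t}^{\text{lo}}$, this yields exactly $\hat S_t>S_t^{\text{lo}}$. Your route needs no union bound, does not use Lemma~\ref{lem:noncentralcensoredchisquared} on the lower side, and avoids the sub-Gaussian bookkeeping. The paper's split instead makes visible where each error term comes from: the $\tfrac12$-per-type chi-square fluctuation versus the $s_x$-dependent part.
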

\begin{proof} We prove each part separately.



\paragraph{Upper bound.}

Define $\tilde{s}_x=\hat{s}_x-s_x$ and
\begin{align*}
\tilde\iota_{x,t}=\begin{cases}
\mathbf{1}(\tilde{s}_x>0)&\text{if }t=1 \\
\mathbf{1}(\tilde{s}_x<0)&\text{if }t=0.
\end{cases}
\end{align*}
Since $\iota_{x,t}=0$ for all $x\in\mathcal{X}$, it is easy to check that $\hat\iota_{x,t}\hat{s}_x^2\le\tilde\iota_{x,t}\tilde{s}_x^2$ for all $x\in\mathcal{X}$. Thus, defining $\tilde{S}_t=k^{-1}\sum_{x\in\mathcal{X}}\tilde\iota_{x,t}\tilde{s}_x^2$, then by Lemma~\ref{lem:noncentralcensoredchisquared}, we have
\begin{align*}
\hat{\mathbb{P}}_{P,n}\left[\tilde{S}_t<\frac{1}{2n}+3\sqrt{\frac{\log(2/\delta)}{k}}+\frac{\log(2/\delta)}{k}\right]\ge1-\delta,
\end{align*}
as claimed.





\paragraph{Lower bound.}

First, note that $\hat{S}_t\ge\hat{S}_t'$, where $\hat{S}_t'=(nk)^{-1}\sum_{x\in\tilde{\mathcal{X}}_t}\hat{\iota}_{x,t}\hat{s}_x^2$. Next, note that
\begin{align*}
\hat\iota_{x,1}&=\mathbf{1}(\hat{s}_x>0)
=\mathbf{1}(\hat{s}_x>s_x)
+\begin{cases}
\mathbf{1}(0<\hat{s}_x\le s_x)&\text{if }s_x>0 \\
-\mathbf{1}(s_x<\hat{s}_x\le0)&\text{if }s_x\le0
\end{cases} \\
\hat\iota_{x,0}
&=\mathbf{1}(\hat{s}_x<0)
=\mathbf{1}(\hat{s}_x<s_x)
+\begin{cases}
-\mathbf{1}(0\le\hat{s}_x<s_x)&\text{if }s_x\ge0 \\
\mathbf{1}(s_x\le\hat{s}_x<0)&\text{if }s_x<0,
\end{cases}
\end{align*}
Thus, defining
\begin{align*}
\tilde\iota_{x,t}&=\begin{cases}
\mathbf{1}(\hat{s}_x>s_x)&\text{if }t=1 \\
\mathbf{1}(\hat{s}_x<s_x)&\text{if }t=0
\end{cases}
\qquad\text{and}\qquad
\tilde\iota_{x,t}'=\begin{cases}
\mathbf{1}(0<\hat{s}_x\le s_x)&\text{if }t=1\wedge s_x>0 \\
-\mathbf{1}(s_x<\hat{s}_x\le0)&\text{if }t=1\wedge s_x\le0 \\
-\mathbf{1}(0\le\hat{s}_x<s_x)&\text{if }t=0\wedge s_x\ge0 \\
\mathbf{1}(s_x\le\hat{s}_x<0)&\text{if }t=0\wedge s_x<0,
\end{cases}
\end{align*}
then we have $\hat\iota_{x,t}=\tilde{\iota}_{x,t}+\tilde\iota_{x,t}'$. Further defining $\tilde{s}_x=\hat{s}_x-s_x$, along with
\begin{align*}
\tilde{S}_t=\frac{1}{nk}\sum_{x\in\tilde{\mathcal{X}}_t}\tilde\iota_{x,t}\tilde{s}_x^2
\qquad\text{and}\qquad
\tilde{S}_t'=\frac{1}{nk}\sum_{x\in\tilde{\mathcal{X}}_t}(\tilde\iota_{x,t}s_x(2\tilde{s}_x+s_x)+\tilde\iota_{x,t}'(\tilde{s}_x+s_x)^2),
\end{align*}
then we have $\hat{S}_t'=\tilde{S}_t+\tilde{S}_t'$. Finally, we note that
\begin{align*}
\tilde\iota_{x,t}&=\begin{cases}
\mathbf{1}(\tilde{s}_x>0)&\text{if }t=1 \\
\mathbf{1}(\tilde{s}_x<0)&\text{if }t=0.
\end{cases}
\qquad\text{and}\qquad
\tilde\iota_{x,t}'=\begin{cases}
\mathbf{1}(-s_x<\tilde{s}_x\le0)&\text{if }t=1\wedge s_x>0 \\
-\mathbf{1}(0<\tilde{s}_x\le-s_x)&\text{if }t=1\wedge s_x\le0 \\
-\mathbf{1}(-s_x\le\tilde{s}_x<0)&\text{if }t=0\wedge s_x\ge0 \\
\mathbf{1}(0\le\tilde{s}_x<-s_x)&\text{if }t=0\wedge s_x<0.
\end{cases}
\end{align*}
This way, we have rewritten $\hat{S}_t'$ purely in terms of the random variables $\{\tilde{s}_x\}_{x\in\tilde{\mathcal{X}}_t}$ and the constants $n$, $\tilde{k}_t=|\tilde{\mathcal{X}}_t|$, and $\{s_x\}_{x\in\tilde{\mathcal{X}}_t}$. Note that $\{\tilde{s}_x\}_{x\in\tilde{\mathcal{X}}_t}$ are equal in distribution to i.i.d. standard Gaussian random variables; we denote their measure by $\tilde{\mathbb{P}}_{\tilde{k}_t}=\mathcal{N}(0,1)^{\tilde{k}_t}$. First, we bound $\tilde{S}_t$; by Lemma~\ref{lem:noncentralcensoredchisquared},
\begin{align*}
\tilde{\mathbb{P}}_k\left[\tilde{S}_t>\frac{1}{2n}-\frac{3}{n}\sqrt{\frac{\log(4/\delta)}{k}}\right]
\ge1-\frac{\delta}{2}.
\end{align*}
Next, we bound $\tilde{S}_t'$. Define $r_1=1$ and $r_0=-1$, and $\Delta(r,s)$ as in Lemma~\ref{lem:stripexpectation}. By Lemma~\ref{lem:stripexpectation},
\begin{align*}
\tilde{\mathbb{E}}_{\tilde{k}_t}[\tilde\iota_{x,t}'(\tilde{s}_x+s_x)^2]
=\frac{\Delta(r_t,s_x)}{\sqrt{2\pi}}
\ge\begin{cases}
|s_x|^3e^{-s_x^2/8}/(8\sqrt{2\pi})&\text{if }\iota_{x,t}=1 \\
-|s_x|^3/\sqrt{2\pi}&\text{if }\iota_{x,t}=0.
\end{cases}
\end{align*}
In addition, $\tilde{\mathbb{E}}_{\tilde{k}_t}[\tilde\iota_{x,t}\tilde{s}_x]=r_t/\sqrt{2\pi}$ and $\tilde{\mathbb{E}}_{\tilde{k}_t}[\tilde\iota_{x,t}]=1/2$. Thus, letting
\begin{align*}
\tilde{S}_{x,t}'&=\tilde\iota_{x,t}s_x(2\tilde{s}_x+s_x)+\tilde\iota_{x,t}'(\tilde{s}_x+s_x)^2 \\
S_{x,t}'&=
r_ts_x\sqrt{\frac{2}{\pi}}
+\frac{s_x^2}{2}
+\begin{cases}
|s_x|^3e^{-s_x^2/8}/(8\sqrt{2\pi})&\text{if }\iota_{x,t}=1 \\
-|s_x|^3/\sqrt{2\pi}&\text{if }\iota_{x,t}=0,
\end{cases}
\end{align*}
then $\tilde{\mathbb{E}}_{\tilde{k}_t}[\tilde{S}_{x,t}']
\ge S_{x,t}'$. Note that $\tilde\iota_{x,t}\tilde{s}_x$ is $1$-subgaussian, $\tilde\iota_{x,t}$ is $(1/2)$-subgaussian, and $\tilde\iota_{x,t}'(\tilde{s}_x+s_x)^2$ is $(s_x^2/2)$-subgaussian (since it is bounded in $[0,s_x^2]$); thus, $\tilde{S}_{x,t}'$ is $(2|s_x|+s_x^2)$-subgaussian. As a consequence, letting $S_t'=(nk)^{-1}\sum_{x\in\tilde{\mathcal{X}}_t}S_{x,t}'$, then by Lemma~\ref{lem:hoeffding}, we have
\begin{align*}
\tilde{\mathbb{P}}_{\tilde{k}_t}\left[\tilde{S}_t'>S_t'-\frac{1}{n}\sqrt{\frac{2\nu_t\log(2/\delta)}{k}}\right]\ge1-\frac{\delta}{2}.
\end{align*}
The claim follows.
%
%
%
\end{proof}
Our first corollary uses the previous result to establish validity of Algorithm~\ref{alg:existence}.
\begin{corollary}
\label{cor:existencevalid}
Algorithm~\ref{alg:existence} is valid.
\end{corollary}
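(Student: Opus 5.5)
The plan is to split the false positive rate of Algorithm~\ref{alg:existence} according to which branch produces the output. Fix $P\in\mathcal{P}_k$, $n\in\mathbb{N}$ and $\delta>0$. Algorithm~\ref{alg:existence} fails the validator without abstaining only if one of two events occurs. Either (i) the call to Algorithm~\ref{alg:improving} with level $\delta/2$ returns some $\hat\pi\neq\;??$ and $\mathbf{1}\{\hat\pi\neq0\}$ is wrong, or (ii) that call abstains, the test $\hat S_t>S_t^{\delta}$ passes for both $t\in\mathcal{T}$, and the algorithm returns $1$ although $\Pi_P^{\text{imp}}=\varnothing$. By a union bound it suffices to show that each event has probability at most $\delta/2$.

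\textbf{Event (i).} I would use the reduction $\mathcal{R}_{\text{exist}\to\text{imp}}$ of Proposition~\ref{prop:basicreductions}. Whenever the output of Algorithm~\ref{alg:improving} passes $\mathcal{V}_{\text{imp}}$, the bit $\mathbf{1}\{\hat\pi\neq0\}$ passes $\mathcal{V}_{\text{exist}}$. So event (i) is contained in the false positive event of Algorithm~\ref{alg:improving} at level $\delta/2$. Its probability is therefore at most $\delta/2$, provided Algorithm~\ref{alg:improving} is valid. This validity is presumably established in the proof of Theorem~\ref{thm:improvingupper} in the appendix. If not, I would reprove it directly. By Mill's inequality and a union bound over the $k$ types, with probability at least $1-\delta/2$ every $x$ satisfies $|\hat\tau_x-\tau_x|<T_x^{\delta/2}$, so every nonabstaining assignment $\hat\pi(x)$ has the correct sign. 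On this event there are two cases. If the algorithm returns a policy assigning $0$ to some $x_0$ and $1$ to some $x_1$, then both completions strictly beat the corresponding constant policies, so the policy is improving. If it returns $0$, all $\tau_x$ share a strict sign, so no improving policy exists.

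\textbf{Event (ii).} Here $\Pi_P^{\text{imp}}=\varnothing$, and I would first translate this into the sign hypothesis of Lemma~\ref{lem:existencemain}. The claim is that no improving policy exists iff there is some $t\in\mathcal{T}$ with $\iota_{x,t}=0$ for all $x$, i.e.\ either $\tau_x\le 0$ for all $x$ (take $t=1$) or $\tau_x\ge0$ for all $x$ (take $t=0$). For the converse direction, suppose some $\tau_{x_1}>0$ and some $\tau_{x_0}<0$. Then the partial policy treating only $x_1$ with $1$ and $x_0$ with $0$ improves over both constants. So on event (ii) we may fix such a $t$. Event (ii) requires $\hat S_t>S_t^{\delta}$ for this particular $t$. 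Now apply the upper-bound part of Lemma~\ref{lem:existencemain} with $\delta$ replaced by $\delta/2$. This gives $\hat S_t<S_t^{\text{hi}}$ with probability at least $1-\delta/2$, where $S_t^{\text{hi}}=n^{-1}[\tfrac12+3\sqrt{\log(4/\delta)/k}+\log(4/\delta)/k]$, which is exactly $S_t^{\delta}$. Hence event (ii) has probability at most $\delta/2$.

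\textbf{Main obstacle.} The substantive step is checking that Algorithm~\ref{alg:existence} defines $\hat S_t$ for each $t$ via $\hat\iota_{x,t}$ as in Lemma~\ref{lem:existencemain}; the pseudocode only displays the $t=1$ indicator. One must then confirm that the lemma's upper bound really has the normalization $n^{-1}[\cdots]$. The lemma's proof passes from the scaled statistic to Lemma~\ref{lem:noncentralcensoredchisquared}, and some of its displayed constants omit a factor $1/n$. I would recheck the pointwise domination $\hat\iota_{x,t}\hat s_x^2\le\tilde\iota_{x,t}\tilde s_x^2$ under $\iota_{x,t}=0$. For $t=1$: if $\hat s_x>0$ and $s_x\le0$, then $\tilde s_x=\hat s_x-s_x\ge\hat s_x>0$, so $\tilde s_x^2\ge \hat s_x^2$; the case $t=0$ is symmetric. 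It then follows that $n\hat S_t$ is dominated by the censored chi-squared average of Lemma~\ref{lem:noncentralcensoredchisquared}, which gives the stated threshold exactly.
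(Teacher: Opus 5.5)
Your proof is correct and is essentially the paper's argument. Validity of Algorithm~\ref{alg:improving} at level $\delta/2$ (established in the proof of Theorem~\ref{thm:improvingupper}) controls the first branch, the upper-bound part of Lemma~\ref{lem:existencemain} at level $\delta/2$ (whose threshold is exactly $S_t^{\delta}$) controls the test branch, and a union bound finishes; organizing the union bound by branch rather than by whether an improving policy exists, and using the non-strict condition $\iota_{x,t}=0$ for all $x$, also covers instances with some $\tau_x=0$ and pins down the correct $t$ (namely $t=1$ when all $\tau_x\le0$), which the paper's short write-up glosses over.
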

\begin{proof}
Denote Algorithm~\ref{alg:existence} by $\mathcal{A}$. Consider any instance $P\in\mathcal{P}$. Suppose there exists $x_0,x_1\in\mathcal{X}$ such that $\tau_{x_0}<0$ and $\tau_{x_1}>0$, so an improving policy exists. Then, $\mathcal{A}$ only returns incorrectly if $\hat\pi=0$. By Theorem~\ref{thm:improvingupper}, this happens with probability at most $\delta/2$, so validity follows. Alternatively, suppose that $\tau_x<0$ (resp., $\tau_x>0$) for all $x\in\mathcal{X}$, so no improving policy exists. By Theorem~\ref{thm:improvingupper}, the probability that $\hat\pi\neq\;??\wedge\hat\pi\neq0$ is at most $\delta/2$. Furthermore, by the upper bound in Lemma~\ref{lem:existencemain}, $\hat{\mathbb{P}}_{P,n}[\hat{S}_0<S_0^{\delta}]\ge1-\delta/2$ (resp., $\hat{\mathbb{P}}_{P,n}[\hat{S}_1<S_1^{\delta}]\ge1-\delta/2$). By a union bound, $\mathcal{A}$ incorrectly returns $0$ with probability at most $\delta$.
\end{proof}
Our second corollary establishes a condition on instances $P\in\mathcal{P}$ under which the sample complexity of Algorithm~\ref{alg:existence} is $n=1$.
\begin{corollary}
\label{cor:existencealternate}
We use the notation in Lemma~\ref{lem:existencemain}. Letting $\tilde{\mathcal{Q}}_k^+=\bigcup_{t\in\mathcal{T}}\tilde{\mathcal{Q}}_{k,t}^+$, where
\begin{align*}
\tilde{\mathcal{Q}}_{k,t}^+ = \{P\in\mathcal{P}\mid S_t^{\text{lo}}\ge R\}
\qquad\text{where}\qquad
R=\frac{1}{2}+6\sqrt{\frac{\log(4/\delta)}{k}}+\sqrt{\frac{2\nu_t\log(4/\delta)}{k}}+\frac{\log(4/\delta)}{k},
\end{align*}
then $n(\tilde{\mathcal{Q}}_k^+;\delta)=1$ is a sample complexity upper bound for the policy existence problem.
\end{corollary}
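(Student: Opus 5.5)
The plan is to work at $n=1$ and show that, for any $P\in\tilde{\mathcal{Q}}_k^+$, Algorithm~\ref{alg:existence} outputs something other than $??$ with probability at least $1-\delta$. Validity is already given by Corollary~\ref{cor:existencevalid}, so only the FNR bound $\text{FNR}_1\le\delta$ remains. Since $\tilde{\mathcal{Q}}_k^+=\bigcup_{t}\tilde{\mathcal{Q}}_{k,t}^+$, fix an instance $P$ and a treatment $t$ with $P\in\tilde{\mathcal{Q}}_{k,t}^+$, i.e.\ $S_t^{\text{lo}}\ge R$ at $n=1$. The algorithm abstains only if its call to Algorithm~\ref{alg:improving} returns $??$ and the aggregated test also fails. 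It therefore suffices to lower bound the probability that the aggregated test passes, and I will ignore the first branch, which only helps.

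\textbf{Step 1 (the witnessed treatment $t$).} Apply the lower bound of Lemma~\ref{lem:existencemain} with confidence $\delta/2$ and $n=1$. This gives $\hat S_t>S_t^{\text{lo}}$ with probability at least $1-\delta/2$, where $S_t^{\text{lo}}$ is computed with $\log(8/\delta)$ in place of $\log(4/\delta)$; I will either absorb this change into constants or rerun the lemma with the slack built into $R$. Next, compare $R$ with the threshold $S_t^{\delta}=\tfrac12+3\sqrt{k^{-1}\log(4/\delta)}+k^{-1}\log(4/\delta)$ at $n=1$. One has
\[
R-S_t^{\delta}=3\sqrt{k^{-1}\log(4/\delta)}+\sqrt{2\nu_t k^{-1}\log(4/\delta)}\ge0,
\]
so $S_t^{\text{lo}}\ge R$ yields $\hat S_t>S_t^{\delta}$ on this event. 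The extra terms in $R$ are exactly the deviation terms subtracted inside $S_t^{\text{lo}}$. This is why $R$ was chosen as it is.

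\textbf{Step 2 (the other treatment $t'=1-t$).} The algorithm as written requires $\hat S_{t'}>S_{t'}^{\delta}$ as well, and membership in a single $\tilde{\mathcal{Q}}_{k,t}^+$ does not by itself control $\hat S_{t'}$. I expect this to be the main obstacle.

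My first attempt is to show that the other condition is automatically satisfied whenever $S_t^{\text{lo}}\ge R$. I would apply Lemma~\ref{lem:existencemain} to $t'$ as well and try to prove $S_{t'}^{\text{lo}}\ge S_{t'}^{\delta}$. The natural route is through the termwise identity $\hat\iota_{x,0}\hat s_x^2+\hat\iota_{x,1}\hat s_x^2=\hat s_x^2$. Summing over $x$ gives $\hat S_0+\hat S_1=k^{-1}\sum_x\hat s_x^2$, and I would combine this with the fact that $S_t^{\text{lo}}\ge R>\tfrac12$ forces some $\iota_{x,t}=1$. Indeed, if every $\iota_{x,t}=0$, the upper bound of Lemma~\ref{lem:existencemain} would put $\hat S_t<S_t^{\text{hi}}<R$ with high probability, contradicting Step~1.

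If this route does not close, the fallback is the reading that matches the union in the statement. Here the final test of Algorithm~\ref{alg:existence} is interpreted as rejecting the null ``all $\tau_x$ have sign opposite to $t$'' for the relevant $t$. Then Step~1 alone suffices, and no further union bound is needed beyond the $\delta/2$ already spent.

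\textbf{Conclusion.} Under either resolution, the algorithm returns $1$ with probability at least $1-\delta$ at $n=1$, uniformly over $P\in\tilde{\mathcal{Q}}_k^+$. Hence $n(\mathcal{A};\mathcal{V}_{\text{exist}},\tilde{\mathcal{Q}}_k^+,\delta)=1$, which is the claimed sample complexity upper bound.
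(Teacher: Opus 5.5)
Your Step~1 is the paper's entire proof (the paper runs it at level $\delta$ rather than $\delta/2$). The obstacle you flag in Step~2 is real. Algorithm~\ref{alg:existence} returns $1$ from the aggregated test only if $\hat S_t>S_t^{\delta}$ for \emph{both} $t$, whereas the paper controls one $t$ and then asserts that $\mathcal{A}$ returns $1$. Neither of your resolutions closes this.

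\textbf{Route (a) is false.} Take all $\tau_x=\tau>0$ with a common $\sigma_x$, so $s_x=s>0$ at $n=1$. Then $\tilde{\mathcal{X}}_1=\mathcal{X}$, $\nu_1=(2s+s^2)^2$, and $S_1^{\text{lo}}-R\to s\sqrt{2/\pi}+s^2/2+s^3e^{-s^2/8}/(8\sqrt{2\pi})>0$ as $k\to\infty$. So $P\in\tilde{\mathcal{Q}}_{k,1}^+$ for large $k$. But every $\iota_{x,0}=0$, so the upper bound of Lemma~\ref{lem:existencemain} at level $\delta/2$, whose threshold is exactly $S_0^{\delta}$, gives $\hat S_0<S_0^{\delta}$ with probability at least $1-\delta/2$. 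The first branch returns $0$ only if every $|\hat s_x|>\sqrt{2\log(4k/\delta)}$, which has vanishing probability. By validity it returns a policy with probability at most $\delta/2$. Hence the algorithm abstains with probability at least $1-\delta-o(1)$ on an instance whose correct answer is $0$.

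A two-point argument shows no valid algorithm does better for small $\delta$: flip one $\tau_x$ to $-\tau$; at $n=1$ the two sample laws are at a $k$-independent total-variation distance below $1$. So the corollary with $\bigcup_t$ is false as written. The identity $\hat S_0+\hat S_1=k^{-1}\sum_x\hat s_x^2$ cannot rescue it, because $\hat S_t$ may carry all of that mass.

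\textbf{Route (b) replaces the algorithm by an invalid one.} Use an ``exists $t$'' test and take all $s_x=-3$ with $k$ large. The first branch abstains with high probability, while $\hat S_0$ sits far above $S_0^{\delta}$. So the modified test outputs $1$ although no improving policy exists. The $\forall t$ is exactly what Corollary~\ref{cor:existencevalid} relies on, because the null ``no improving policy exists'' is a union of two one-sided nulls.

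\textbf{The fix.} Read the class as $\bigcap_{t\in\mathcal{T}}\tilde{\mathcal{Q}}_{k,t}^+$. This is what the proof of Theorem~\ref{thm:gap} actually verifies: it checks both $S_1^{\text{lo}}\ge R$ and $S_0^{\text{lo}}\ge R$ before invoking this corollary. Under that reading, your Step~1 is the whole proof: apply it to each $t$ at level $\delta/2$ and take a union bound.

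Your ``absorb into constants'' remark is correct and should be written out, since it is what pays for the union bound. Let $a=\log(4/\delta)$ and $b=\log(8/\delta)$. The hypothesis $S_t^{\text{lo}}\ge R$ says $k^{-1}\sum_{x\in\tilde{\mathcal{X}}_t}S_{x,t}^{\text{lo}}\ge\frac12+9\sqrt{a/k}+2\sqrt{2\nu_t a/k}+a/k$. Since $b\le 4a$ for $\delta\le 1$, subtracting the level-$\delta/2$ deviations $3\sqrt{b/k}+\sqrt{2\nu_t b/k}$ still leaves at least $S_t^{\delta}=\frac12+3\sqrt{a/k}+a/k$. Thus both tests pass with probability at least $1-\delta$, giving $\text{FNR}_1\le\delta$.
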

\begin{proof}
We use the notation in Lemma~\ref{lem:existencemain} \& Corollary~\ref{cor:existencealternate}. Fix any $P\in\tilde{\mathcal{Q}}_{k,t}^+(\tau_{\text{min}},\sigma_{\text{max}})$, and let $n=1$. Letting $\tilde{k}_t=|\tilde{\mathcal{X}}_t|$ and
\begin{align*}
S_t^{\text{lo}}&=\frac{\tilde{k}_t}{2k}+S_t^{\text{lo}\prime}-3\sqrt{\frac{\log(4/\delta)}{k}}
-\sqrt{\frac{2\nu_t\log(4/\delta)}{k}},
\end{align*}
then by the lower bound in Lemma~\ref{lem:existencemain}, $\hat{\mathbb{P}}_{P,n}[\hat{S}_t>S_t^{\text{lo}}]\ge1-\delta$. Thus,
we have
\begin{align*}
S_t^{\text{lo}}-S_t^{\delta}
\ge 
S_t^{\text{lo}\prime}-\frac{1}{2}\left(1-\frac{\tilde{k}_t}{k}\right)-6\sqrt{\frac{\log(4/\delta)}{k}}-\sqrt{\frac{2\nu_t\log(4/\delta)}{k}}-\frac{\log(4/\delta)}{k}\ge0.
\end{align*}
Thus, $\hat{\mathbb{P}}_{P,n}[\hat{S}_t>S_t^{\delta}]\ge1-\delta$, so $\mathcal{A}$ correctly returns $1$ with probability at least $1-\delta$.
\end{proof}

Finally, with these results in hand, we prove each part of Theorem~\ref{thm:gap} separately.

\paragraph{Upper bound for policy existence.}

First, note that
\begin{align*}
S_{x,1}^{\text{lo}}=\frac{1}{2}+\begin{cases}
\dfrac{\tau^2}{2c_1^2\sigma^2}+\dfrac{\tau}{c_1\sigma}\sqrt{\dfrac{2}{\pi}}+\dfrac{\tau^3e^{-\tau^2/(8c_1^2\sigma^2)}}{8c_1^3\sigma^3\sqrt{2\pi}}&\text{if }x\in\mathcal{X}_1 \\
\dfrac{\tau^2}{2c_0^2\sigma^2}-\dfrac{\tau}{c_0\sigma}\sqrt{\dfrac{2}{\pi}}-\dfrac{\tau^3}{c_0^3\sigma^3\sqrt{2\pi}}&\text{if }x\in\mathcal{X}_0 \\
\dfrac{\tau^2}{2c_{-1}^2\sigma^2}-\dfrac{\tau}{c_{-1}\sigma}\sqrt{\dfrac{2}{\pi}}-\dfrac{\tau^3}{c_{-1}^3\sigma^3\sqrt{2\pi}}&\text{if }x\in\mathcal{X}_{-1}.
\end{cases}
\end{align*}
It is clear that for $k$ sufficiently large, we have $S_{x,-1}^{\text{lo}}\le0$ for $x\in \mathcal{X}_{-1}$, and $S_{x,1}^{\text{lo}},S_{x,0}^{\text{lo}}>0$ for $x\in \mathcal{X}_{1}\cup\mathcal{X}_{0}$, so
\begin{align*}
S_1^{\text{lo}}
&\ge\frac{k-L}{2k}+\frac{(k-L)\tau^2}{2kc_0^2\sigma^2}+\left(\frac{m}{kc_1}-\frac{1}{c_0}\right)\frac{\tau}{\sigma}\sqrt{\frac{2}{\pi}}+\left(\frac{me^{-\tau^2/(8c_1^2\sigma^2)}}{8kc_1^3}-\frac{1}{c_0^3}\right)\frac{\tau^3}{\sigma^3\sqrt{2\pi}}.
\end{align*}
Furthermore, since $c_1\le c_0$, we have
\begin{align*}
\nu_1
=\frac{m}{k}\left(\frac{2\tau}{c_1\sigma}+\frac{\tau^2}{c_1^2\sigma^2}\right)^2+\frac{k-m-L}{k}\cdot\left(\frac{2\tau}{c_0\sigma}+\frac{\tau^2}{c_0^2\sigma^2}\right)^2
&\le\left(\frac{2\tau}{c_1\sigma}+\frac{\tau^2}{c_1^2\sigma^2}\right)^2.
\end{align*}
Thus,
\begin{align*}
S_1^{\text{lo}}-R
&\ge
-\frac{L}{2k}+\frac{(k-L)\tau^2}{2kc_0^2\sigma^2}+\left(\frac{m}{kc_1}-\frac{1}{c_0}\right)\frac{\tau}{\sigma}\sqrt{\frac{2}{\pi}}+\left(\frac{me^{-\tau^2/(8c_1^2\sigma^2)}}{8kc_1^3}-\frac{1}{c_0^3}\right)\frac{\tau^3}{\sigma^3\sqrt{2\pi}} \\
&\qquad
-6\sqrt{\frac{\log(4/\delta)}{k}}-\left(\frac{2\tau}{c_1\sigma}+\frac{\tau^2}{c_1^2\sigma^2}\right)\sqrt{\frac{2\log(4/\delta)}{k}}-\frac{\log(4/\delta)}{k}.
\end{align*}
It is easy to check that for sufficiently large $k$, $S_1^{\text{lo}}\ge R$. Next, note that
\begin{align*}
S_{x,0}^{\text{lo}}=\frac{1}{2}+\begin{cases}
\dfrac{\tau^2}{2c_1^2\sigma^2}-\dfrac{\tau}{c_1\sigma}\sqrt{\dfrac{2}{\pi}}-\dfrac{\tau^3}{c_1^3\sigma^3\sqrt{2\pi}}&\text{if }x\in\mathcal{X}_1 \\
\dfrac{\tau^2}{2c_0^2\sigma^2}+\dfrac{\tau}{c_0\sigma}\sqrt{\dfrac{2}{\pi}}+\dfrac{\tau^3e^{-\tau^2/(8c_0^2\sigma^2)}}{8c_0^3\sigma^3\sqrt{2\pi}}&\text{if }x\in\mathcal{X}_0 \\
\dfrac{\tau^2}{2c_{-1}^2\sigma^2}+\dfrac{\tau}{c_{-1}\sigma}\sqrt{\dfrac{2}{\pi}}+\dfrac{\tau^3e^{-\tau^2/(8c_{-1}^2\sigma^2)}}{8c_{-1}^3\sigma^3\sqrt{2\pi}}&\text{if }x\in\mathcal{X}_{-1}.
\end{cases}
\end{align*}
Thus, we have
\begin{align*}
S_0^{\text{lo}}
\ge\left(\frac{k-m}{2k}\right)+\dfrac{L\tau^2}{2kc_{-1}^2\sigma^2}+\dfrac{L\tau}{kc_{-1}\sigma}\sqrt{\dfrac{2}{\pi}}+\dfrac{L\tau^3e^{-\tau^2/(8c_{-1}^2\sigma^2)}}{8kc_{-1}^3\sigma^3\sqrt{2\pi}},
\end{align*}
and
\begin{align*}
\nu_0
&=\frac{m}{k}\left(\frac{2\tau}{c_1\sigma}+\frac{\tau^2}{c_1^2\sigma^2}\right)^2+\frac{k-m-L}{k}\cdot\left(\frac{2\tau}{c_0\sigma}+\frac{\tau^2}{c_0^2\sigma^2}\right)^2+\frac{L}{k}\left(\frac{2\tau}{c_{-1}\sigma}+\frac{\tau^2}{c_{-1}^2\sigma^2}\right)^2 \\
&\le\left(\frac{2\tau}{c_1\sigma}+\frac{\tau^2}{c_1^2\sigma^2}\right)^2+\frac{L}{k}\left(\frac{2\tau}{c_{-1}\sigma}+\frac{\tau^2}{c_{-1}^2\sigma^2}\right)^2.
\end{align*}
Thus, we have
\begin{align*}
S_0^{\text{lo}}-R
&\ge-\frac{m}{2k}+\dfrac{L\tau^2}{2kc_{-1}^2\sigma^2}+\dfrac{L\tau}{kc_{-1}\sigma}\sqrt{\dfrac{2}{\pi}}+\dfrac{L\tau^3e^{-\tau^2/(8c_{-1}^2\sigma^2)}}{8kc_{-1}^3\sigma^3\sqrt{2\pi}} \\
&\qquad-6\sqrt{\frac{\log(4/\delta)}{k}}-\left(\frac{2\tau}{c_1\sigma}+\frac{\tau^2}{c_1^2\sigma^2}
+\frac{2\tau\sqrt{L}}{c_{-1}\sigma\sqrt{k}}+\frac{\tau^2\sqrt{L}}{c_{-1}^2\sigma^2\sqrt{k}}
\right)\sqrt{\frac{2\log(4/\delta)}{k}}-\frac{\log(4/\delta)}{k}
\end{align*}
It is easy to check that for sufficiently large $k$, $S_0^{\text{lo}}\ge R$. The claim follows by Corollary~\ref{cor:existencealternate}.

\paragraph{Lower bound for improving policy.}

Consider grouping the types (arbitrarily) into clusters $\mathcal{X}^1,...,\mathcal{X}^m$, where each cluster has exactly one type $x\in\mathcal{X}_1$ and $h=\lfloor(k-m-L)/m\rfloor$ types $x\in\mathcal{X}_0$ (the remaining types are ignored); in addition, let $x^j$ denote the unique type $x_1^j\in\mathcal{X}^j_1=\mathcal{X}^j\cap\mathcal{X}_1$, and let $\mathcal{X}_0^j=\mathcal{X}^j\cap\mathcal{X}_0$. Now, we prove that for sufficiently large $k$, if
\begin{align*}
n<\left(\frac{c_1\sigma}{\sqrt{\pi}\tau}\right)^2,
\end{align*}
then we have $\hat{\mathbb{P}}_{P_k,n}[E]\ge1/2$, where $E$ is the event that
\begin{align*}
\forall j\in[m]\;.\;\exists x_0^j\in\mathcal{X}_0^j\;.\;\frac{\hat\tau_{x_0^j}}{c_0\sigma}\ge\frac{\hat\tau_{x_1^j}}{c_1\sigma}.
\end{align*}
First, we show that the claim follows from this result. Specifically, since $\mathcal{A}$ is valid, we have $\hat{\mathbb{P}}_{P_k,n}[E']\ge1-\delta$, where $E'$ is the event that
\begin{align*}
\hat\pi=\;??\vee\hat\pi\in\Pi_{P_k}^{\text{imp}}.
\end{align*}
By a union bound, $\hat{\mathbb{P}}_{P_k,n}[E\wedge E']\ge1/2-\delta$. On event $E\wedge E'$, assuming $\hat\pi\in\Pi_{P_k}^{\text{imp}}$, we must have
\begin{align*}
\exists j\in[m]\;.\;\hat\pi(x_1^j)=1,
\end{align*}
since otherwise, we would have $J(C_{\hat\pi,0})\le J(\pi_0)$. Since $E'$ holds, by monotonicity of $\mathcal{A}$ we must have $\hat\pi(x_0^j)=1$; however, this implies that $\hat\pi\not\in\Pi_{P_k}^{\text{imp}}$, a contradiction. Thus, on event $E\wedge E'$, we must have $\hat\pi=\;??$. As a consequence, we have $\hat{\mathbb{P}}_{P_k,n}[\hat\pi=\varnothing]\ge(1/2)-\delta$, so we must have either $\delta\ge1/4$ or else the sample complexity of $\mathcal{A}$ is greater than $n$.

Now, we prove that $\hat{\mathbb{P}}_{P_k,n}[E]\ge1/2$. To this end, note that
\begin{align*}
\frac{\hat\tau_{x_0^j}}{c_0\sigma}-\frac{\hat\tau_{x_1^j}}{c_1\sigma}
\sim\mathcal{N}\left(-\frac{\tau}{c_0\sigma}-\frac{\tau}{c_1\sigma},\frac{2}{n}\right)
=\mathcal{N}\left(-\left(\frac{1}{c_1}+\frac{1}{c_0}\right)\frac{\tau}{\sigma},\frac{2}{n}\right).
\end{align*}
Thus, we have
\begin{align*}
\hat{\mathbb{P}}_{P_k,n}\left[\frac{\hat\tau_{x_0^j}}{c_0\sigma}\ge\frac{\hat\tau_{x_1^j}}{c_1\sigma}\right]
&=\frac{1}{2}-\int_0^{(c_1^{-1}+c_0^{-1})\tau/\sigma}\sqrt{\frac{n}{4\pi}}\exp\left(-\frac{nx^2}{4}\right)dx \\
&\ge\frac{1}{2}-\left(\frac{1}{c_1}+\frac{1}{c_0}\right)\frac{\tau}{\sigma}\sqrt{\frac{n}{4\pi}}\\
& \geq \frac{1}{2}-\frac{2}{c_1}\frac{\tau}{\sigma}\frac{c_1\sigma}{\sqrt{\pi}\tau}\sqrt{\frac{1}{4\pi}}\\
&\ge\frac{1}{2} - \frac{1}{\pi}
\end{align*}
where the last step follows from our assumption on $n$. Now, because our samples are independent, the probability that this relationship holds for some $x_0^j\in\mathcal{X}_0^j$ is
\begin{align*}
\hat{\mathbb{P}}_{P_k,n}\left[\exists x_0^j\in\mathcal{X}_0^j\;.\;\frac{\hat\tau_{x_0^j}}{c_0\sigma}\ge\frac{\hat\tau_{x_1^j}}{c_1\sigma}\right]
&\ge1-\left(\frac{1}{2} + \frac{1}{\pi}\right)^h,
\end{align*}
so by a union bound over $m$, we have
\begin{align*}
\hat{\mathbb{P}}_{P_k,n}\left[\forall j\in[m]\;.\;\exists x_0^j\in\mathcal{X}_0^j\;.\;\frac{\hat\tau_{x_0^j}}{c_0\sigma}\ge\frac{\hat\tau_{x_1^j}}{c_1\sigma}\right]
&\ge1-m\left(\frac{1}{2} + \frac{1}{\pi}\right)^h.
\end{align*}
This quantity is at least $1/2$ for $k$ sufficiently large, as claimed. \hfill $\square$

\section{Conclusion}

We have provided a novel mathematical framework for reasoning about the sample complexity of algorithms targeting different policy learning problems, as well as reasoning about the relationships between these problems. In addition, we have established sample complexity results for the optimal policy problem (studied by most of the existing literature), the improving policy problem (which has received recent attention), and the policy existence problem (which has received almost no attention). Our theoretical analysis suggests that in fact, the policy existence problem can provide valuable insights when the optimal policy and improving policy problems are infeasible to solve.

We leave a number of directions for future work; we highlight two. First, there are many sample complexity questions that we leave to future work to resolve. Second, there is wide scope for studying problems beyond ours, filling gaps that can enrich our ability to provide decision-makers more granular insights about the usefulness of policy learning in their application.

\bibliographystyle{abbrvnat}
\bibliography{references} 

\newpage
\appendix

\clearpage

\section{Proofs for Section~\ref{sec:basic}}

\subsection{Preliminaries}

\begin{lemma}
\label{lem:main}
Given $\delta\in\mathbb{R}_{>0}$, define $\hat\tau_x=\hat\mu_{x,1}-\hat\mu_{x,0}$, where
\begin{align*}
\hat\mu_{x,t}=\frac{1}{n}\sum_{i=1}^{2kn}Y_i\cdot\mathbf{1}(x_i=x\wedge t_i=t)
\qquad\qquad(\forall x\in\mathcal{X},t\in\mathcal{T}).
\end{align*}
First, let $E_{\delta}$ be the event $|\hat\tau_x-\tau_x|<T_x^{\delta}$ for all $x\in\mathcal{X}$, where
\begin{align*}
T_x^{\delta}=\sqrt{\frac{2\sigma_x^2\log(2k/\delta)}{n}}
\qquad\text{with}\qquad
\sigma_x=\sqrt{\sigma_{x,0}^2+\sigma_{x,1}^2}.
\end{align*}
Then, $\hat{\mathbb{P}}_{P,n}[E_{\delta}]\ge1-\delta$.
\end{lemma}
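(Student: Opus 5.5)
The plan is to show that each $\hat\tau_x$ is an exact Gaussian, apply a Gaussian tail bound (Mill's inequality) to each type, and take a union bound over the $k$ types.

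\textbf{Step 1: distribution of $\hat\tau_x$.} Under $\hat{\mathbb{P}}_{P,n}$, each pair $(x,t)$ appears exactly $n$ times, and the outcomes are independent $\mathcal{N}(\mu_{x,t},\sigma_{x,t}^2)$. Hence $\hat\mu_{x,t}\sim\mathcal{N}(\mu_{x,t},\sigma_{x,t}^2/n)$, and $\hat\mu_{x,0}$ and $\hat\mu_{x,1}$ are independent. It follows that
\begin{align*}
\hat\tau_x-\tau_x\sim\mathcal{N}\left(0,\frac{\sigma_{x,0}^2+\sigma_{x,1}^2}{n}\right)=\mathcal{N}\left(0,\frac{\sigma_x^2}{n}\right).
\end{align*}
So $W_x=\sqrt{n}(\hat\tau_x-\tau_x)/\sigma_x$ is standard normal.

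\textbf{Step 2: per-type tail bound.} Write $u=\sqrt{2\log(2k/\delta)}$. By the standard Chernoff bound $\mathbb{P}[|W|\ge u]\le 2e^{-u^2/2}$,
\begin{align*}
\hat{\mathbb{P}}_{P,n}\left[|\hat\tau_x-\tau_x|\ge T_x^{\delta}\right]=\mathbb{P}[|W_x|\ge u]\le 2e^{-u^2/2}=\frac{\delta}{k}.
\end{align*}
Mill's inequality, $\mathbb{P}[|W|\ge u]\le\sqrt{2/\pi}\,e^{-u^2/2}/u$, gives the same conclusion. The two bounds suit different ranges of $\delta$:
\begin{itemize}
\item The Chernoff form gives $\delta/k$ uniformly in $\delta$, so no restriction on $\delta$ is needed.
\item Mill's form is smaller whenever $u\ge\sqrt{2/\pi}/2$, which holds for all $\delta\le 2k$. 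If $\delta\ge1$, the claim is trivial anyway.
\end{itemize}
I would use the Chernoff form so the argument does not need to track the range of $\delta$.

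\textbf{Step 3: union bound.} Summing the per-type failure probabilities over the $k$ types gives
\begin{align*}
\hat{\mathbb{P}}_{P,n}[E_\delta^c]\le\sum_{x\in\mathcal{X}}\frac{\delta}{k}=\delta,
\end{align*}
so $\hat{\mathbb{P}}_{P,n}[E_\delta]\ge1-\delta$.

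The only real care point is the tail inequality's constant. Mill's inequality alone needs $u$ bounded below, and the Chernoff bound removes that concern entirely. Everything else is routine.
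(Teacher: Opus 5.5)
Your proof is correct and follows the same route as the paper. You show $\hat\tau_x\sim\mathcal{N}(\tau_x,\sigma_x^2/n)$, apply a per-type Gaussian tail bound, and take a union bound over the $k$ types; the only cosmetic difference is that you use the Chernoff bound $2e^{-u^2/2}=\delta/k$, whereas the paper cites Mill's inequality. Mill's inequality also gives at most $\delta/k$ once $u\ge 1/\sqrt{2\pi}$, e.g.\ for all $\delta<1$.

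One small slip in your side remark: $u\ge\sqrt{2/\pi}/2$ does not hold for all $\delta\le 2k$ (at $\delta=2k$ you get $u=0$). It actually needs $\log(2k/\delta)\ge 1/(4\pi)$, which is harmless since the case $\delta\ge 1$ is trivial anyway.
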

\begin{proof}
Note that
\begin{align*}
\hat\mu_{x,t}\sim\mathcal{N}\left(\mu_{x,t},\frac{\sigma_{x,t}^2}{n}\right)
\qquad\text{and}\qquad
\hat\tau_x\sim\mathcal{N}\left(\tau_x,\frac{\sigma_x^2}{n}\right).
\end{align*}
Thus, the result follows from Lemma~\ref{lem:mill} and a union bound.
\end{proof}

\subsection{Proof of Theorem~\ref{thm:optimalupper}}
\label{sec:thm:optimalupper:proof}

Denote Algorithm~\ref{alg:optimal} by $\mathcal{A}$. First, we prove that $\mathcal{A}$ is valid. Consider any instance $P\in\mathcal{P}$. On event $E_{\delta}$, it is easy to check that if $\mathcal{A}$ returns $\hat\pi$,
then $\hat\pi(x)=\mathbf{1}(\tau_x\ge0)$,
so $\hat\pi\in\Pi_P^{\text{opt}}$. By Lemma~\ref{lem:main}, we have $\text{FPR}_n(\mathcal{A};\mathcal{V}_{\text{opt}},P,\delta)\le\hat{\mathbb{P}}_{P,n}[\neg E_{\delta}]\le\delta$ so $\mathcal{A}$ is valid.

Now, we bound the sample complexity of $\mathcal{A}$ on $\mathcal{Q}_k^{\text{opt}}(\tau_{\text{min}},\sigma_{\text{max}})$. Fix any $P\in\mathcal{Q}_k^{\text{opt}}(\tau_{\text{min}},\sigma_{\text{max}})$ and
\begin{align*}
n\ge\frac{8\sigma_{\text{max}}^2\log(2k/\delta)}{\tau_{\text{min}}^2}.
\end{align*}
Then $T_x^{\delta}\le|\tau_{\text{min}}|/2$, so on event $E_{\delta}$, $|\hat\tau_x|>T_x^{\delta}$ for all $x\in\mathcal{X}$, so $\mathcal{A}$ does not return $??$. Thus, by Lemma~\ref{lem:main}, we have $\text{FNR}_n(\mathcal{A};\mathcal{V}_{\text{opt}},P,\delta)\le\hat{\mathbb{P}}_{Z,n}[\neg E_{\delta}]\le\delta$. The claim follows. \hfill $\square$

\subsection{Proof of Theorem~\ref{thm:optimallower}}
\label{sec:thm:optimallower:proof}

Consider $P\in\mathcal{Q}_k^{\text{opt}}(\tau_{\text{min}},\sigma_{\text{max}})$ such that $P = (\vec{\mu},\vec{\sigma}_{\max})$ and $\vec{\tau}=(\tau_{\min},-\tau_{\min}, -\tau_{\min}, \cdots)$. Assume that 
\begin{align*}
n < \frac{\sigma_{\max}^2\log(1/6\delta)}{2\tau_{\min}^2}.
\end{align*}
Consider an alternative instance $P' \in \mathcal{Q}_k^{\text{opt}}(\tau_{\text{min}},\sigma_{\text{max}})$ with $P' = (\vec{\mu'},\vec{\sigma}_{\max})$ such that for each $\tau'_x = -\tau_{\min}$ for all $x\in\mathcal{X}$.
The relative entropy between $\hat{\mathbb{P}}_{P, n}$ and $\hat{\mathbb{P}}_{P', n}$ is 
\begin{align*}
D(\hat{\mathbb{P}}_{P, n}, \hat{\mathbb{P}}_{P', n}) & = \sum_{i = 1}^k n D(\mathcal{N}(\tau_i,\sigma_i^2), \mathcal{N}(\tau'_i,{\sigma_i}^2)) \\
& = n \sum_{i = 1}^k \frac{(\tau_i - \tau'_i)^2}{2\sigma_i^2} \\
& \leq(\frac{\sigma_{\max}^2}{2\tau_{\min}^2}) (\log \frac{1}{6\delta})\sum_{i = 1}^k  \frac{(\tau_i - \tau'_i)^2}{2\sigma_i^2}\\
& = (\frac{\sigma_{\max}^2}{2\tau_{\min}^2}) (\log \frac{1}{6\delta})\frac{2\tau_{\min}^2}{\sigma_{\max}^2}\\
& = \log \frac{1}{6\delta}.
\end{align*}

Now suppose $\mathcal{A}$ is any valid policy that aims to output an optimal policy such that $\text{FNR}_n(\mathcal{A}; \mathcal{V}_{\text{opt}}, P', \delta)<\delta$.  Let $E$ be the event that 
\begin{align*}
\mathcal{A}(\vec{\sigma},Z,\delta)\neq\;??\wedge  \mathcal{A}(\vec{\sigma},Z,\delta)\in \Pi_{P'}^{\text{opt}}.
\end{align*}
Then $E^c$ is 
\begin{align*}
\mathcal{A}(\vec{\sigma},Z,\delta)=\;??\vee  \mathcal{A}(\vec{\sigma},Z,\delta)\notin \Pi_{P'}^{\text{opt}}.
\end{align*}
Next, by the Bretagnolle-Huber inequality, 
\begin{align*}
\hat{\mathbb{P}}_{P, n}[E] + \hat{\mathbb{P}}_{P', n}[E^c]& \geq \frac{1}{2} \exp{(-D(\hat{\mathbb{P}}_{P, n}, \hat{\mathbb{P}}_{P', n}))}\\
& \geq  \frac{1}{2} \exp{(\log 6\delta)}\\
& = 3\delta.
\end{align*}
Since $ \text{FPR}_n(\mathcal{A}; \mathcal{V}_{\text{opt}}, P, \delta)\geq \hat{\mathbb{P}}_{P, n}[E]$ and $\text{FNR}_n(\mathcal{A}; \mathcal{V}_{\text{opt}}, P', \delta)\leq \delta$, we have 
\begin{align*}
\text{FPR}_n(\mathcal{A}; \mathcal{V}_{\text{opt}}, P, \delta) + \text{FNR}_n(\mathcal{A}; \mathcal{V}_{\text{opt}}, P', \delta) + \text{FPR}_n(\mathcal{A}; \mathcal{V}_{\text{opt}}, P', \delta)
& \geq  3\delta\\
\text{FPR}_n(\mathcal{A}; \mathcal{V}_{\text{opt}}, P, \delta) + \text{FPR}_n(\mathcal{A}; \mathcal{V}_{\text{opt}}, P', \delta)
& \geq  2\delta.
\end{align*}
This implies that 
\begin{align*}
\max \{ \text{FPR}_n(\mathcal{A}; \mathcal{V}_{\text{opt}}, P, \delta), \text{FPR}_n(\mathcal{A}; \mathcal{V}_{\text{opt}}, P', \delta)\} \geq \delta.
\end{align*}
This means that no matter how we choose the algorithm $\mathcal{A}$, $\mathcal{A}$ cannot be valid with FNR smaller than $\delta$ if  $n<\frac{\sigma_{\max}^2\log(1/6\delta)}{2\tau_{\min}^2}$. Therefore, we must have
\begin{align*}
n(\mathcal{A}; \mathcal{V}_{\text{opt}}, \mathcal{Q}^{\text{opt}}, \delta) \geq \frac{\sigma_{\max}^2 \log (1/6\delta)}{2\tau_{\min}^2}.
\end{align*}
The claim follows. \hfill $\square$

\subsection{Proof of Theorem~\ref{thm:improvingupper}}
\label{sec:thm:improvingupper:proof}

Denote Algorithm~\ref{alg:improving} by $\mathcal{A}$. First, we prove that $\mathcal{A}$ is valid. Consider any instance $P\in\mathcal{P}$. On event $E_{\delta}$, it is easy to check that if $\hat\pi(x)=1$, then $\tau_x>0$, and if $\hat\pi(x)=0$, then $\tau_x<0$ (otherwise, $\hat\pi(x)=-1$). Thus, if there exists $x_0,x_1\in\mathcal{X}$ such that $\hat\pi(x_0)=0$ and $\hat\pi(x_1)=1$, then it is clear that $J(C_{\hat\pi,t})-J(\pi_t)>0$ for all $t\in\mathcal{T}$; that is, $\hat\pi\in\Pi_P^{\text{imp}}$. Alternatively, suppose that $\hat\pi\in\Pi^{\text{const}}$, i.e., $\hat\pi=\pi_t$ for some $t\in\mathcal{T}$; then, $\pi_t$ must be optimal. Thus, on event $E_{\delta}$, $\mathcal{A}$ either returns $??$ or passes the validator, so by Lemma~\ref{lem:main}, $\text{FPR}_n(\mathcal{A};\mathcal{V}_{\text{imp}},P,\delta)\le\hat{\mathbb{P}}_{P,n}[\neg E_{\delta}]\le\delta$.

Now, we bound the sample complexity of $\mathcal{A}$ on $\mathcal{Q}_k^{\text{imp}}(\tau_{\text{min}},\sigma_{\text{max}})$. Fix any $P\in\mathcal{Q}_k^{\text{imp}}(\tau_{\text{min}},\sigma_{\text{max}})$ and
\begin{align*}
n\ge\frac{8\sigma_{\text{max}}^2\log(2k/\delta)}{\tau_{\text{min}}^2}.
\end{align*}
Then $T_x^{\delta}\le|\tau_{\text{min}}|/2$, so on event $E_{\delta}$, $|\hat\tau_x|>T_x^{\delta}$ for all $x\in\mathcal{X}$, so $\mathcal{A}$ does not return $??$. Thus, by Lemma~\ref{lem:main}, we have $\text{FNR}_n(\mathcal{A};\mathcal{V}_{\text{imp}},P,\delta)\le\hat{\mathbb{P}}_{Z,n}[\neg E_{\delta}]\le\delta$. The claim follows. \hfill $\square$

\subsection{Proof of Theorem~\ref{thm:improvinglower}}
\label{sec:thm:improvinglower:proof}

This result follows from Theorem~\ref{thm:existencelower} and Propositions~\ref{prop:reductionbounds} \&~\ref{prop:basicreductions}.

\subsection{Proof of Theorem~\ref{thm:existenceupper}}
\label{sec:thm:existenceupper:proof}

By Corollary~\ref{cor:existencevalid}, $\mathcal{A}$ is valid. Then, this result follows from Theorem~\ref{thm:improvingupper}, accounting for the fact that Algorithm~\ref{alg:existence} runs Algorithm~\ref{alg:improving} with $\delta/2$. \hfill $\square$

\subsection{Proof of Theorem~\ref{thm:existencelower}}
\label{sec:thm:existencelower:proof}

Consider $P\in\mathcal{Q}_k^{\text{imp}}(\tau_{\text{min}},\sigma_{\text{max}})$ with $P = (\vec{\mu},\vec{\sigma}_{\max})$ such that $\tau_1=\tau_{\text{min}}$ and $\tau_x=-\tau_{\min}$ for all $x\in\mathcal{X}\setminus\{1\}$. Assume that
\begin{align*}
n < \frac{\sigma_{\max}^2 \log (1/6\delta)}{2\tau_{\min}^2}.
\end{align*}
Consider an alternative instance $P' \in \mathcal{Q}_k^{\text{imp}}(\tau_{\text{min}},\sigma_{\text{max}})$ with $P' = (\vec{\mu}',\vec{\sigma}_{\max})$ 
such that $\tau_x'=-\tau_{\min}$ for all $x\in \mathcal{X}$.
The relative entropy between $\hat{\mathbb{P}}_{P, n}$ and $\hat{\mathbb{P}}_{P', n}$ is 
\begin{align*}
D(\hat{\mathbb{P}}_{P, n}, \hat{\mathbb{P}}_{P', n}) & = n \sum_{i = 1}^k  D(\mathcal{N}(\tau_i,\sigma_i^2), \mathcal{N}(\tau'_i,{\sigma_i}^2)) \\
& = n \sum_{i = 1}^k \frac{(\tau_i - \tau'_i)^2}{2\sigma_i^2} \\
& \leq (\frac{\sigma_{\max}^2}{2\tau_{\min}^2})(\log \frac{1}{6\delta} ) (\frac{2\tau_{\min}^2}{\sigma_{\max}^2})\\
& = \log \frac{1}{6\delta}.
\end{align*}
Now suppose $\mathcal{A}$ is any valid policy that aims to output an optimal policy such that $\text{FNR}_n(\mathcal{A}; \mathcal{V}_{\text{opt}}, P', \delta)<\delta$. Let $E$ be the event that $\mathcal{A}(\vec{\sigma},Z,\delta)=0$. Then $E^c$ is the event 
\begin{align*}
\mathcal{A}(\vec{\sigma},Z,\delta)=\;?? \vee \mathcal{A}(\vec{\sigma},Z,\delta)=\hat \pi.
\end{align*}
Hence, by the Bretagnolle-Huber inequality, 
\begin{align*}
\hat{\mathbb{P}}_{P, n}[E] + \hat{\mathbb{P}}_{P', n}[E^c]& \geq \frac{1}{2} \exp{(-D(\hat{\mathbb{P}}_{P, n}, \hat{\mathbb{P}}_{P', n}))}\\
& \geq  \frac{1}{2} \exp{(\log 6\delta)}\\
& = 3\delta.
\end{align*}
Note that $\text{FPR}_n(\mathcal{A}; \mathcal{V}_{\text{exist}}, P, \delta) \geq \hat{\mathbb{P}}_{P, n}[E]$ and $\text{FNR}_n(\mathcal{A}; \mathcal{V}_{\text{exist}}, P', \delta)<\delta$. We have 
\begin{align*}
\text{FPR}_n(\mathcal{A}; \mathcal{V}_{\text{exist}}, P, \delta) + \text{FNR}_n(\mathcal{A}; \mathcal{V}_{\text{exist}}, P', \delta) + \text{FPR}_n(\mathcal{A}; \mathcal{V}_{\text{exist}}, P', \delta) &\geq  3\delta\\
\text{FPR}_n(\mathcal{A}; \mathcal{V}_{\text{exist}}, P, \delta) +  \text{FPR}_n(\mathcal{A}; \mathcal{V}_{\text{exist}}, P', \delta) 
& \geq 2\delta.
\end{align*}
This implies that
\begin{align*}
\max \{ \text{FPR}_n(\mathcal{A}; \mathcal{V}_{\text{exist}}, P, \delta), \text{FPR}_n(\mathcal{A}; \mathcal{V}_{\text{exist}}, P', \delta)\} \geq \delta.
\end{align*}
This means that no matter how we choose the algorithm $\mathcal{A}$, $\mathcal{A}$ cannot be valid with this many of samples. Therefore, we must have 
\begin{align*}
n(\mathcal{A}; \mathcal{V}_{\text{exist}}, \mathcal{Q}_k^{\text{imp}}, \delta) \geq \frac{\sigma_{\max}^2 \log (1/6\delta)}{2\tau_{\min}^2}.
\end{align*}
The claim follows. \hfill $\square$

\section{Helper Lemmas}

\begin{lemma}[Mill's Inequality]
\label{lem:mill}
Given $X\sim\mathcal{N}(\mu,\sigma^2)$, for any $\epsilon\in\mathbb{R}_{>0}$, we have
\begin{align*}
\mathbb{P}[|X|<\epsilon]\ge1-\sqrt{\frac{2}{\pi}}\frac{e^{-\epsilon^2/(2\sigma^2)}}{\epsilon/\sigma}
\end{align*}
\end{lemma}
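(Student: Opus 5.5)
The statement just before this point is the Mill's inequality helper lemma. As printed, it says $\mathbb{P}[|X|<\epsilon]\ge 1-\sqrt{2/\pi}\,e^{-\epsilon^2/(2\sigma^2)}/(\epsilon/\sigma)$ for $X\sim\mathcal{N}(\mu,\sigma^2)$. This cannot hold for arbitrary $\mu$: with $\mu$ large, $|X|<\epsilon$ has vanishing probability. The version used in Lemma~\ref{lem:main} concerns the centered quantity $|\hat\tau_x-\tau_x|$, so I will prove the bound for $|X-\mu|<\epsilon$ (equivalently, take $\mu=0$).

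First, standardize by setting $W=(X-\mu)/\sigma\sim\mathcal{N}(0,1)$ and $u=\epsilon/\sigma>0$. Then $\mathbb{P}[|X-\mu|\ge\epsilon]=\mathbb{P}[|W|\ge u]=2\,\mathbb{P}[W\ge u]$ by symmetry. It therefore suffices to show the one-sided tail bound
\begin{align*}
\mathbb{P}[W\ge u]\le\frac{1}{\sqrt{2\pi}}\cdot\frac{e^{-u^2/2}}{u}.
\end{align*}
Doubling it gives $\mathbb{P}[|W|\ge u]\le\sqrt{2/\pi}\,e^{-u^2/2}/u$, and taking complements yields the claim once we substitute $u^2/2=\epsilon^2/(2\sigma^2)$ and $u=\epsilon/\sigma$.

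For the one-sided bound I use the standard trick of inserting the factor $t/u\ge1$ on the region $t\ge u$:
\begin{align*}
\int_u^\infty\frac{e^{-t^2/2}}{\sqrt{2\pi}}\,dt\le\int_u^\infty\frac{t}{u}\cdot\frac{e^{-t^2/2}}{\sqrt{2\pi}}\,dt=\frac{1}{u\sqrt{2\pi}}\Bigl[-e^{-t^2/2}\Bigr]_u^\infty=\frac{e^{-u^2/2}}{u\sqrt{2\pi}}.
\end{align*}

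The probability $\mathbb{P}[|X-\mu|<\epsilon]$ is the same whether the inequality is strict or not, since the boundary has measure zero, so strictness causes no difficulty. The only real obstacle is the centering issue noted at the start, and it is resolved by reading the lemma as a statement about deviations from the mean. That reading is exactly what Lemma~\ref{lem:main} needs when it applies the lemma to $\hat\tau_x-\tau_x\sim\mathcal{N}(0,\sigma_x^2/n)$ and then takes a union bound over $x\in\mathcal{X}$.
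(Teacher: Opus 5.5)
Your proof is correct. The paper states this lemma without proof as a standard fact, and your argument is the usual textbook derivation: standardize, use symmetry, and bound the upper tail by inserting the factor $t/u\ge 1$. It gives exactly the stated constant $\sqrt{2/\pi}$.

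You are also right that the bound fails as printed when $\mu\neq 0$. For fixed large $\epsilon/\sigma$ the right side is close to $1$, while the left side tends to $0$ as $\mu\to\infty$. The centered reading $\mathbb{P}[|X-\mu|<\epsilon]$ is precisely what Lemma~\ref{lem:main} needs when it applies the lemma to $\hat\tau_x-\tau_x$.
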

\begin{lemma}[Laurent-Massart Inequality]
\label{lem:lm}
Letting $kX\sim\chi_k^2$, we have
\begin{align*}
\mathbb{P}\left[X-1<2\sqrt{\frac{\epsilon}{k}}+\frac{\epsilon}{k}\right]\ge1-e^{-\epsilon}
\qquad\text{and}\qquad
\mathbb{P}\left[X-1>-2\sqrt{\frac{\epsilon}{k}}\right]\ge1-e^{-\epsilon}.
\end{align*}
\end{lemma}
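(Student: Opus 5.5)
The plan is to prove both tails by the Cramér--Chernoff method applied to $kX=\sum_{i=1}^k G_i^2$ with $G_i\sim\mathcal{N}(0,1)$ i.i.d. This is the route of Laurent and Massart. For $\lambda\in[0,1/2)$ the moment generating function is explicit: $\mathbb{E}[e^{\lambda(G^2-1)}]=e^{-\lambda}(1-2\lambda)^{-1/2}$. Hence $\psi(\lambda):=\log\mathbb{E}[e^{\lambda(kX-k)}]=-k\lambda-\tfrac{k}{2}\log(1-2\lambda)$. The first step is the elementary inequality $-u-\tfrac12\log(1-2u)\le u^2/(1-2u)$, which gives $\psi(\lambda)\le k\lambda^2/(1-2\lambda)$. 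I would prove it by comparing power series coefficients: the left side is $\sum_{j\ge2}(2u)^j/(2j)$, and each term is dominated by $u^2(2u)^{j-2}$. Markov's inequality then gives $\mathbb{P}[kX-k\ge t]\le\exp(-\sup_\lambda(\lambda t-k\lambda^2/(1-2\lambda)))$.

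The second step is the lower tail. For $\lambda<0$ one has $\psi(\lambda)\le k\lambda^2$, again by a series comparison, now with alternating signs. Optimizing, $\mathbb{P}[kX-k\le-t]\le e^{-t^2/(4k)}$. Setting $t=2\sqrt{k\epsilon}$ and dividing by $k$ yields the second displayed inequality, $\mathbb{P}[X-1>-2\sqrt{\epsilon/k}]\ge1-e^{-\epsilon}$, exactly as stated.

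The third step is the upper tail, and this is where I expect the real obstacle. The rate function of the bound $k\lambda^2/(1-2\lambda)$ satisfies $\sup_\lambda(\lambda t-k\lambda^2/(1-2\lambda))=\tfrac{k}{4}(\sqrt{1+2t/k}-1)^2$. This equals $\epsilon$ exactly at $t=2\sqrt{k\epsilon}+2\epsilon$, which gives the linear term $2\epsilon/k$ after normalizing. The stated bound has only $\epsilon/k$.

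To try to reach the stated constant, I would first use the exact cumulant $\psi$ instead of the quadratic-over-linear bound. Writing $a=\sqrt{\epsilon/k}$ and $u=2a+a^2$, the exact rate at $t=ku$ is $\tfrac{k}{2}(u-\log(1+u))=k(a+a^2/2-\log(1+a))$. Requiring this to be at least $\epsilon=ka^2$ amounts to $a-\log(1+a)\ge a^2/2$. But $\log(1+a)>a-a^2/2$ for $a>0$, so this inequality fails. Since Chernoff is tight up to a prefactor polynomial in $k$ and $\epsilon$, pure exponential bounding cannot deliver the stated constant.

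The remaining thing I would try is to exploit that prefactor: a Gaussian-type tail estimate with a $1/\sqrt{k}$ or $1/t$ factor, analogous to Lemma~\ref{lem:mill}, valid in the regime where $k$ is large and $\epsilon$ is moderate. My expectation is that this does not rescue the claim in general. For example, take $k=1$ and $\epsilon$ large. Then $\mathbb{P}[G^2\ge(1+\sqrt\epsilon)^2]\approx e^{-(1+\sqrt\epsilon)^2/2}$, which exceeds $e^{-\epsilon}$.

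So the honest plan is to establish the lower tail as stated, and for the upper tail to establish $X-1<2\sqrt{\epsilon/k}+2\epsilon/k$. I would then check that every use of Lemma~\ref{lem:lm} in Lemma~\ref{lem:noncentralcensoredchisquared} only changes the constant in front of $\log(2/\delta)/k$ in that lemma's threshold. Beyond that, the stated upper-tail bound, with the constant $1$ in front of $\epsilon/k$, appears not to hold in general.
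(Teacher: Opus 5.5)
Your analysis is correct. The paper gives no proof of this lemma; it only attributes it to Laurent and Massart. Their actual bound for $U\sim\chi_k^2$ is $\mathbb{P}[U-k\ge2\sqrt{k\epsilon}+2\epsilon]\le e^{-\epsilon}$, together with $\mathbb{P}[k-U\ge2\sqrt{k\epsilon}]\le e^{-\epsilon}$. After dividing by $k$ the linear term is $2\epsilon/k$, so the displayed $\epsilon/k$ is a misquotation, not a sharper result.

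Your Cram\'er--Chernoff derivation is exactly their argument: $\psi(\lambda)\le k\lambda^2/(1-2\lambda)$ with Legendre transform $\tfrac{k}{4}(\sqrt{1+2t/k}-1)^2$ for the upper tail, and $\psi(\lambda)\le k\lambda^2$ for the lower tail. It proves the lower tail as stated and the upper tail with $2\epsilon/k$.

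Your counterexample is also sound, and it can be made concrete. For $k=1$ and $\epsilon=16$, the stated bound would require $\mathbb{P}[|G|\ge5]\le e^{-16}\approx1.1\times10^{-7}$, but in fact $\mathbb{P}[|G|\ge5]\approx5.7\times10^{-7}$. More generally, for any fixed $k$ the $\chi_k^2$ tail at $k+2\sqrt{k\epsilon}+\epsilon$ is $e^{-\epsilon/2-O(\sqrt{\epsilon})}$ up to polynomial factors, so the stated upper tail fails for all sufficiently large $\epsilon$.

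One small tightening: your alternating-series comparison for $\lambda<0$ converges only for $|\lambda|\le1/2$. That is enough, since only $t\le k$ is nontrivial for the lower tail. Using $\log(1+x)\ge x-x^2/2$ for all $x\ge0$ avoids the issue entirely.

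Your downstream plan also works. The lemma is used only in Lemma~\ref{lem:noncentralcensoredchisquared}, with $\ell$ degrees of freedom and $\epsilon=\log(2/\delta)$. There the corrected bound turns the term $\log(2/\delta)/k$ into $2\log(2/\delta)/k$. This change propagates to three places:
\begin{itemize}
\item $S_t^{\text{hi}}$ in Lemma~\ref{lem:existencemain};
\item the threshold $S_t^{\delta}$ of Algorithm~\ref{alg:existence}, which must be enlarged correspondingly for the proof of Corollary~\ref{cor:existencevalid} to go through;
\item $R$ in Corollary~\ref{cor:existencealternate}.
\end{itemize}
These are all lower-order $O(\log(1/\delta)/k)$ terms, so the large-$k$ conclusions of Theorem~\ref{thm:gap} are unaffected.
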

\begin{lemma}[Hoeffding's inequality]
\label{lem:hoeffding}
Given i.i.d. random variables $X_1,...,X_k$ where $X_i$ has mean $\mu_i$ and is $\sigma_i$-subgaussian, letting $\mu=k^{-1}\sum_{i=1}^k\mu_i$ and $\sigma^2=k^{-1}\sum_{i=1}^k\sigma_i^2$, then
\begin{align*}
\mathbb{P}\left[\left|\frac{1}{k}\sum_{i=1}^kX_i-\mu\right|<\sqrt{\frac{2\sigma^2\log(1/\delta)}{k}}\right]\ge1-\delta.
\end{align*}
\end{lemma}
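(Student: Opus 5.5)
The ``statement immediately above'' is Lemma~\ref{lem:hoeffding}, a standard form of Hoeffding's inequality for independent subgaussian variables. The plan is the usual Chernoff argument applied to the centered sum. I read ``i.i.d.'' as ``independent,'' since the $\mu_i,\sigma_i$ may differ, and independence is all that will be used.

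\emph{Step 1: the sum is subgaussian.} Set $Y_i=X_i-\mu_i$. The subgaussian hypothesis means $\mathbb{E}[e^{\lambda Y_i}]\le e^{\lambda^2\sigma_i^2/2}$ for all $\lambda\in\mathbb{R}$. Put $S=k^{-1}\sum_{i=1}^k Y_i$. By independence,
\begin{align*}
\mathbb{E}[e^{\lambda S}]=\prod_{i=1}^k\mathbb{E}[e^{(\lambda/k)Y_i}]\le\exp\Bigl(\frac{\lambda^2}{2k^2}\sum_{i=1}^k\sigma_i^2\Bigr)=\exp\Bigl(\frac{\lambda^2\sigma^2}{2k}\Bigr).
\end{align*}
So $S$ is $(\sigma^2/k)$-subgaussian.

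\emph{Step 2: Chernoff bound on each tail.} Markov's inequality applied to $e^{\lambda S}$ gives
\[
\mathbb{P}[S\ge u]\le\exp\bigl(-\lambda u+\lambda^2\sigma^2/(2k)\bigr).
\]
Choosing $\lambda=ku/\sigma^2$ yields $\mathbb{P}[S\ge u]\le e^{-ku^2/(2\sigma^2)}$. Applying the same argument to $-S$ gives the same bound for $\mathbb{P}[S\le -u]$.

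\emph{Step 3: set the threshold.} Take $u=\sqrt{2\sigma^2\log(1/\delta)/k}$, so that each tail probability is at most $\delta$.

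The only real obstacle is a constant. The two-sided event $|S|<u$ then holds with probability at least $1-2\delta$, not the stated $1-\delta$. I would resolve this in one of two ways.
\begin{itemize}
\item Replace $\log(1/\delta)$ by $\log(2/\delta)$, which gives exactly $1-\delta$ for the two-sided event.
\item Note that every application in the paper is one-sided. The proof of Lemma~\ref{lem:existencemain} only lower-bounds $\tilde S_t'$. In that case the one-sided bound $\mathbb{P}[S>-u]\ge1-\delta$ holds with the stated threshold, which is all that is needed.
\end{itemize}
I would state the one-sided version with $\log(1/\delta)$ and the two-sided version with $\log(2/\delta)$, and remark that the downstream constants are unaffected up to this factor of two inside the logarithm.
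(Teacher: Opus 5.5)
Your proof is correct. It is the standard Chernoff (MGF) argument, and the paper states this lemma without proof as a standard fact, so there is no different route to compare against. Your reading of ``$\sigma_i$-subgaussian'' as $\mathbb{E}[e^{\lambda(X_i-\mu_i)}]\le e^{\lambda^2\sigma_i^2/2}$ matches how the paper uses the term; for example, it calls a variable bounded in $[0,s_x^2]$ ``$(s_x^2/2)$-subgaussian''.

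Your concern about the constant is justified, and in fact the two-sided claim with $\log(1/\delta)$ is false in general. Take $k=2$ independent Rademacher variables, with $\mu_i=0$ and $\sigma_i=1$ (since $\cosh\lambda\le e^{\lambda^2/2}$), and set $\delta=0.4$. The threshold is $\sqrt{\log 2.5}\approx0.96<1$. So the event holds only when $X_1+X_2=0$, which has probability $1/2<0.6=1-\delta$.

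The paper uses the lemma in two places, and both are one-sided:
\begin{itemize}
\item the lower bound on $\tilde S_t'$ in Lemma~\ref{lem:existencemain}, applied at level $\delta/2$, which produces its $\log(2/\delta)$;
\item the upper tail in Lemma~\ref{lem:binom}, where Bernoulli variables are $\tfrac12$-subgaussian and the bound gives exactly $\sqrt{k\log(1/\delta)/2}$.
\end{itemize}
So your one-sided version with $\log(1/\delta)$ is exactly what is needed, and nothing downstream changes.
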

\begin{lemma}[Binomial tail bound]
\label{lem:binom}
Suppose that $B\sim\text{Binomial}(k,p)$ is a Binomial random variable with $k$ samples and success probability $p$. Then, for any $b\ge kp$, we have
\begin{align*}
\mathbb{P}[B\le b]\ge1-e^{-2k(p-b/k)^2}.
\end{align*}
Alternatively, taking $b=kp+\sqrt{k\log(1/\delta)/2}$ yields
\begin{align*}
\mathbb{P}\left[B\le kp+\sqrt{\frac{k\log(1/\delta)}{2}}\right]\ge1-\delta.
\end{align*}
\end{lemma}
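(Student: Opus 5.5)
We obtain the first inequality from the standard Chernoff--Hoeffding argument applied directly to the Bernoulli summands. The second inequality then follows by substituting a particular $b$. Write $B=\sum_{i=1}^k X_i$ with $X_i\sim\text{Bernoulli}(p)$ i.i.d., and set $t=b-kp\ge0$. Since $\mathbb{P}[B\le b]=1-\mathbb{P}[B>b]\ge1-\mathbb{P}[B-kp\ge t]$, and since $2k(p-b/k)^2=2t^2/k$, it suffices to show
\begin{align*}
\mathbb{P}[B-kp\ge t]\le e^{-2t^2/k}.
\end{align*}
When $t=0$ the right-hand side is $1$, so this case is trivial and we may assume $t>0$.

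\textbf{Step 1: Markov bound on the exponential.} For any $\lambda>0$, Markov's inequality applied to $e^{\lambda(B-kp)}$, together with independence of the $X_i$, gives
\begin{align*}
\mathbb{P}[B-kp\ge t]\le e^{-\lambda t}\,\mathbb{E}\bigl[e^{\lambda(B-kp)}\bigr]=e^{-\lambda t}\prod_{i=1}^k\mathbb{E}\bigl[e^{\lambda(X_i-p)}\bigr].
\end{align*}

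\textbf{Step 2: Hoeffding's lemma for each factor.} Each $X_i-p$ is mean zero and lies in an interval of length $1$, namely $[-p,1-p]$. Hoeffding's lemma therefore gives $\mathbb{E}[e^{\lambda(X_i-p)}]\le e^{\lambda^2/8}$.

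This is the only step with any real content, and I expect it to be the main obstacle. I would prove it in the usual way. Let $\psi(\lambda)=\log\mathbb{E}[e^{\lambda(X_i-p)}]$. Then $\psi(0)=\psi'(0)=0$. Moreover, $\psi''(\lambda)$ is the variance of a $[-p,1-p]$-valued random variable under an exponentially tilted measure, so $\psi''(\lambda)\le1/4$. Taylor's theorem then yields $\psi(\lambda)\le\lambda^2/8$.

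I would avoid invoking Lemma~\ref{lem:hoeffding} as stated. Its subgaussian parametrization yields a constant weaker than the factor $2$ in the exponent, so it would not give the claimed bound.

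\textbf{Step 3: Optimize over $\lambda$.} Combining Steps 1 and 2 gives $\mathbb{P}[B-kp\ge t]\le\exp(-\lambda t+k\lambda^2/8)$ for every $\lambda>0$. Choosing $\lambda=4t/k$ makes the exponent equal to $-2t^2/k$, which establishes the first claim.

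\textbf{Step 4: The second form.} Take $b=kp+\sqrt{k\log(1/\delta)/2}$, which satisfies $b\ge kp$. Then $t=b-kp=\sqrt{k\log(1/\delta)/2}$, so $2t^2/k=\log(1/\delta)$, and the first inequality gives $\mathbb{P}[B\le b]\ge1-\delta$, as required.
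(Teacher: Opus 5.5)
Your proof is correct and takes the same Hoeffding/Chernoff route as the paper, whose entire proof is a one-line appeal to Lemma~\ref{lem:hoeffding}; your Steps 1--3 are that lemma's proof specialized to Bernoulli summands. Your reason for not citing Lemma~\ref{lem:hoeffding} is mistaken, though: under the paper's convention (an interval of length $\ell$ gives $(\ell/2)$-subgaussianity, e.g.\ ``$\tilde\iota_{x,t}$ is $(1/2)$-subgaussian'' in the proof of Lemma~\ref{lem:existencemain}), each $X_i$ is $\tfrac12$-subgaussian, so the lemma gives deviation $\sqrt{2\cdot\tfrac14\log(1/\delta)/k}=\sqrt{\log(1/\delta)/(2k)}$ for $B/k$, which is exactly the claimed constant; the only real issue is that Lemma~\ref{lem:hoeffding} is stated two-sided with $\log(1/\delta)$ where $\log(2/\delta)$ is needed, a slip your direct one-sided argument avoids.
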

\begin{proof}
This result follows from Lemma~\ref{lem:hoeffding}.
\end{proof}

\end{document}